\pdfoutput=1
\def\arxivversion{}

\documentclass[10pt]{article}

\ifdefined\arxivversion
\usepackage[preprint]{tmlr}
\else
\usepackage{tmlr}
\fi

\usepackage{amsmath}

\usepackage{url}

\usepackage[utf8]{inputenc} 
\usepackage[T1]{fontenc}    
\usepackage{booktabs}       
\usepackage{amsfonts}       
\usepackage{microtype}      
\usepackage{xcolor}         

\usepackage{graphicx}
\usepackage{bm}

\usepackage{amsthm}
\usepackage{caption}
\usepackage{tikz}
\usepackage{array}
\usepackage{flafter}
\usepackage[below]{placeins}   
\usepackage{pifont}
\newcommand{\cmark}{\textcolor{green!45!black}{\ding{51}}}
\newcommand{\xmark}{\textcolor{red!70!black}{\ding{55}}}
\makeatletter
\newcommand{\codeavailability}{\if@accepted The code, the trained policies, and a notebook that reproduces all figures and tables are available at \url{https://github.com/danialsj77/Dissimilarity-Based-MDP}.\else The code, the trained policies, and a notebook that reproduces all figures and tables are provided in the supplementary material and will be made publicly available.\fi}
\makeatother
\usetikzlibrary{arrows.meta,positioning,shapes.geometric,fit,calc,backgrounds}
\usepackage{amsmath,amssymb}
\usepackage{mathtools}
\usepackage{thmtools}
\usepackage{thm-restate}
\usepackage[hypertexnames=false]{hyperref}
\hypersetup{colorlinks=true, citecolor=blue, linkcolor=blue, urlcolor=black}
\usepackage{etoc}
\makeatletter
\def\addcontentsline#1#2#3{\addtocontents{#1}{\protect\contentsline{#2}{#3}{\thepage}{\@currentHref}}}
\makeatother
\etocsettagdepth{mainmatter}{none}
\etocsettagdepth{appendix}{subsubsection}
\etocsettocstyle{\section*{Contents of the Appendices}}{}
\makeatletter
\DeclareRobustCommand{\eqref}[1]{\begingroup\hypersetup{linkcolor=blue}\textcolor{blue}{\textup{\tagform@{\ref{#1}}}}\endgroup}
\makeatother
\newcommand{\namedref}[2]{\hyperref[#2]{#1~\ref*{#2}}}

\newtheorem{mydef}{Definition}

\newtheorem{mycoro}{Corollary}

\newtheorem{mylemma}{Lemma}

\newtheorem{myassump}{Assumption}

\title{Dissimilarity-Based Markov Decision Processes for Stabilizing State Distributions}

\ifdefined\arxivversion
\author{\name Xun Shen$^{*}$ \email shen@go.tuat.ac.jp \\
       \addr Tokyo University of Agriculture and Technology, Japan
       \AND
       \name Yuepeng Wang$^{*}$ \email wangadam38@gmail.com \\
       \addr Tokyo University of Agriculture and Technology, Japan
       \AND
       \name Daniyal Sajedi-Hosseini$^{*\dagger}$ \email danialsj77@gmail.com, s265021u@st.go.tuat.ac.jp \\
       \addr Tokyo University of Agriculture and Technology, Japan
       \AND
       \name Akifumi Wachi \email akifumi.wachi@lycorp.co.jp \\
       \addr LY Corporation, Japan
       \AND
       \name Ken Kawano \email kawano.k.4e8f@m.isct.ac.jp \\
       \addr Institute of Science Tokyo, Japan
       \AND
       \name Yoshihiko Fujisawa \email fujisawa.y.5bdc@m.isct.ac.jp \\
       \addr Institute of Science Tokyo, Japan
       \AND
       \name Mehrshad Sadria \email mehrshad.sadria@uwaterloo.ca \\
       \addr University of Waterloo, Canada
       \AND
       \name S{\'e}bastien Gros \email sebastien.gros@ntnu.no \\
       \addr Norwegian University of Science and Technology, Norway
       \AND
       \name Katsuki Fujisawa \email fujisawa.k.2110@m.isct.ac.jp \\
       \addr Institute of Science Tokyo, Japan}
\else
\author{\name Anonymous authors}
\fi

\begin{document}
\etocdepthtag.toc{mainmatter}

\maketitle
\ifdefined\arxivversion
{\renewcommand{\thefootnote}{\fnsymbol{footnote}}%
\footnotetext[1]{Equal contribution.}%
\footnotetext[2]{Corresponding author.}}
\fi

\begin{abstract}
Reinforcement learning for stochastic systems typically optimizes expected cumulative rewards, while reliable and repeatable behavior also depends on how the state distribution evolves over time.
Optimizing expected return alone does not guarantee that this distribution converges to a desired target or remains stable.
This paper develops a theoretical framework for learning policies that stabilize state distributions.
We formulate the problem as a Dissimilarity-based Markov Decision Process (DMDP), in which the policy acts on the current state distribution and the objective measures its dissimilarity to a target distribution.
We establish sufficient conditions under which the optimal policy makes the target distribution asymptotically stable.
To support policy learning, we introduce a finite-dimensional representation of distributions and show how stability guarantees for its parameter dynamics extend to the underlying state distributions.
We further develop certificates for the learned policies, including their deployment using estimated distribution parameters.
These results quantify how learning and estimation errors affect the guarantees and identify conditions under which learned policies retain asymptotic stability.
For practical implementation, we combine distribution estimates and target-dependent rewards with PPO and SAC. 
A computable example on a linear system illustrates the proposed theoretical certificates.
Furthermore, experiments on a HalfCheetah task demonstrate the benefits of distribution feedback and explicit target matching for regulating the velocity distribution.
\end{abstract}

\section{Introduction}
\label{sec:introduction}

Reinforcement learning for stochastic systems raises a stability question beyond the optimization of expected return: under what conditions does a learned policy keep the evolving state distribution close to a desired distribution, for example, a distribution concentrated on a high-reward region, and drive it toward that target?
This question matters when the desired behavior concerns the variability and concentration of states across repeated executions.
For example, a robot's mean velocity alone does not determine how often it exceeds a prescribed velocity limit.
Conventional Markov decision processes (MDPs) and reinforcement learning optimize expected cumulative rewards or costs~\citep{Puterman1994,Sutton2018}, but the resulting expected return can leave important features of individual trajectories unresolved~\citep{Baumann2025}.
Moreover, even convergence of an expected stage cost need not imply convergence of the state distribution: distinct distributions can have the same expected cost while the closed-loop distribution continues to oscillate.
A distributional stability guarantee therefore requires conditions that connect the learning objective and the feedback policy to the evolution of the distribution itself.

Several research directions address parts of this problem.
Measure-functional MDPs formulate costs and stability on the space of probability measures, while mean-field formulations incorporate distributions into the decision state.
Dissipativity and turnpike notions have been extended to random variables and probability measures~\citep{Schiessl2025TAC,Schiessl2025SICON} and yield closed-loop stability and performance guarantees for stochastic economic model predictive control~\citep{Schiessl2024CDC,Schiessl2025arXiv}.
Lyapunov-based learning control provides certificates for learned policies, and distribution steering specifies a desired terminal distribution.
Distributional reinforcement learning models the distribution of returns, while state-distribution objectives concern distributions of visited states.
These perspectives motivate a more specific theoretical objective: to connect feedback from the current state distribution with infinite-horizon distributional stability, and to characterize when this connection survives learning and estimation errors. 
\namedref{Table}{tab:related_work_comparison} summarizes the relationship to the cited approaches along six criteria.
The first three concern a stage cost that depends nonlinearly on the state distribution (distribution cost), a policy that acts on the distribution or its parameterization (distribution feedback), and a distributional stability notion with sufficient conditions (distributional stability).
The remaining criteria concern certificates for learned policies under learning, model, and estimation errors (certified learned policies), applicability to nonlinear dynamics, and an infinite-horizon objective or stability property.
Under the definitions detailed in \namedref{Appendix}{appendix:comparison_state_feedback}, the DMDP framework combines all six properties.
Its central contribution is the accompanying theory connecting distribution feedback, optimal-policy stability, and certificates for policies obtained from data.

\begin{table}[t]
    \centering
    \caption{Comparison with related approaches (\cmark: yes; (\cmark): not adopted but not excluded by the theory; \xmark: no).}
    \label{tab:related_work_comparison}
    \small
    \renewcommand{\arraystretch}{1.15}
    \setlength{\tabcolsep}{4.25pt}
    \begin{tabular}{@{}>{\raggedright\arraybackslash}p{0.33\linewidth}cccccc@{}}
        \toprule
        Approach & \shortstack{Distribution\\cost} & \shortstack{Distribution\\feedback} & \shortstack{Distributional\\stability} & \shortstack{Certified\\learned\\policies} & \shortstack{Nonlinear\\dynamics} & \shortstack{Infinite\\horizon} \\
        \midrule
        Norm-based MDPs~\citep{Puterman1994,Sutton2018} & \xmark & \xmark & \xmark & \xmark & \cmark & \cmark \\
        Lyapunov-based learning control~\citep{Perkins2002,Berkenkamp2017,Chang2019} & \xmark & \xmark & \xmark & \cmark & \cmark & \cmark \\
        Measure-functional MDPs~\citep{Gros_2022,Kordabad2022} & \cmark & (\cmark) & \cmark & \xmark & \cmark & \cmark \\
        Mean-field MDPs and control~\citep{Motte2022,Bauerle2023,Carmona2023} & \cmark & \cmark & \xmark & \xmark & \cmark & \cmark \\
        Finite-horizon distribution steering~\citep{Chen2016,Balci2021} & \cmark & \xmark & \xmark & \xmark & \xmark & \xmark \\
        Belief-state MDPs~\citep{Kaelbling1998Planning} & \xmark & \cmark & \xmark & \xmark & \cmark & \cmark \\
        Distributional RL~\citep{Bellemare2017,Bellemare2023} & \xmark & \xmark & \xmark & \xmark & \cmark & \cmark \\
        State-distribution objectives~\citep{Hazan2019,Zahavy2021} & \cmark & \xmark & \xmark & \xmark & \cmark & \cmark \\
        \midrule
        DMDP (this paper) & \cmark & \cmark & \cmark & \cmark & \cmark & \cmark \\
        \bottomrule
    \end{tabular}
\end{table}

We formulate this problem as a Dissimilarity-based MDP (DMDP).
The decision state is the current probability measure, the policy maps that measure to an action, and the stage cost can directly quantify its dissimilarity to a target distribution.
For a fixed transition kernel, the distribution evolves deterministically once the current distribution and action are specified.
This observation makes it possible to analyze distributional stabilization through deterministic dynamics on the space of probability measures.
The resulting problem nevertheless presents two difficulties: its state space is infinite-dimensional, and both the distribution and its evolution must generally be inferred from finite data.

Our theoretical development addresses these difficulties through finite-dimensional representations and Lyapunov certificates.
The state distribution $\rho_k$ at time $k$ is represented by a member $\rho_{\omega_k}$ of a parameterized family with a finite-dimensional parameter $\omega_k$.
The deterministic dynamics of $\omega_k$ are exact when the family is closed under the controlled transition and approximate otherwise.
A compatibility condition between parameter distance and distributional dissimilarity then transfers stability from the parameter dynamics to the induced distributions.
For learned policies, the key issue is whether a certified decrease survives the errors introduced by learning and deployment.
Our results express this relationship through inequalities of the form $W(\omega_{k+1})-W(\omega_k)
    \leq
    -\sigma\!\left(\|\omega_k-\omega_\star\|\right)
    +\overline{\Delta}$,
where $W$ is a Lyapunov function on the parameter space, $\omega_\star$ is the parameter of the target distribution, $\sigma$ is a class-$\mathcal K$ decrease margin, and $\overline{\Delta}$ collects the relevant error bounds.
This inequality identifies where the decrease remains certified.
Asymptotic distributional stability follows when the residual is zero or when state-dependent errors leave a strictly positive decrease margin away from the target.
With nonzero uniform residuals, the guarantee is a Lyapunov decrease outside an explicitly characterized residual set.
Our contributions are as follows:
\begin{itemize}

\item 
    \textbf{Distribution-level formulation and optimal-policy stability.}
    In \namedref{Section}{sec:problem}, we formulate distributional regulation as a DMDP, building on the dissimilarity-based stability notion of measure-functional MDPs~\citep{Gros_2022}.
    For the undiscounted infinite-horizon problem, we establish sufficient conditions linking the stage cost and optimal value to asymptotic stability of the target distribution.
    \namedref{Appendix}{appendix:recovery_nmdp} further shows that the formulation recovers successful norm-based distributional stabilization whenever the corresponding distribution dynamics are representable by an admissible DMDP policy.
\item 
    \textbf{Transfer of Lyapunov certificates to distributional stability.}
    In \namedref{Section}{sec:error_distribution}, we introduce a finite-dimensional distribution representation and a compatibility condition relating parameter distance to distributional dissimilarity.
    \namedref{Lemma}{lemma:lyapunov_distribution_stability} establishes a policy-independent certificate that transfers a Lyapunov decrease in the parameter space to asymptotic stability of the induced distributions.
    For the ideal undiscounted problem, dissipativity-based cost rotation connects this certificate to optimal control by making the rotated optimal value a Lyapunov function under the stated conditions.
\item 
    \textbf{Error-dependent certificates for learned policies.}
    In \namedref{Section}{sec:stablization_policy_learning}, we extend the analysis to learning and deployment.
    \namedref{Theorems}{theo:parameter_stability_means_distribution_stability}--\ref{theo:model_free_online_policy_learning} connect the ideal optimal-policy guarantee to certificates for policies learned offline with an approximate model, their deployment using estimated distribution parameters, and policies learned from transition data through an estimated Lyapunov decrease.
    Under the stated high-probability uniform error bounds, the results quantify how approximation and estimation errors enter the Lyapunov decrease and determine the residual sets.
    We also establish asymptotic recovery of the ideal optimizers within the chosen policy class under uniform convergence of the learned models and infinite-horizon objectives.
\item 
    \textbf{Computable illustration and empirical evaluation.}
    In \namedref{Appendix}{appendix:toy_example}, a computable example on a linear system illustrates the Lyapunov certificates and numerically evaluates their error terms and residual sets.
    In \namedref{Section}{sec:practical_implementation}, we implement distribution feedback and target-dependent rewards using sliding-window estimates with proximal policy optimization (PPO)~\citep{Schulman2017} and soft actor-critic (SAC)~\citep{Haarnoja2018}, with and without a safety-cost constraint.
    The HalfCheetah experiments in \namedref{Section}{sec:validations} provide empirical evidence that distribution feedback and explicit target matching improve regulation toward a desired velocity distribution.
    These experiments illustrate the practical approach; the Lyapunov certificates are examined in the scalar example and are not established for the learned HalfCheetah policies.

\end{itemize}

\section{Dissimilarity-Based MDPs}
\label{sec:problem}

This section introduces the Dissimilarity-based MDP and a notion of stability defined on the space of probability measures.
\namedref{Section}{sec:dmdp_formulation} defines the distribution dynamics, the dissimilarity measure, the resulting stability notion, and the associated optimal control problem; \namedref{Section}{sec:optimal_policy_stability} gives a sufficient condition under which an optimal policy renders the DMDP $\mathbb{D}$-asymptotically stable; and \namedref{Section}{sec:challenges} explains why \namedref{Problem}{eq:maintext_stability_problem} is not directly solvable in practice.
\subsection{State-Distribution Dynamics and Dissimilarity Measures}
\label{sec:dmdp_formulation}
Let $\bm{\mathrm{s}} \in \mathcal{S} \subseteq \mathbb{R}^{d_{\mathsf{s}}}$ and $\bm{\mathrm{a}} \in \mathcal{A} \subseteq \mathbb{R}^{d_{\mathsf{a}}}$ denote the system state and an admissible action, respectively.
We consider a continuous-state, continuous-action setting in which both $\mathcal{S}$ and $\mathcal{A}$ are continuous spaces.
Moreover, we assume that $\mathcal{A}$ is compact; this assumption is natural in many real-world applications in which admissible actions are restricted to a closed and bounded set.
The evolution of the controlled stochastic system is characterized by a Markov transition kernel $\tau$ on $\mathcal{S}$~\citep{HernandezLerma1996,Meyn2009}.
For any measurable set $\mathcal{B} \subseteq \mathcal{S}$, $\tau\left[\mathcal{B} \mid \bm{\mathrm{s}}_k, \bm{\mathrm{a}}_k\right]$ denotes the probability that the next state $\bm{\mathrm{s}}_{k+1}$ belongs to $\mathcal{B}$ given the current state-action pair $(\bm{\mathrm{s}}_k,\bm{\mathrm{a}}_k)$.
Given an initial distribution $\rho_0$ over $\mathcal{S}$ and a policy that selects actions from $\mathcal{A}$, the transition kernel $\tau$ induces a sequence of state distributions $\left\{\rho_k\right\}_{k=0}^{\infty}$ with $\bm{\mathrm{s}}_k \sim \rho_k$.
Let $M(\mathcal{S})$ denote the set of all probability measures defined on $\mathcal{S}$.
For a fixed action $\bm{\mathrm{a}} \in \mathcal{A}$, we define the distribution transition operator $\mathcal{T}_{\tau,\bm{\mathrm{a}}}: M(\mathcal{S}) \rightarrow M(\mathcal{S})$ as the mapping from a probability measure $\rho$ to its successor $\rho_{+}$ under the action $\bm{\mathrm{a}}$:
\begin{equation}
    \label{eq:def_state_transition_operator}
    \rho_{+}(\mathcal{B}) = \mathcal{T}_{\tau,\bm{\mathrm{a}}}\left[\rho\right](\mathcal{B}) = \int_{\mathcal{S}} \tau(\mathcal{B} \mid \bm{\mathrm{s}}, \bm{\mathrm{a}}) \, \rho(\mathsf{d}\bm{\mathrm{s}}).
\end{equation}

Conventional MDP formulations~\citep{Puterman1994,Sutton2018}, such as the norm-based MDP (NMDP) described in \namedref{Appendix}{appendix:norm_mdp}, aim to optimize the expected cumulative objective associated with state-action pairs through a policy that selects actions based on the instantaneous state.
In such formulations, the instantaneous state $\bm{\mathrm{s}}_k$ is used as the decision variable that characterizes the system at time index $k$.
However, in a stochastic system, even under the same initial distribution and input history, the realized state $\bm{\mathrm{s}}_k$ remains random, whereas the stochastic dynamics uniquely determine its distribution $\rho_k$.
Therefore, $\bm{\mathrm{s}}_k$ should be regarded as a sample drawn from $\rho_k$, rather than a complete characterization of the system's distributional behavior.
Motivated by this observation, we target the state distribution $\rho_k$ rather than an individual realization $\bm{\mathrm{s}}_k$.
For deterministic state values, a norm such as $\|\bm{\mathrm{s}}_k-\bm{\mathrm{s}}_\star\|$ is commonly used to quantify the distance from a target state.
For probability distributions, we instead introduce a dissimilarity measure $\mathbb{D}(\cdot\|\cdot)$ to quantify the discrepancy between two state distributions as follows:
\begin{mydef}[\textbf{Dissimilarity}]
    \label{def:dissimilarity_measure}
    A dissimilarity measure $\mathbb{D}(\cdot\|\cdot):M(\mathcal{S})\times M(\mathcal{S})\rightarrow\mathbb{R}$ is a mapping from a pair of probability measures to a nonnegative real number such that
    \begin{equation}
        \mathbb{D}(\rho\|\tilde{\rho})\geq 0\ \text{and}\ \mathbb{D}(\rho\|\rho)=0,\ \forall \rho,\tilde{\rho}\in M(\mathcal{S}).
    \end{equation}
\end{mydef}
\noindent
Typical choices of $\mathbb{D}(\cdot\|\cdot)$ include the Kullback--Leibler (KL) divergence~\citep{Kullback1951} and the Wasserstein distance~\citep{Villani2009}, restricted to pairs of measures for which they are finite.
From this perspective, sequential decision-making for stochastic systems can be formulated as the regulation of the dissimilarity between state distributions over time.
This formulation leads to a Dissimilarity-based Markov Decision Process (DMDP), defined by the tuple
$\mathcal{M}_{\mathsf{D}} := \langle M(\mathcal{S}), \mathcal{A}, \tau, \mathcal{L}, \gamma, \rho_0 \rangle$,
where $\mathcal{L}: M(\mathcal{S}) \times \mathcal{A} \rightarrow \mathbb{R}$ is a stage cost functional defined over the space of probability measures and the action space, and $\gamma$ is the discount factor.
A representative example of such a stage cost functional is
$\mathcal{L}[\rho_k, \bm{\mathrm{a}}_k] = \mathbb{D}(\rho_k \| \rho_\star)$,
where $\rho_\star$ denotes a target distribution.
We consider a deterministic causal policy $\psi: M(\mathcal{S}) \rightarrow \mathcal{A}$ such that $\bm{\mathrm{a}}_k = \psi(\rho_k)$.
Let $\bm{\Psi}$ denote the set of all such policies.
For a given policy $\psi$, a fixed transition kernel $\tau$, and a fixed initial distribution $\rho_0$, the sequence of probability measures $\{\rho_k\}_{k=0}^\infty$ associated with the state sequence $\{\bm{\mathrm{s}}_k\}_{k=0}^\infty$, where $\bm{\mathrm{s}}_k \sim \rho_k$, is uniquely determined through the distribution transition operator in~\eqref{eq:def_state_transition_operator}.
With the dissimilarity measure $\mathbb{D}(\cdot\|\cdot)$ in \namedref{Definition}{def:dissimilarity_measure}, the stability of the DMDP can be formalized directly on the space of probability measures as follows:
\begin{mydef}[\textbf{Dissimilarity-based stability}]
    \label{def:D_stability}
    A Dissimilarity-based Markov Decision Process is $\mathbb{D}$-stable with respect to a probability measure $\rho_\star$ and a dissimilarity measure $\mathbb{D}$ if, for any $\varepsilon>0$, there exists a $\delta(\varepsilon)>0$ such that $\mathbb{D}(\rho_0\|\rho_\star)\leq \delta$ implies $\mathbb{D}(\rho_k\|\rho_\star)\leq \varepsilon$ for all $k\geq 0$.
    If, moreover, the probability measure $\rho_\star$ is $\mathbb{D}$-attractive, i.e.,
    \begin{equation}
        \label{eq:def_D_attractive}
        \lim_{k\rightarrow\infty} \mathbb{D}(\rho_k\|\rho_\star)=0
    \end{equation}
    holds, then the Dissimilarity-based Markov Decision Process is called $\mathbb{D}$-asymptotically stable.
\end{mydef}
\noindent
Our objective is to design a distribution-dependent policy that stabilizes the stochastic system to a desired target distribution $\rho_\star$ in the sense of $\mathbb{D}$-asymptotic stability.
This objective leads to the following problem:
\begin{equation}
    \label{eq:maintext_stability_problem}
    \tag{$\mathsf{OCP}$}
\min_{\psi \in \bm{\Psi}}\ \sum_{k=0}^{\infty} \gamma^k\mathcal{L}[\rho_k, \psi(\rho_k)] \quad
    \text{s.t.}\ \rho_{k+1} = \mathcal{T}_{\tau,\bm{\mathrm{a}}_k} \rho_k,\ \bm{\mathrm{a}}_k= \psi(\rho_k).
\end{equation}

\begin{figure}[t]
    \centering
    \definecolor{figState}{HTML}{0072B2}
    \definecolor{figDist}{HTML}{D55E00}
    \definecolor{figGreen}{HTML}{009E73}
    \definecolor{figGray}{HTML}{595959}
    \begin{tikzpicture}[
        font=\footnotesize,
        >={Stealth[length=4pt,width=3.4pt]},
        box/.style={draw=figGray, rounded corners=2pt, line width=0.5pt, fill=white, align=center, inner sep=3pt, minimum height=1.35cm},
        flow/.style={->, line width=0.6pt, draw=black!80},
        panel/.style={font=\footnotesize, anchor=west, inner sep=0pt},
        sub/.style={font=\scriptsize, text=figGray},
    ]
        \begin{scope}
            \node[panel] at (0,1.25) {\textbf{(a)} norm-based MDP: the policy acts on one realized state $\bm{\mathrm{s}}_k$};
            \node[box, minimum width=3.7cm, minimum height=1.55cm] (pa) at (1.85,0) {};
            \draw[line width=0.4pt, black!70] (0.2,-0.42) -- (3.5,-0.42);
            \fill[black!12] plot[domain=0.25:3.45, samples=60, smooth] (\x, {-0.42+0.78*exp(-((\x-2.05)^2)/(2*0.5^2))}) -- (3.45,-0.42) -- (0.25,-0.42) -- cycle;
            \draw[black!45, line width=0.6pt] plot[domain=0.25:3.45, samples=60, smooth] (\x, {-0.42+0.78*exp(-((\x-2.05)^2)/(2*0.5^2))});
            \node[text=black!55, anchor=west] at (2.62,0.22) {$\rho_k$};
            \draw[black!70, line width=0.5pt] (0.95,-0.47) -- (0.95,-0.37);
            \node[anchor=north, inner sep=1pt] at (0.95,-0.47) {$\bm{\mathrm{s}}_\star$};
            \fill[figState] (2.45,-0.42) circle (2.1pt);
            \node[anchor=north, inner sep=1pt, text=figState] at (2.45,-0.47) {$\bm{\mathrm{s}}_k$};
            \node[box, minimum width=2.3cm] (pol_a) at (5.55,0) {policy\\[2pt] $\bm{\mathrm{a}}_k=\pi(\bm{\mathrm{s}}_k)$};
            \node[box, minimum width=3.6cm] (tr_a) at (9.45,0) {Markov transition kernel\\[2pt] $\bm{\mathrm{s}}_{k+1}\sim\tau(\cdot\mid\bm{\mathrm{s}}_k,\bm{\mathrm{a}}_k)$};
            \node[box, minimum width=4.1cm, fill=figState!7, draw=figState] (c_a) at (14.05,0) {expected stage cost\\[2pt] $\mathbb{E}_{\bm{\mathrm{s}}_k\sim\rho_k}\big[L(\bm{\mathrm{s}}_k,\bm{\mathrm{a}}_k)\big]$\\[2pt] {\scriptsize e.g., $L(\bm{\mathrm{s}},\bm{\mathrm{a}})=\|\bm{\mathrm{s}}-\bm{\mathrm{s}}_\star\|_2^2$}};
            \draw[flow, draw=figState] (pa.east) -- node[above, text=figState] {$\bm{\mathrm{s}}_k$} (pol_a.west);
            \draw[flow] (pol_a.east) -- node[above] {$\bm{\mathrm{a}}_k$} (tr_a.west);
            \draw[flow, draw=figState] (tr_a.south) -- ++(0,-0.3) -| node[pos=0.25, below, text=figState] {$\bm{\mathrm{s}}_{k+1}$} (pa.south);
        \end{scope}
        \begin{scope}[yshift=-3.25cm]
            \node[panel] at (0,1.25) {\textbf{(b)} DMDP: the policy acts on the state distribution $\rho_k$};
            \node[box, minimum width=3.7cm, minimum height=1.55cm] (pb) at (1.85,0) {};
            \draw[line width=0.4pt, black!70] (0.2,-0.42) -- (3.5,-0.42);
            \fill[figDist!15] plot[domain=0.75:3.45, samples=70, smooth] (\x, {-0.42+0.58*exp(-((\x-2.25)^2)/(2*0.36^2))+0.22*exp(-((\x-1.55)^2)/(2*0.3^2))}) -- (3.45,-0.42) -- (0.75,-0.42) -- cycle;
            \draw[figDist, line width=0.7pt] plot[domain=0.75:3.45, samples=70, smooth] (\x, {-0.42+0.58*exp(-((\x-2.25)^2)/(2*0.36^2))+0.22*exp(-((\x-1.55)^2)/(2*0.3^2))});
            \draw[black!80, line width=0.7pt, dash pattern=on 2.2pt off 1.4pt] plot[domain=0.2:2.0, samples=60, smooth] (\x, {-0.42+0.78*exp(-((\x-1.0)^2)/(2*0.3^2))});
            \node[text=figDist, anchor=west] at (2.62,0.22) {$\rho_k$};
            \node[anchor=east] at (0.72,0.3) {$\rho_\star$};
            \node[box, minimum width=2.3cm] (pol_b) at (5.55,0) {policy\\[2pt] $\bm{\mathrm{a}}_k=\psi(\rho_k)$};
            \node[box, minimum width=3.6cm] (tr_b) at (9.45,0) {distribution transition\\[2pt] $\rho_{k+1}=\mathcal{T}_{\tau,\bm{\mathrm{a}}_k}\rho_k$};
            \node[box, minimum width=4.1cm, fill=figDist!7, draw=figDist] (c_b) at (14.05,0) {stage cost\\[2pt] $\mathcal{L}[\rho_k,\bm{\mathrm{a}}_k]$\\[2pt] {\scriptsize e.g., $\mathcal{L}[\rho_k,\bm{\mathrm{a}}_k]=\mathbb{D}(\rho_k\|\rho_\star)$}};
            \draw[flow, draw=figDist] (pb.east) -- node[above, text=figDist] {$\rho_k$} (pol_b.west);
            \draw[flow] (pol_b.east) -- node[above] {$\bm{\mathrm{a}}_k$} (tr_b.west);
            \draw[flow, draw=figDist] (tr_b.south) -- ++(0,-0.3) -| node[pos=0.25, below, text=figDist] {$\rho_{k+1}$} (pb.south);
        \end{scope}
    \end{tikzpicture}
    \caption{Norm-based MDP (a) and DMDP (b).}
    \label{fig:dmdp_overview}
\end{figure}

\namedref{Figure}{fig:dmdp_overview} contrasts the two formulations.
In the state-feedback loop of the norm-based MDP of \namedref{Appendix}{appendix:norm_mdp}, the policy $\pi$ acts on the realized state $\bm{\mathrm{s}}_k$, which is one sample from the state distribution $\rho_k$ (gray), and the objective accumulates the expected stage cost.
In the distribution-feedback loop of the DMDP, the policy $\psi$ acts on the state distribution $\rho_k$ itself, the distribution evolves through the distribution transition operator $\mathcal{T}_{\tau,\bm{\mathrm{a}}_k}$ in~\eqref{eq:def_state_transition_operator}, and a representative stage cost is the dissimilarity $\mathbb{D}(\rho_k\|\rho_\star)$ between $\rho_k$ and a target distribution $\rho_\star$ (dashed).
The densities in the figure are schematic.

\subsection{Optimal-Policy Stability}
\label{sec:optimal_policy_stability}
For a policy $\psi\in\bm{\Psi}$ and an initial distribution $\rho_0\in M(\mathcal{S})$, let $J[\psi;\rho_0]:=\sum_{k=0}^{\infty}\gamma^k\mathcal{L}[\rho_k,\psi(\rho_k)]$ denote the objective of \namedref{Problem}{eq:maintext_stability_problem} along the closed-loop sequence $\rho_{k+1}=\mathcal{T}_{\tau,\psi(\rho_k)}\rho_k$, whenever the series is defined (possibly as $+\infty$).
Its optimal value is $V_\star[\rho_0]:=\inf_{\psi\in\bm{\Psi}}J[\psi;\rho_0]$, and a policy $\psi_\star\in\bm{\Psi}$ is optimal for $\rho_0$ if $J[\psi_\star;\rho_0]=V_\star[\rho_0]$.
The following theorem provides a sufficient condition under which $\psi_\star$ stabilizes the distribution dynamics of $\rho_k$.
\begin{restatable}{mytheo}{restateThmMainStability}
    \label{theo:main_psi_star_stability}
    Assume $\gamma=1$.
    Let $\mathcal{P}:=\{\rho\in M(\mathcal{S})\mid V_\star[\rho]<\infty\}$.
    Suppose that $\psi_\star$ attains $V_\star[\rho]$ for every initial distribution $\rho\in\mathcal{P}$, that $\mathcal{P}$ is forward invariant under the closed-loop dynamics induced by $\psi_\star$, that $\rho_\star=\mathcal{T}_{\tau,\psi_\star(\rho_\star)}\rho_\star$, and that there exist class-$\mathcal{K}$ functions~\citep{Khalil2002,Kellett2014} $\kappa_1$ and $\kappa_2$ such that
    \begin{equation}
        \mathcal{L}\left[\rho_k,\psi_\star(\rho_k)\right]\geq \kappa_1\left(\mathbb{D}(\rho_k\|\rho_\star)\right),\
        V_\star\left[\rho_k\right]\leq\kappa_2\left(\mathbb{D}(\rho_k\|\rho_\star)\right),\ \forall \rho_k\in\mathcal{P}\ \text{generated}\ \text{by}\ \psi_\star.
    \end{equation}
    Then, the Dissimilarity-based Markov Decision Process under the optimal policy $\psi_\star$ is $\mathbb{D}$-asymptotically stable with respect to the target probability measure $\rho_\star$ and the dissimilarity measure $\mathbb{D}$ for any initial distribution $\rho_0\in\mathcal{P}$.
\end{restatable}
\noindent
The proof of \namedref{Theorem}{theo:main_psi_star_stability} is provided in \namedref{Appendix}{app:proof_main_psi_star_stability}.
The forward-invariance condition on $\mathcal{P}$ is a mild well-posedness requirement rather than a restrictive structural assumption.
For example, this condition holds when $\mathcal{P}$ is chosen as a continuous admissible region, such as a compact subset of $M(\mathcal{S})$, on which the optimal closed-loop transition $\rho\mapsto\mathcal{T}_{\tau,\psi_\star(\rho)}\rho$ is well defined and positively invariant, i.e., $\mathcal{T}_{\tau,\psi_\star(\rho)}\rho\in\mathcal{P}$ for all $\rho\in\mathcal{P}$.
Such compact invariant regions commonly arise from bounded admissible distribution families or from bounded parameter domains in the parameterized formulation introduced in \namedref{Section}{sec:parameterized_distributions}.
\subsection{Challenges in Solving DMDPs}
\label{sec:challenges}
Despite its conceptual appeal, \namedref{Problem}{eq:maintext_stability_problem} is not directly solvable in practice.
The value function $V_\star[\rho_0]$ is defined over the infinite-dimensional space of probability measures $M(\mathcal{S})$, which makes direct computation intractable.
In addition, the formulation requires access to the evolving state distributions $\{\rho_k\}_{k=0}^\infty$ and the distribution transition operator $\mathcal{T}_{\tau,\bm{\mathrm{a}}_k}$, which are generally unknown in practical applications.
Instead, only finite trajectory data are available, and both the state distributions and their distributional dynamics must be inferred from data.
\namedref{Sections}{sec:error_distribution} and~\ref{sec:stablization_policy_learning} address these two challenges in turn.

\begin{figure}[t]
    \centering
    \definecolor{figState}{HTML}{0072B2}
    \definecolor{figDist}{HTML}{D55E00}
    \definecolor{figGreen}{HTML}{009E73}
    \definecolor{figGray}{HTML}{595959}
    \begin{tikzpicture}[
        font=\footnotesize,
        >={Stealth[length=4pt,width=3.4pt]},
        base/.style={rounded corners=2pt, align=center, inner sep=3pt, minimum height=0.95cm, minimum width=4.8cm},
        problem/.style={base, line width=0.5pt, draw=figGray, fill=white},
        result/.style={base, line width=0.9pt, draw=figDist, fill=figDist!9},
        smallresult/.style={result, minimum width=3.0cm, minimum height=1.6cm},
        dep/.style={->, line width=0.6pt, draw=black!75},
        frame/.style={draw=figGray, line width=0.4pt, rounded corners=3pt, dash pattern=on 1.5pt off 1.5pt, inner sep=4pt},
        framelabel/.style={font=\scriptsize, text=figGray, fill=black!4, inner sep=1.5pt},
        band/.style={fill=black!4, draw=none},
        bandlabel/.style={font=\scriptsize, text=figGray, rotate=90, anchor=south, inner sep=1pt},
    ]
        \node[problem] (ocp) at (3.1,0) {\textbf{Problem~$\mathsf{OCP}$}\\ DMDP on $M(\mathcal{S})$ with $\bm{\mathrm{a}}_k=\psi(\rho_k)$};
        \node[result] (t1) at (8.5,0) {\textbf{Theorem~\ref*{theo:main_psi_star_stability}}\\ optimal $\psi_\star$ is $\mathbb{D}$-asymptotically stable};
        \node[problem] (pocp) at (3.1,-1.65) {\textbf{Assumption~\ref*{assump:parametric_rho}, Problem~$\mathsf{P\text{-}OCP}$}\\ $\rho_\omega$, \ $\omega_{k+1}=f(\omega_k,\bm{\mathrm{a}}_k)$, \ $\bm{\mathrm{a}}_k=\phi(\omega_k)$};
        \node[problem] (a2) at (13.9,-1.65) {\textbf{Assumption~\ref*{assump:D_compatible_parameterization}}\\ $\mathbb{D}$-compatible parameterization};
        \node[problem] (rot) at (3.1,-3.0) {\textbf{Cost rotation} (Section~\ref*{sec:cost_rotation})\\ $\ell^{\mathsf{R}}$, \ $U_\star^{\mathsf{R}}$, \ rotated Bellman identity};
        \node[result] (t2) at (8.5,-3.0) {\textbf{Theorem~\ref*{theo:parameter_stability_means_distribution_stability}}\\ optimal $\phi_\star$ is $\mathbb{D}$-asymptotically stable};
        \node[result] (l1) at (13.9,-3.0) {\textbf{Lemma~\ref*{lemma:lyapunov_distribution_stability}}\\ Lyapunov certificate transfer\\ for any admissible policy $\phi$};
        \node[smallresult] (t3) at (2.3,-5.0) {\textbf{Theorem~\ref*{theo:stability_finite_samples}}\\ finite-sample certificate\\ for $\hat{\phi}_{\mathsf{off}}$};
        \node[smallresult] (t4) at (5.85,-5.0) {\textbf{Theorem~\ref*{theo:online_plugin_stability}}\\ online deployment\\ with estimates $\hat{\omega}_t$};
        \node[smallresult] (p1) at (9.4,-5.0) {\textbf{Proposition~\ref*{prop:asymptotic_recovery_rotated_optimizers}}\\ asymptotic recovery\\ as $N\rightarrow\infty$};
        \node[result, minimum height=1.6cm] (t5) at (13.9,-5.0) {\textbf{Theorem~\ref*{theo:model_free_online_policy_learning}}\\ certificate from an estimated\\ Lyapunov decrease for $\phi_n$\\ with the surrogate $W_n$};
        \coordinate (mbpad) at ([yshift=-0.17cm]t4.south);
        \coordinate (mfpad) at ([yshift=-0.17cm]t5.south);
        \node[frame, fit=(t3)(t4)(p1)(mbpad)] (mb) {};
        \node[frame, fit=(t5)(mfpad)] (mf) {};
        \node[framelabel] at (mb.south -| t4) {approximate dynamics model (Sections~\ref*{sec:approximate_dynamics_model},~\ref*{sec:deployment_estimated_distribution}, and~\ref*{sec:asymptotic_recovery})};
        \node[framelabel] at (mf.south) {estimated decrease (Section~\ref*{sec:certification_estimated_lyapunov})};
        \begin{scope}[on background layer]
            \fill[band] (0.55,0.62) rectangle (16.45,-0.62);
            \fill[band] (0.55,-1.05) rectangle (16.45,-3.72);
            \fill[band] (0.55,-4.0) rectangle (16.45,-6.4);
        \end{scope}
        \node[bandlabel] at (0.5,0) {Section~\ref*{sec:problem}};
        \node[bandlabel] at (0.5,-2.385) {Section~\ref*{sec:error_distribution}};
        \node[bandlabel] at (0.5,-5.16) {Section~\ref*{sec:stablization_policy_learning}};
        \draw[dep] (ocp) -- (t1);
        \draw[dep] (ocp) -- (pocp);
        \draw[dep] (pocp) -- (rot);
        \draw[dep] (pocp) -- (a2);
        \draw[dep] (a2) -- (l1);
        \draw[dep] (rot) -- (t2);
        \draw[dep] (l1) -- (t2);
        \draw[dep] (rot.south) -- (rot.south |- mb.north);
        \draw[dep] (l1.south) -- (l1.south |- mf.north);
        \coordinate (junction) at ([yshift=-0.3cm]l1.south);
        \draw[dep] (junction) -| ([xshift=-0.55cm]mb.north east);
        \fill[black!75] (junction) circle (1.3pt);
        \draw[dep] (t3) -- (t4);
    \end{tikzpicture}
    \caption{Roadmap of the theoretical development.}
    \label{fig:roadmap}
\end{figure}

\section{Finite-Dimensional Representation and Stability Analysis}
\label{sec:error_distribution}
Although the DMDP in \namedref{Problem}{eq:maintext_stability_problem} provides a distribution-level stability notion for stochastic systems, it remains infinite-dimensional because the state $\rho_k$ belongs to the space of probability measures $M(\mathcal{S})$.
We therefore introduce a finite-dimensional parameterization of the state distributions.
\namedref{Section}{sec:parameterized_distributions} introduces this parameterization and the resulting finite-dimensional optimal control problem, \namedref{Section}{sec:cost_rotation} introduces a cost rotation for the undiscounted problem, and \namedref{Section}{sec:lyapunov_certificates} establishes Lyapunov certificates that transfer stability from the parameter space to the space of probability measures.
\namedref{Figure}{fig:roadmap} summarizes the theoretical development of \namedref{Sections}{sec:problem}--\ref{sec:stablization_policy_learning} and the dependencies among the assumptions and results.
In the figure, white boxes denote problems, assumptions, and constructions, and orange boxes with thick borders denote results; each result holds under the hypotheses stated in it.
An arrow from one box to another indicates that the latter is formulated in terms of the former or that its proof uses the former, and an arrow into a dashed frame applies to every result in that frame.
\namedref{Lemma}{lemma:lyapunov_distribution_stability} transfers a Lyapunov decrease in the parameter space to $\mathbb{D}$-asymptotic stability of the induced distributions; it, or the Lyapunov argument in its proof, is used by every result in \namedref{Section}{sec:stablization_policy_learning}, with $W=U_\star^{\mathsf{R}}$ in \namedref{Theorems}{theo:stability_finite_samples} and~\ref{theo:online_plugin_stability} and with $W=W_n$ in \namedref{Theorem}{theo:model_free_online_policy_learning}.

\subsection{Parameterized Distributions}
\label{sec:parameterized_distributions}
The parameterization rests on the following assumption.
\begin{myassump}
    \label{assump:parametric_rho}
    Each probability measure $\rho\in M(\mathcal{S})$ considered in the analysis admits a density representation parameterized by a finite-dimensional vector $\omega\in\Omega\subseteq\mathbb{R}^{n_\omega}$.
    We write the corresponding measure as $\rho_\omega$.
\end{myassump}

\namedref{Assumption}{assump:parametric_rho} allows the distributional state $\rho_k$ to be represented by a parameter vector $\omega_k$ estimated from finite data.
This parameterization may be obtained, for example, by kernel density estimation~\citep{Parzen1962,Silverman1986}, Gaussian mixture models~\citep{McLachlan2000}, normalizing flows~\citep{Rezende2015,Papamakarios2021}, or neural density estimators~\citep{Papamakarios2017}.
Thus, the family need not be a simple parametric one, and the assumption serves as a computational interface between distribution estimation and policy learning.
Under this parameterization, the distributional transition
$\rho_{k+1}=\mathcal{T}_{\tau,\bm{\mathrm{a}}_k}\rho_k$
is represented by the parameter dynamics
\begin{equation}
    \label{eq:parameter_dynamics}
    \omega_{k+1}=f(\omega_k,\bm{\mathrm{a}}_k),
\end{equation}
where $f:\Omega\times\mathcal{A}\rightarrow\Omega$ is an unknown transition map induced by the underlying stochastic dynamics and the chosen density parameterization.
Although the sample-level system dynamics are stochastic, the evolution of the distribution parameter $\omega_k$ is deterministic once the current distribution parameter and the action are fixed.
The representation~\eqref{eq:parameter_dynamics} is exact when the parameterized family is closed under the controlled transition, i.e., $\mathcal{T}_{\tau,\bm{\mathrm{a}}}\rho_\omega=\rho_{f(\omega,\bm{\mathrm{a}})}$ for all $\omega\in\Omega$ and $\bm{\mathrm{a}}\in\mathcal{A}$; otherwise, $f$ describes the evolution of the fitted parameter only approximately.
Closure holds in special cases, such as linear dynamics with additive Gaussian noise and a Gaussian family in \namedref{Appendix}{appendix:toy_example}, but rarely in general, e.g., not for the product of uniform families under the HalfCheetah dynamics in \namedref{Appendix}{appendix:implementation}.
\namedref{Assumption}{assump:parametric_rho} and the exact representation are thus idealizations that provide a tractable finite-dimensional framework and hold only approximately in practice; \namedref{Section}{sec:stablization_policy_learning} retains them and bounds the errors of the learned parameter dynamics, value functions, and estimated Lyapunov decrease, as well as of the estimated distribution parameters.
We then introduce a parameter-based policy $\phi:\Omega\rightarrow\mathcal{A}$ such that $\bm{\mathrm{a}}_k=\phi(\omega_k)$, and let $\Phi$ denote the set of admissible policies.
The distributional stage cost functional $\mathcal{L}:M(\mathcal{S})\times\mathcal{A}\rightarrow\mathbb{R}$ is equivalently represented in the parameter space by $\ell(\omega,\bm{\mathrm{a}})
    :=
    \mathcal{L}[\rho_\omega,\bm{\mathrm{a}}].$
When $\mathcal{L}[\rho_k,\bm{\mathrm{a}}_k]=\mathbb{D}(\rho_k\|\rho_\star)$, we have
$\ell(\omega_k,\bm{\mathrm{a}}_k)=\mathbb{D}(\rho_{\omega_k}\|\rho_\star)$.

This parameterization yields the following finite-dimensional optimal control problem:
\begin{equation}
\label{eq:parameter_based}
\tag{$\mathsf{P\text{-}OCP}$}
    \min_{\phi \in \Phi} \ \sum_{k=0}^{\infty} \gamma^k\left[ \ell(\omega_k,\phi(\omega_k))-\ell_0 \right] \quad
    \text{s.t.} \quad \omega_{k+1}=f(\omega_k,\phi(\omega_k)),\quad \omega_0=\omega,
\end{equation}
where $U_\star(\omega)$ denotes the optimal value of \namedref{Problem}{eq:parameter_based}.
The constant $\ell_0$ is defined by
\begin{equation}
\label{eq:parameter_based_optimal_steady}
    \ell_0
    :=
    \min_{\phi \in \Phi,\ \omega \in \Omega} \ \ell(\omega,\phi(\omega))  \quad
    \text{s.t.} \quad \omega=f(\omega,\phi(\omega)).
\end{equation}
The optimal steady-state parameter is denoted by $\omega_\star$, and we assume that it exists and that its associated probability measure is $\rho_{\omega_\star}=\rho_\star$.
Because the state distribution itself is the controlled object, the steady-state problem in~\eqref{eq:parameter_based_optimal_steady} is posed in the parameterized space.
This problem identifies a stationary parameter $\omega_\star$ satisfying $\omega_\star=f(\omega_\star,\phi(\omega_\star))$, whose associated probability measure $\rho_{\omega_\star}$ represents the target distribution $\rho_\star$.
Thus, $\omega_\star$ is not an additional target beyond $\rho_\star$; it is a finite-dimensional representation of the invariant target distribution and is used to normalize the cost through $\ell_0$ and to formulate stabilization in the parameter space.
Solving \namedref{Problem}{eq:parameter_based} yields an optimal parameter-based policy $\phi_\star\in\Phi$.
\subsection{Cost Rotation}
\label{sec:cost_rotation}
Since the parameter dynamics in~\eqref{eq:parameter_dynamics} are deterministic, the stability of the parameter trajectory can be analyzed through a dissipativity-based cost rotation~\citep{Diehl2011,Angeli2012,Faulwasser2018}.
For the undiscounted case $\gamma=1$, let $\lambda:\Omega\rightarrow\mathbb{R}$ be a storage function~\citep{Willems1972} and define the rotated stage cost by
\begin{equation}
    \label{eq:rotated_stage_cost_parameter}
    \ell^{\mathsf{R}}(\omega,\bm{\mathrm{a}})
    :=
    \ell(\omega,\bm{\mathrm{a}})
    -
    \ell_0
    -
    \lambda(f(\omega,\bm{\mathrm{a}}))
    +
    \lambda(\omega).
\end{equation}
When the storage function $\lambda$ is bounded along admissible trajectories in $\Omega$ and the terminal storage term $\lambda(\omega_H)$ is kept in the rotated objective, as in~\eqref{eq:rotated_telescope_parameter} below, the rotated and unrotated undiscounted objectives have the same optimal policies.
For any policy $\phi$ and any finite horizon $H$, the rotated cost satisfies
\begin{equation}
    \label{eq:rotated_telescope_parameter}
    \lambda(\omega_H)
    +
    \sum_{k=0}^{H-1}
    \ell^{\mathsf{R}}\left(\omega_k,\phi(\omega_k)\right)
    =
    \lambda(\omega_0)
    +
    \sum_{k=0}^{H-1}
    \left[
    \ell\left(\omega_k,\phi(\omega_k)\right)-\ell_0
    \right]\quad\text{(telescoping identity)},
\end{equation}
where $\omega_{k+1}=f(\omega_k,\phi(\omega_k))$.
Therefore, in the undiscounted setting, when the storage function is bounded along admissible trajectories and the infinite-horizon limits are well defined, the rotation preserves the set of optimal policies because it changes the objective only by the initial storage term.
We define the rotated optimal value by $U_\star^{\mathsf{R}}(\omega)
    :=
    \min_{\phi\in\Phi}
    \lim_{H\rightarrow\infty}
    \left[
    \lambda(\omega_H)
    +
    \sum_{k=0}^{H-1}
    \ell^{\mathsf{R}}\left(\omega_k,\phi(\omega_k)\right)
    \right],$
subject to $\omega_{k+1}=f(\omega_k,\phi(\omega_k))$ and $\omega_0=\omega$.
Under the standard dynamic programming principle~\citep{Puterman1994,HernandezLerma1996} for the rotated infinite-horizon problem, if an optimal stationary selector $\phi_\star$ exists and $U_\star^{\mathsf{R}}$ is finite on an operating set $\Omega_{\mathsf{c}}$, then the rotated optimal value satisfies the Bellman identity
\begin{equation}
    \label{eq:rotated_bellman_identity_parameter}
    U_\star^{\mathsf{R}}(\omega)
    =
    \ell^{\mathsf{R}}\left(\omega,\phi_\star(\omega)\right)
    +
    U_\star^{\mathsf{R}}\left(f(\omega,\phi_\star(\omega))\right),
    \quad
    \forall \omega\in\Omega_{\mathsf{c}}.
\end{equation}
This identity is the rotated analogue of the usual Bellman equation.
\subsection{Lyapunov Certificates for Distributional Stability}
\label{sec:lyapunov_certificates}
The Lyapunov certificates in this subsection rely on the following compatibility condition between the parameterization and the dissimilarity measure.
\begin{myassump}[$\mathbb{D}$-compatible parameterization]
    \label{assump:D_compatible_parameterization}
    Let $\Omega_{\mathsf{c}}\subseteq\Omega$ be an operating set containing $\omega_\star$.
    The parameterization $\omega\mapsto\rho_\omega$ is said to be $\mathbb{D}$-compatible on $\Omega_{\mathsf{c}}$ with respect to $\omega_\star$ if there exist class-$\mathcal{K}$ functions $\alpha_1$ and $\alpha_2$ such that
    \begin{equation}
        \label{eq:D_compatible_parameterization}
        \alpha_1\left(\|\omega-\omega_\star\|\right)
        \leq
        \mathbb{D}(\rho_\omega\|\rho_{\omega_\star})
        \leq
        \alpha_2\left(\|\omega-\omega_\star\|\right),
        \quad
        \forall \omega\in\Omega_{\mathsf{c}}.
    \end{equation}
\end{myassump}

\namedref{Assumption}{assump:D_compatible_parameterization} requires the parameter distance $\|\omega-\omega_\star\|$ and the distributional dissimilarity $\mathbb{D}(\rho_\omega\|\rho_{\omega_\star})$ to be locally comparable on the operating set under consideration.
This condition is not a restriction on the dynamics; rather, it ensures that convergence in the finite-dimensional parameter space implies convergence of the corresponding state distribution in the chosen dissimilarity measure.
We first state a policy-agnostic Lyapunov certificate~\citep[cf.][]{Khalil2002,Jiang2002}.
This result is independent of optimality and is used later to certify stability not only for the ideal optimal policy $\phi_\star$ but also for learned policies.

\begin{restatable}{mylemma}{restateLemLyapunovTransfer}
    \label{lemma:lyapunov_distribution_stability}
    Let $\phi:\Omega\rightarrow\mathcal{A}$ be any admissible policy and define the closed-loop parameter map $F_\phi(\omega):=f(\omega,\phi(\omega)).$
    Let $\Omega_{\mathsf{c}}\subseteq\Omega$ be positively invariant under $F_\phi$, and suppose that $F_\phi(\omega_\star)=\omega_\star$.
    Assume that there exist a continuous function $W:\Omega_{\mathsf{c}}\rightarrow\mathbb{R}_{\geq 0}$ and class-$\mathcal{K}$ functions $\underline{\alpha}$, $\overline{\alpha}$, and $\sigma$ such that
    \begin{equation}
        \label{eq:generic_lyapunov_bounds_parameter}
        \underline{\alpha}\left(\|\omega-\omega_\star\|\right)
        \leq
        W(\omega)
        \leq
        \overline{\alpha}\left(\|\omega-\omega_\star\|\right),
        \quad
        \forall \omega\in\Omega_{\mathsf{c}},
    \end{equation}
    \begin{equation}
        \label{eq:generic_lyapunov_decrease_parameter}
        W(F_\phi(\omega))-W(\omega)
        \leq
        -\sigma\left(\|\omega-\omega_\star\|\right),
        \quad
        \forall \omega\in\Omega_{\mathsf{c}}.
    \end{equation}
    Suppose further that the parameterization is $\mathbb{D}$-compatible on $\Omega_{\mathsf{c}}$ in the sense of \namedref{Assumption}{assump:D_compatible_parameterization}.
    Then, the induced distribution dynamics generated by
    $\omega_{k+1}=F_\phi(\omega_k),\ \rho_k=\rho_{\omega_k}$
    are $\mathbb{D}$-asymptotically stable with respect to $\rho_\star=\rho_{\omega_\star}$ on $\{\rho_\omega\mid \omega\in\Omega_{\mathsf{c}}\}$.
\end{restatable}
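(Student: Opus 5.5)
The plan is to run the standard discrete-time Lyapunov argument on the parameter sequence $\omega_{k+1}=F_\phi(\omega_k)$, which stays in $\Omega_{\mathsf{c}}$ by positive invariance. The conclusions are then converted into statements about $\mathbb{D}(\rho_{\omega_k}\|\rho_\star)$ through the sandwich bounds of \namedref{Assumption}{assump:D_compatible_parameterization}. The key observation is that compatibility lets one pass between $\mathbb{D}$ and $\|\omega-\omega_\star\|$ in both directions via class-$\mathcal{K}$ functions. Combining these with~\eqref{eq:generic_lyapunov_bounds_parameter} gives, for $\omega\in\Omega_{\mathsf{c}}$,
\begin{equation*}
W(\omega)\leq \overline{\alpha}\left(\alpha_1^{-1}\left(\mathbb{D}(\rho_\omega\|\rho_\star)\right)\right),
\qquad
\mathbb{D}(\rho_\omega\|\rho_\star)\leq \alpha_2\left(\underline{\alpha}^{-1}\left(W(\omega)\right)\right).
\end{equation*}
Here $\alpha_1^{-1}$ and $\underline{\alpha}^{-1}$ are understood on the ranges of $\alpha_1$ and $\underline{\alpha}$. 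The arguments lie in those ranges automatically, because the quantities themselves are bounded below by $\alpha_1(\cdot)$ and $\underline{\alpha}(\cdot)$.

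\textbf{Stability.} The decrease condition~\eqref{eq:generic_lyapunov_decrease_parameter} gives $W(\omega_{k+1})\leq W(\omega_k)$, so $W(\omega_k)\leq W(\omega_0)$ for all $k\geq 0$. Given $\varepsilon>0$, choose $\delta>0$ small enough that $\alpha_2\circ\underline{\alpha}^{-1}\circ\overline{\alpha}\circ\alpha_1^{-1}(\delta)\leq\varepsilon$. This is possible because the composition is continuous, increasing, and vanishes at $0$. Then $\mathbb{D}(\rho_0\|\rho_\star)\leq\delta$ implies $\mathbb{D}(\rho_k\|\rho_\star)\leq\varepsilon$ for all $k$.

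\textbf{Attractivity.} Summing the decrease inequality from $0$ to $K-1$ and using $W\geq 0$ gives
\begin{equation*}
\sum_{k=0}^{K-1}\sigma\left(\|\omega_k-\omega_\star\|\right)\leq W(\omega_0)<\infty .
\end{equation*}
Hence $\sigma(\|\omega_k-\omega_\star\|)\to 0$. Because $\sigma$ is class-$\mathcal{K}$, this forces $\|\omega_k-\omega_\star\|\to 0$. If it did not, a subsequence would satisfy $\|\omega_k-\omega_\star\|\geq c>0$, and then $\sigma(\|\omega_k-\omega_\star\|)\geq\sigma(c)>0$ along it, a contradiction. The upper compatibility bound then yields $\mathbb{D}(\rho_{\omega_k}\|\rho_\star)\leq\alpha_2(\|\omega_k-\omega_\star\|)\to 0$, which is~\eqref{eq:def_D_attractive}. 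The fixed-point hypothesis $F_\phi(\omega_\star)=\omega_\star$ makes the target invariant, consistent with $W(\omega_\star)=0$.

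\textbf{Main obstacle.} The delicate point is that the class-$\mathcal{K}$ functions need not be unbounded (they are not class-$\mathcal{K}_\infty$), so the inverses $\alpha_1^{-1}$ and $\underline{\alpha}^{-1}$ are only defined on bounded ranges. I would handle this by restricting $\delta$ to lie below $\sup\alpha_1$ and below the level needed to keep $\overline{\alpha}(\alpha_1^{-1}(\delta))$ inside the range of $\underline{\alpha}$. This matters only for the local stability estimate; attractivity needs no inversion. A second bookkeeping point is that stability in \namedref{Definition}{def:D_stability} is phrased in $\mathbb{D}$ rather than in parameters. It is the lower compatibility bound $\alpha_1$ that converts the smallness of $\mathbb{D}(\rho_0\|\rho_\star)$ into smallness of $\|\omega_0-\omega_\star\|$.
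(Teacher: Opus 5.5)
Your argument is correct and is essentially the paper's: monotonicity of $W$ and the summed decrease give stability and attractivity of $\omega_k$, which the two compatibility bounds then transfer to $\mathbb{D}(\rho_{\omega_k}\|\rho_\star)$. The only difference is that the paper avoids inverses by choosing $\overline{\alpha}(\delta_\omega)\leq\underline{\alpha}(\varepsilon_\omega)$ and $\delta_{\mathbb{D}}:=\alpha_1(\delta_\omega)$, and one correction to your write-up: the first-paragraph claim that the arguments lie in the ranges of $\alpha_1$ and $\underline{\alpha}$ ``automatically'' is false, because being bounded below by $\alpha_1(r)$ does not keep $\mathbb{D}$ below $\sup\alpha_1$, so your displayed inverse bounds hold only after the restriction on $\delta$ that you impose in your last paragraph.
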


The next theorem recovers the optimal-policy stability guarantee from the rotated-cost dissipativity condition.
It shows that, for the ideal parameterized optimal control problem, the rotated optimal value $U_\star^{\mathsf{R}}$ can serve as the Lyapunov function in \namedref{Lemma}{lemma:lyapunov_distribution_stability}.

\begin{restatable}{mytheo}{restateThmParameterStability}
    \label{theo:parameter_stability_means_distribution_stability}
    Assume $\gamma=1$.
    Let $\phi_\star$ be an optimal policy of \namedref{Problem}{eq:parameter_based}, and define $F_{\phi_\star}(\omega)
        :=
        f(\omega,\phi_\star(\omega)).$
    Let $\Omega_{\mathcal{P}}\subseteq\Omega$ be positively invariant under $F_{\phi_\star}$, and suppose that $F_{\phi_\star}(\omega_\star)=\omega_\star$.
    Suppose that $U_\star^{\mathsf{R}}$ is finite, nonnegative, and continuous on $\Omega_{\mathcal{P}}$ and satisfies the rotated Bellman identity~\eqref{eq:rotated_bellman_identity_parameter} for all $\omega\in\Omega_{\mathcal{P}}$.
    Suppose further that there exist class-$\mathcal{K}$ functions $\kappa_1$ and $\kappa_2$ such that
    \begin{equation}
        \label{eq:rotated_dissipativity_parameter}
        \ell^{\mathsf{R}}\left(\omega,\phi_\star(\omega)\right)
        \geq
        \kappa_1\left(\|\omega-\omega_\star\|\right),
        \quad
        \forall \omega\in\Omega_{\mathcal{P}},
    \end{equation}
    \begin{equation}
        \label{eq:rotated_value_upper_bound_parameter}
        U_\star^{\mathsf{R}}(\omega)
        \leq
        \kappa_2\left(\|\omega-\omega_\star\|\right),
        \quad
        \forall \omega\in\Omega_{\mathcal{P}}.
    \end{equation}
    Suppose that the parameterization is $\mathbb{D}$-compatible on $\Omega_{\mathcal{P}}$ in the sense of \namedref{Assumption}{assump:D_compatible_parameterization}.
    Then, the optimal policy $\phi_\star$ renders the induced distribution dynamics $\mathbb{D}$-asymptotically stable with respect to $\rho_\star=\rho_{\omega_\star}$ on $\{\rho_\omega\mid \omega\in\Omega_{\mathcal{P}}\}$.
\end{restatable}
\noindent
The proofs of \namedref{Lemma}{lemma:lyapunov_distribution_stability} and \namedref{Theorem}{theo:parameter_stability_means_distribution_stability} are provided in \namedref{Appendices}{app:proof_lyapunov_distribution_stability} and~\ref{app:proof_parameter_stability_means_distribution_stability}, respectively.
For the representative stage cost $\ell(\omega,\bm{\mathrm{a}})=\mathbb{D}(\rho_\omega\|\rho_\star)$, which is nonnegative and vanishes at $\omega_\star$, it follows that $\ell_0=0$; hence, the storage function $\lambda=0$ gives $\ell^{\mathsf{R}}=\ell\geq\alpha_1(\|\omega-\omega_\star\|)$ on $\Omega_{\mathcal{P}}$ by \namedref{Assumption}{assump:D_compatible_parameterization}, i.e.,~\eqref{eq:rotated_dissipativity_parameter} with $\kappa_1=\alpha_1$; other stage costs may require a nonzero storage function.

\noindent
\textbf{Intuition behind \namedref{Lemma}{lemma:lyapunov_distribution_stability} and \namedref{Theorem}{theo:parameter_stability_means_distribution_stability}.}
The key insight is that distributional stabilization does not require directly solving a problem over the infinite-dimensional space of measures.
Instead, by parameterizing each distribution $\rho_k$ as $\rho_{\omega_k}$ and analyzing the deterministic parameter dynamics $\omega_{k+1}=f(\omega_k,\phi(\omega_k))$, we can certify stability through a Lyapunov decrease in the finite-dimensional parameter space.

\section{Stability Certification for Learned Policies}
\label{sec:stablization_policy_learning}

The results of the preceding section concern policies and value functions for the exact parameter dynamics.
Here, we study how approximation and estimation errors affect their Lyapunov certificates.
\namedref{Section}{sec:approximate_dynamics_model} transfers a decrease condition computed with an approximate dynamics model to the true dynamics.
\namedref{Section}{sec:deployment_estimated_distribution} then accounts for deployment with estimated distribution parameters.
\namedref{Section}{sec:certification_estimated_lyapunov} gives a complementary certificate based on an estimated Lyapunov decrease.
These results concern the information available for certification and do not prescribe a particular policy-learning algorithm or data-collection schedule.
\namedref{Section}{sec:asymptotic_recovery} studies the corresponding limiting behavior, and \namedref{Section}{sec:practical_implementation} describes the practical implementation settings. 
We use the policy class $\Phi_\Theta
    :=
    \left\{
    \phi(\cdot;\theta):\Omega\rightarrow\mathcal A
    \mid \theta\in\Theta\subseteq\mathbb R^{n_\theta}
    \right\}
    \subseteq\Phi,$
where $\phi(\omega;\theta)$ is continuous in $(\omega,\theta)$.
We omit $\theta$ when its role is clear.
The certificates below apply to a fixed learned policy; stability during policy updates and exploration requires additional analysis.

\subsection{Certification with an Approximate Dynamics Model}
\label{sec:approximate_dynamics_model}

Suppose that an approximate parameter transition map $\hat f_{\mathsf{off}}$ is available for the dynamics~\eqref{eq:parameter_dynamics}.
One way to obtain it is to use $M$ independent trajectories generated from an initial distribution under an action sequence:
\begin{equation}
    \label{eq:data_time_series}
    \mathcal D_{\mathsf{off}}
    :=\left\{\mathcal D_{\mathsf{off}}^{(i)}\right\}_{i=1}^{M},
    \qquad
    \mathcal D_{\mathsf{off}}^{(i)}
    :=
    \left\{
    \left(
    \bm{\mathrm s}_{k}^{(i)},
    \bm{\mathrm a}_{k},
    \bm{\mathrm s}_{k+1}^{(i)}
    \right)
    \right\}_{k=0}^{T-1}.
\end{equation}
At each $k$, the samples $\{\bm{\mathrm s}_{k}^{(i)}\}_{i=1}^{M}$ share the distribution $\rho_k$ and yield an estimate $\hat\omega_k$.
The resulting tuples $(\hat\omega_k,\bm{\mathrm a}_k,\hat\omega_{k+1})$ are used to learn $\hat f_{\mathsf{off}}$.
Here, $N=TM$ is the number of state-transition samples.
The density models discussed after
\namedref{Assumption}{assump:parametric_rho} provide possible estimators.
This construction is one source of the approximate model; the certificate below depends on its approximation errors.

\paragraph{Candidate policy and value function.}
Define the approximate rotated stage cost $\widehat\ell_{\mathsf{off}}^{\mathsf R}
    (\omega,\bm{\mathrm a})
    :=
    \ell(\omega,\bm{\mathrm a})-\ell_0
    -\lambda\!\left(\hat f_{\mathsf{off}}
    (\omega,\bm{\mathrm a})\right)+\lambda(\omega).$
The corresponding rotated optimization problem is
\begin{equation}
    \label{eq:parameter_based_approximation}
    \tag{$\mathsf{AP\text{-}OCP}^{\mathsf R}$}
    \begin{aligned}
        \min_{\phi\in\Phi_\Theta}\quad&
        \lim_{H\rightarrow\infty}
        \left[
        \lambda(\omega_H)
        +\sum_{k=0}^{H-1}
        \widehat\ell_{\mathsf{off}}^{\mathsf R}
        \left(\omega_k,\phi(\omega_k)\right)
        \right]\quad
        \text{s.t.}\quad
        \omega_{k+1}
        =\hat f_{\mathsf{off}}
        \left(\omega_k,\phi(\omega_k)\right),
        \ \omega_0=\omega.
    \end{aligned}
\end{equation}
Let $\widehat U_{\mathsf{off}}^{\mathsf R}$ denote its optimal value.
A policy $\hat\phi_{\mathsf{off}}$ may be obtained from
\begin{equation}
    \label{eq:parameter_based_approximation_no_rotation}
    \tag{$\mathsf{AP\text{-}OCP}$}
    \begin{aligned}
        \min_{\phi\in\Phi_\Theta}\quad&
        \sum_{k=0}^{\infty}
        \gamma^k\ell\left(\omega_k,\phi(\omega_k)\right)\quad
        \text{s.t.}\quad
        \omega_{k+1}
        =\hat f_{\mathsf{off}}
        \left(\omega_k,\phi(\omega_k)\right),
        \ \omega_0=\omega.
    \end{aligned}
\end{equation}
For $\gamma=1$, replacing $\ell$ by $\ell-\ell_0$ in \namedref{Problem}{eq:parameter_based_approximation_no_rotation} and retaining the terminal storage term in \namedref{Problem}{eq:parameter_based_approximation} gives the policy equivalence described in \namedref{Section}{sec:cost_rotation}.
For discounted or approximate optimization, stability instead requires the certificate below.
\namedref{Problem}{eq:parameter_based_approximation} serves the analysis: the certificate uses its optimal value $\widehat U_{\mathsf{off}}^{\mathsf R}$ as a Lyapunov candidate but not the optimality of $\hat\phi_{\mathsf{off}}$, which may therefore be computed from \namedref{Problem}{eq:parameter_based_approximation_no_rotation}, without $\lambda$, or obtained by reinforcement learning.
For the representative stage cost and $\lambda=0$, $\widehat U_{\mathsf{off}}^{\mathsf R}$ is the optimal value of \namedref{Problem}{eq:parameter_based_approximation_no_rotation} with $\gamma=1$, and the certificate of \namedref{Section}{sec:certification_estimated_lyapunov} requires no storage function.

\paragraph{Certificate quantities.}
Define $\widehat F_{\mathsf{off}}(\omega)
    :=\hat f_{\mathsf{off}}
    \left(\omega,\hat\phi_{\mathsf{off}}(\omega)\right),
    \
    F_{\mathsf{off}}(\omega)
    :=f\left(\omega,\hat\phi_{\mathsf{off}}(\omega)\right).$
Fix a compact operating region $\Omega_{\mathsf{com}}\subseteq\Omega$ containing $\omega_\star$.
The Lyapunov bounds used below are
\begin{equation}
    \label{eq:finite_sample_value_bounds}
    \underline\alpha\left(\|\omega-\omega_\star\|\right)
    \leq U_\star^{\mathsf R}(\omega)
    \leq\overline\alpha\left(\|\omega-\omega_\star\|\right),
    \qquad \forall\omega\in\Omega_{\mathsf{com}},
\end{equation}
where $\underline\alpha,\overline\alpha\in\mathcal K$.
Define the value-function and successor-induced errors by
\begin{equation}
    \label{eq:def_delta_U_finite_sample}
    \Delta_U
    :=\sup_{\omega\in\Omega_{\mathsf{com}}}
    \left|
    \widehat U_{\mathsf{off}}^{\mathsf R}(\omega)
    -U_\star^{\mathsf R}(\omega)
    \right|,
\end{equation}
\begin{equation}
    \label{eq:def_delta_f_finite_sample}
    \Delta_f
    :=\sup_{\omega\in\Omega_{\mathsf{com}}}
    \left|
    U_\star^{\mathsf R}\left(F_{\mathsf{off}}(\omega)\right)
    -U_\star^{\mathsf R}
    \left(\widehat F_{\mathsf{off}}(\omega)\right)
    \right|,
    \qquad
    \Delta:=\Delta_f+2\Delta_U.
\end{equation}
The approximate decrease condition is
\begin{equation}
    \label{eq:learned_lyapunov_decrease_margin_finite_sample}
    \widehat U_{\mathsf{off}}^{\mathsf R}
    \left(\widehat F_{\mathsf{off}}(\omega)\right)
    -\widehat U_{\mathsf{off}}^{\mathsf R}(\omega)
    \leq-\sigma\left(\|\omega-\omega_\star\|\right),
    \qquad \forall\omega\in\Omega_{\mathsf{com}},
\end{equation}
with $\sigma\in\mathcal K$.
For an error bound $\bar\Delta\geq0$, define the residual set
\begin{equation}
    \label{eq:finite_sample_residual_set}
    \Omega_{\bar\Delta}
    :=
    \left\{
    \omega\in\Omega_{\mathsf{com}}
    \ \middle|\
    \sigma\left(\|\omega-\omega_\star\|\right)
    \leq\bar\Delta
    \right\}.
\end{equation}

Since the data $\mathcal D_{\mathsf{off}}$ are random, so are $\hat f_{\mathsf{off}}$, $\hat\phi_{\mathsf{off}}$, $\widehat U_{\mathsf{off}}^{\mathsf R}$, and $\Delta$.
An event is a set of outcomes of the random data or samples, not of parameters $\omega$, and a statement holds on an event if it holds for every outcome in it; $\mathcal E_{\mathsf{off}}$ denotes the event on which $\Delta\leq\bar\Delta$ and~\eqref{eq:learned_lyapunov_decrease_margin_finite_sample} hold.

\begin{restatable}{mytheo}{restateThmFiniteSample}
    \label{theo:stability_finite_samples}
    With the quantities defined in \namedref{Section}{sec:approximate_dynamics_model}, suppose that $\Omega_{\mathsf{com}}$ is positively invariant under $F_{\mathsf{off}}$ and $\widehat F_{\mathsf{off}}$,
    $F_{\mathsf{off}}(\omega_\star)=\omega_\star$, and $U_\star^{\mathsf R}$ is continuous and satisfies \eqref{eq:finite_sample_value_bounds}.
    Assume \namedref{Assumption}{assump:D_compatible_parameterization} on $\Omega_{\mathsf{com}}$.
    If the event $\mathcal E_{\mathsf{off}}$ has probability at least $1-\alpha$, $\alpha\in(0,1)$, then, on $\mathcal E_{\mathsf{off}}$,
    \begin{equation}
        \label{eq:true_lyapunov_decrease_with_residual}
        U_\star^{\mathsf R}\left(F_{\mathsf{off}}(\omega)\right)
        -U_\star^{\mathsf R}(\omega)
        \leq-\sigma\left(\|\omega-\omega_\star\|\right)
        +\bar\Delta,
        \qquad \forall\omega\in\Omega_{\mathsf{com}}.
    \end{equation}
    Thus, $U_\star^{\mathsf R}$ strictly decreases outside $\Omega_{\bar\Delta}$, which shrinks to
    $\{\omega_\star\}$ as $\bar\Delta\rightarrow0$.
    If $\bar\Delta=0$, or, more generally, if state-dependent errors yield $U_\star^{\mathsf R}(F_{\mathsf{off}}(\omega))-U_\star^{\mathsf R}(\omega)\leq-\tilde\sigma(\|\omega-\omega_\star\|)$ throughout $\Omega_{\mathsf{com}}$ for some $\tilde\sigma\in\mathcal K$, the induced distribution dynamics are $\mathbb D$-asymptotically stable with respect to $\rho_\star$ on
    $\{\rho_\omega\mid\omega\in\Omega_{\mathsf{com}}\}$, on $\mathcal E_{\mathsf{off}}$.
\end{restatable}
\noindent
The proof is given in \namedref{Appendix}{app:proof_stability_finite_samples}.
The theorem transfers an approximate decrease certificate to the true dynamics with residual $\Delta_f+2\Delta_U$.
It assumes suitable uniform error bounds and invariance on the operating region; these must be established for the chosen estimation and learning procedure.

\subsection{Deployment with Estimated Distribution Parameters}
\label{sec:deployment_estimated_distribution}

We now hold $\hat\phi_{\mathsf{off}}$ fixed and account for estimation of its input.
At time $t$, an estimate $\hat\omega_t$ is formed from the available history $\mathcal H_t
    :=
    \left\{
    (\bm{\mathrm s}_k,\bm{\mathrm a}_k)
    \right\}_{k=0}^{t-1}
    \cup\{\bm{\mathrm s}_t\},$ and the action is $\bm{\mathrm a}_t
    =\hat\phi_{\mathsf{off}}(\hat\omega_t).$
A single state observation generally does not determine $\omega_t$.
Possible constructions use a sliding window, $K$-nearest neighbors, or importance-weighted observations.
The certificate assumes an explicit error bound for the resulting estimator.
Retain $F_{\mathsf{off}}$ for exact-parameter feedback and define $F_{\mathsf{on}}(\omega,\tilde\omega)
    :=f\left(\omega,\hat\phi_{\mathsf{off}}(\tilde\omega)\right).$
Suppose that we have the certificate
\begin{equation}
    \label{eq:exact_parameter_online_certificate}
    U_\star^{\mathsf R}\left(F_{\mathsf{off}}(\omega)\right)
    -U_\star^{\mathsf R}(\omega)
    \leq-\sigma_{\mathsf{on}}
    \left(\|\omega-\omega_\star\|\right)
    +\bar\Delta_{\mathsf{off}},
    \qquad \forall\omega\in\Omega_{\mathsf{com}},
\end{equation}
where $\sigma_{\mathsf{on}}\in\mathcal K$ and
$\bar\Delta_{\mathsf{off}}\geq0$.
This may be supplied by \namedref{Theorem}{theo:stability_finite_samples}.
Define the additional perturbation
\begin{equation}
    \label{eq:online_plugin_perturbation_bound}
    \Delta_{\mathsf{on}}(\delta)
    :=
    \sup_{\substack{
    \omega,\tilde\omega\in\Omega_{\mathsf{com}}\\
    \|\tilde\omega-\omega\|\leq\delta}}
    \left|
    U_\star^{\mathsf R}
    \left(F_{\mathsf{on}}(\omega,\tilde\omega)\right)
    -U_\star^{\mathsf R}
    \left(F_{\mathsf{off}}(\omega)\right)
    \right|.
\end{equation}
For $\delta_{\mathsf{on}}\geq0$, the required estimation event is
\begin{equation}
    \label{eq:online_estimator_uniform_error}
    \|\hat\omega_t-\omega_t\|\leq\delta_{\mathsf{on}},
    \qquad
    \hat\omega_t\in\Omega_{\mathsf{com}},
    \qquad \forall t\geq0,
\end{equation}
simultaneously for every initial $\omega_0\in\Omega_{\mathsf{com}}$.
The associated residual set is
\begin{equation}
    \label{eq:online_residual_set}
    \Omega_{\mathsf{on}}
    :=
    \left\{
    \omega\in\Omega_{\mathsf{com}}
    \ \middle|\
    \sigma_{\mathsf{on}}\left(\|\omega-\omega_\star\|\right)
    \leq
    \bar\Delta_{\mathsf{off}}
    +\Delta_{\mathsf{on}}(\delta_{\mathsf{on}})
    \right\}.
\end{equation}

\begin{restatable}{mytheo}{restateThmOnlinePlugin}
    \label{theo:online_plugin_stability}
    Suppose that $U_\star^{\mathsf R}$ is continuous on the compact set $\Omega_{\mathsf{com}}$ and satisfies \eqref{eq:finite_sample_value_bounds} and \eqref{eq:exact_parameter_online_certificate}.
    Assume \namedref{Assumption}{assump:D_compatible_parameterization} on $\Omega_{\mathsf{com}}$ and that $F_{\mathsf{on}}(\omega,\tilde\omega)\in\Omega_{\mathsf{com}}$ whenever $\omega,\tilde\omega\in\Omega_{\mathsf{com}}$ and $\|\tilde\omega-\omega\|\leq\delta_{\mathsf{on}}$.
    If the uniform event \eqref{eq:online_estimator_uniform_error} has probability at least $1-\alpha_{\mathsf{on}}$, $\alpha_{\mathsf{on}}\in(0,1)$, then, on that event, every trajectory $\omega_{t+1}=F_{\mathsf{on}}(\omega_t,\hat\omega_t)$ initialized in $\Omega_{\mathsf{com}}$ satisfies
    \begin{equation}
        \label{eq:online_true_lyapunov_decrease_residual}
        U_\star^{\mathsf R}(\omega_{t+1})
        -U_\star^{\mathsf R}(\omega_t)
        \leq
        -\sigma_{\mathsf{on}}\left(\|\omega_t-\omega_\star\|\right)
        +\bar\Delta_{\mathsf{off}}
        +\Delta_{\mathsf{on}}(\delta_{\mathsf{on}}).
    \end{equation}
    Thus, $U_\star^{\mathsf R}$ strictly decreases outside $\Omega_{\mathsf{on}}$.
    If the total residual is zero, or, more generally, if state-dependent errors yield $U_\star^{\mathsf R}(\omega_{t+1})-U_\star^{\mathsf R}(\omega_t)\leq-\tilde\sigma_{\mathsf{on}}(\|\omega_t-\omega_\star\|)$ for all $t\geq0$ and some
    $\tilde\sigma_{\mathsf{on}}\in\mathcal K$, then the induced distribution dynamics are $\mathbb D$-asymptotically stable with respect to $\rho_\star$ on $\{\rho_\omega\mid\omega\in\Omega_{\mathsf{com}}\}$, on the same event.
\end{restatable}
\noindent
The proof is given in
\namedref{Appendix}{app:proof_online_plugin_stability}.
Parameter estimation adds $\Delta_{\mathsf{on}}(\delta_{\mathsf{on}})$ to the exact-parameter residual.
If $U_\star^{\mathsf R}$, $f$, and $\hat\phi_{\mathsf{off}}$ are continuous on the relevant compact sets, then $\Delta_{\mathsf{on}}(\delta)\rightarrow0$ as $\delta\rightarrow0$.
Thus, a vanishing uniform estimation error recovers the exact-parameter certificate.
When \eqref{eq:exact_parameter_online_certificate} holds only on an event, such as $\mathcal E_{\mathsf{off}}$ of \namedref{Theorem}{theo:stability_finite_samples}, the two certificates apply on the intersection of this event and the event~\eqref{eq:online_estimator_uniform_error}.

\subsection{Certification from an Estimated Lyapunov Decrease}
\label{sec:certification_estimated_lyapunov}

A decrease certificate can also be obtained directly from parameter-level transition data, without constructing an explicit approximation of $f$.
At learning iteration $n$, let
\begin{equation}
    \label{eq:model_free_online_dataset}
    \mathcal D_{\mathsf{on}}^{(n)}
    :=
    \left\{
    \left(
    \hat\omega_t^{(j)},
    \bm{\mathrm a}_t^{(j)},
    \hat\omega_{t+1}^{(j)}
    \right)
    \right\}_{j=1}^{N_n}
\end{equation}
contain estimated distribution parameters before and after the recorded actions.
Let $\phi_n:\Omega\rightarrow\mathcal A$ and $W_n:\Omega\rightarrow\mathbb R_{\geq0}$ be a candidate policy and Lyapunov surrogate.
Possible learning procedures include actor-critic methods~\citep{Konda1999}, fitted value iteration~\citep{Munos2008}, and policy-gradient methods~\citep{Sutton1999} with Lyapunov regularization~\citep{Chang2019,Han2020}.
The result below certifies the resulting pair through its decrease and error bounds.

\paragraph{Certificate quantities.}
For analysis, define the true closed-loop maps
\begin{equation}
    \label{eq:model_free_exact_closed_loop}
    F_n(\omega):=f(\omega,\phi_n(\omega)),
\end{equation}
\begin{equation}
    \label{eq:model_free_plugin_closed_loop}
    F_n^{\mathsf{plug}}(\omega,\tilde\omega)
    :=f(\omega,\phi_n(\tilde\omega)).
\end{equation}
These maps specify the dynamics being certified.
Let $\widehat{\mathcal D}_n$ estimate the exact-parameter
decrease
\begin{equation}
    \label{eq:true_exact_decrease_model_free}
    \mathcal D_n(\omega)
    :=W_n(F_n(\omega))-W_n(\omega).
\end{equation}
On a compact region $\Omega_{\mathsf{com}}$ containing
$\omega_\star$, the required Lyapunov bounds and estimated
decrease are
\begin{equation}
    \label{eq:model_free_Wn_bounds}
    \underline\alpha\left(\|\omega-\omega_\star\|\right)
    \leq W_n(\omega)
    \leq\overline\alpha\left(\|\omega-\omega_\star\|\right),
    \qquad \forall\omega\in\Omega_{\mathsf{com}},
\end{equation}
\begin{equation}
    \label{eq:model_free_empirical_decrease_certificate}
    \widehat{\mathcal D}_n(\omega)
    \leq-\sigma_n\left(\|\omega-\omega_\star\|\right),
    \qquad \forall\omega\in\Omega_{\mathsf{com}},
\end{equation}
where $\underline\alpha,\overline\alpha,\sigma_n\in\mathcal K$.
Define the decrease-estimation error and the perturbation
caused by estimating the policy input:
\begin{equation}
    \label{eq:model_free_decrease_estimation_error}
    \Delta_{\mathsf{dec},n}
    :=
    \sup_{\omega\in\Omega_{\mathsf{com}}}
    \left|
    \widehat{\mathcal D}_n(\omega)-\mathcal D_n(\omega)
    \right|,
\end{equation}
\begin{equation}
    \label{eq:model_free_parameter_estimation_perturbation}
    \Delta_{\omega,n}(\delta)
    :=
    \sup_{\substack{
    \omega,\tilde\omega\in\Omega_{\mathsf{com}}\\
    \|\tilde\omega-\omega\|\leq\delta}}
    \left|
    W_n\left(F_n^{\mathsf{plug}}(\omega,\tilde\omega)\right)
    -W_n(F_n(\omega))
    \right|.
\end{equation}
For bounds $\bar\Delta_{\mathsf{dec},n}\geq0$ and
$\delta_{\omega,n}\geq0$, the required joint event is
\begin{equation}
    \label{eq:model_free_high_probability_event}
    \Delta_{\mathsf{dec},n}
    \leq\bar\Delta_{\mathsf{dec},n},
    \qquad
    \|\hat\omega_t-\omega_t\|\leq\delta_{\omega,n},
    \qquad
    \hat\omega_t\in\Omega_{\mathsf{com}},
    \qquad \forall t\geq0,
\end{equation}
simultaneously for every initial
$\omega_0\in\Omega_{\mathsf{com}}$.
The corresponding invariance condition is
\begin{equation}
    \label{eq:model_free_plugin_invariance}
    F_n^{\mathsf{plug}}(\omega,\tilde\omega)
    \in\Omega_{\mathsf{com}},
    \qquad
    \forall\omega,\tilde\omega\in\Omega_{\mathsf{com}}
    \text{ with }
    \|\tilde\omega-\omega\|\leq\delta_{\omega,n}.
\end{equation}
For deployment of the fixed policy $\phi_n$, write
\begin{equation}
    \label{eq:model_free_plugin_dynamics}
    \omega_{t+1}
    =F_n^{\mathsf{plug}}(\omega_t,\hat\omega_t),
\end{equation}
\begin{equation}
    \label{eq:model_free_residual_set}
    \Omega_{\mathsf{mf},n}
    :=
    \left\{
    \omega\in\Omega_{\mathsf{com}}
    \ \middle|\
    \sigma_n\left(\|\omega-\omega_\star\|\right)
    \leq
    \bar\Delta_{\mathsf{dec},n}
    +\Delta_{\omega,n}(\delta_{\omega,n})
    \right\}.
\end{equation}

\begin{restatable}[\textbf{Certificate from estimated Lyapunov decrease}]{mytheo}{restateThmModelFree}
    \label{theo:model_free_online_policy_learning}
    Fix $n$ and the candidate pair $(\phi_n,W_n)$.
    Suppose that $W_n$ is continuous on
    $\Omega_{\mathsf{com}}$ and that \eqref{eq:model_free_Wn_bounds}, \eqref{eq:model_free_empirical_decrease_certificate}, \eqref{eq:model_free_plugin_invariance}, and \namedref{Assumption}{assump:D_compatible_parameterization} hold on this compact set.
    If the joint event \eqref{eq:model_free_high_probability_event} has probability at least $1-\alpha_n$, $\alpha_n\in(0,1)$, then, on that event, every trajectory of \eqref{eq:model_free_plugin_dynamics} initialized in $\Omega_{\mathsf{com}}$ satisfies
    \begin{equation}
        \label{eq:model_free_true_decrease_with_residual}
        W_n(\omega_{t+1})-W_n(\omega_t)
        \leq
        -\sigma_n\left(\|\omega_t-\omega_\star\|\right)
        +\bar\Delta_{\mathsf{dec},n}
        +\Delta_{\omega,n}(\delta_{\omega,n}).
    \end{equation}
    Thus, $W_n$ strictly decreases outside
    $\Omega_{\mathsf{mf},n}$.
    If the total residual is zero, or, more generally, if state-dependent errors yield
    $W_n(\omega_{t+1})-W_n(\omega_t)\leq-\tilde\sigma_n(\|\omega_t-\omega_\star\|)$
    for all $t\geq0$ and some $\tilde\sigma_n\in\mathcal K$,
    then the induced distribution dynamics are $\mathbb D$-asymptotically stable with respect to $\rho_\star$ on $\{\rho_\omega\mid\omega\in\Omega_{\mathsf{com}}\}$, on the same event.
\end{restatable}
\noindent
The proof is given in
\namedref{Appendix}{app:proof_model_free_online_policy_learning}.
The certificate has two error contributions:
$\bar\Delta_{\mathsf{dec},n}$ accounts for estimating the
decrease, and $\Delta_{\omega,n}(\delta_{\omega,n})$
accounts for estimating the policy input.
The index $n$ is held fixed throughout deployment;
the theorem does not establish stability while the policy
and surrogate are being updated.
Stability during data collection and exploration requires
additional guarantees, as in safe-exploration
methods~\citep{Garcia2015,Achiam2017,Berkenkamp2017}.

\subsection{Asymptotic Recovery and Limiting Certificates}
\label{sec:asymptotic_recovery}

The preceding results certify a fixed candidate under
nonzero errors.
We next consider the limit as the available data increase,
first for optimization with an approximate dynamics model
and then for certification from an estimated decrease.

\paragraph{Objectives and optimizers.}
For each dataset size $N$, let
$\hat f_{\mathsf{off}}^N$ be the approximate model.
For a fixed initial $\omega\in\Omega_{\mathsf{com}}$
and $\theta\in\Theta$, the rotated objective of the
policy $\phi(\cdot;\theta)$ under $f$ is
$J^{\mathsf R}(\theta;\omega)
    :=\lim_{H\rightarrow\infty}
    \left[
    \lambda(\omega_H)
    +\sum_{k=0}^{H-1}
    \ell^{\mathsf R}\left(\omega_k,\phi(\omega_k;\theta)\right)
    \right],$
subject to
$\omega_{k+1}=f\left(\omega_k,\phi(\omega_k;\theta)\right)$
and $\omega_0=\omega$, and
$\widehat J_N^{\mathsf R}(\theta;\omega)$ is the objective
of \namedref{Problem}{eq:parameter_based_approximation}
for the same policy with
$\hat f_{\mathsf{off}}=\hat f_{\mathsf{off}}^N$.
Both objectives use the same storage function $\lambda$
and constant $\ell_0$, neither of which depends on
$\theta$.
By~\eqref{eq:rotated_telescope_parameter}, applied along
the trajectory of each model, each objective equals
$\lambda(\omega)$ plus the undiscounted objective of
\namedref{Problem}{eq:parameter_based}
for $\phi(\cdot;\theta)$ under that model, so the rotation
does not change the minimizers.
Define $\Theta_\star^{\mathsf R}
    :=\arg\min_{\theta\in\Theta}
    J^{\mathsf R}(\theta;\omega),
    \
    \widehat\theta_N
    \in\arg\min_{\theta\in\Theta}
    \widehat J_N^{\mathsf R}(\theta;\omega).$
\namedref{Proposition}{prop:asymptotic_recovery_rotated_optimizers}
assumes, rather than proves, two convergence conditions:
uniform model consistency,
\begin{equation}
    \label{eq:uniform_model_consistency_main}
    \sup_{\substack{
    \omega\in\Omega_{\mathsf{com}}\\
    \bm{\mathrm a}\in\mathcal A}}
    \left\|
    \hat f_{\mathsf{off}}^N(\omega,\bm{\mathrm a})
    -f(\omega,\bm{\mathrm a})
    \right\|
    \longrightarrow0
    \quad\text{almost surely as }N\rightarrow\infty,
\end{equation}
and uniform convergence of the infinite-horizon objectives,
\begin{equation}
    \label{eq:uniform_rotated_objective_consistency}
    \sup_{\theta\in\Theta}
    \left|
    \widehat J_N^{\mathsf R}(\theta;\omega)
    -J^{\mathsf R}(\theta;\omega)
    \right|
    \longrightarrow0
    \quad\text{almost surely as }N\rightarrow\infty.
\end{equation}
Fixed-horizon convergence alone does not establish
the second condition.

\begin{restatable}{myprop}{restatePropAsymptoticRecovery}
    \label{prop:asymptotic_recovery_rotated_optimizers}
    Let $\gamma=1$, let
    $\Omega_{\mathsf{com}}$, $\Theta$, and $\mathcal A$
    be compact, and suppose that
    $f$, $\ell$, $\lambda$, and $\phi(\omega;\theta)$
    are continuous.
    Assume that
    \eqref{eq:uniform_model_consistency_main}
    and \eqref{eq:uniform_rotated_objective_consistency} hold,
    and that both rotated objectives are well defined
    and continuous in $\theta$ on $\Theta$.
    Suppose that $\Omega_{\mathsf{com}}$ is positively
    invariant for every true closed loop indexed by
    $\theta\in\Theta$, and that, almost surely, there is a
    finite $N_0$ such that it is also positively invariant
    for every learned closed loop whenever
    $N\geq N_0$ and $\theta\in\Theta$.
    Then $\operatorname{dist}
        \left(\widehat\theta_N,\Theta_\star^{\mathsf R}\right)
        \longrightarrow0
        \ \text{almost surely}.$
    Consequently, every limit point $\bar\theta$ of
    $\{\widehat\theta_N\}_{N=1}^{\infty}$ belongs to
    $\Theta_\star^{\mathsf R}$ almost surely.
    If every policy indexed by
    $\bar\theta\in\Theta_\star^{\mathsf R}$
    satisfies the certificate in
    \namedref{Lemma}{lemma:lyapunov_distribution_stability}
    on $\Omega_{\mathsf{com}}$, and
    \namedref{Assumption}{assump:D_compatible_parameterization}
    holds there, then every such limit policy renders the
    induced distribution dynamics
    $\mathbb D$-asymptotically stable with respect to
    $\rho_\star$ on
    $\{\rho_\omega\mid\omega\in\Omega_{\mathsf{com}}\}$.
\end{restatable}
\noindent
The proof is given in
\namedref{Appendix}{app:proof_asymptotic_recovery_rotated_optimizers}.
Uniform model consistency requires sufficient coverage
of $\Omega_{\mathsf{com}}\times\mathcal A$
and suitable estimation conditions~\citep{Munos2008,Chen2019}.
\namedref{Appendix}{appendix:example_pipeline_model_consistency}
describes one possible pipeline based on conditional
density estimation, maximum-likelihood estimation,
and Gaussian process regression.
With partial coverage, uncertainty regularization,
such as that used in MOPO and MOBILE~\citep{Yu2020,Sun2023},
can help control optimization outside the supported
region, but does not by itself establish the uniform
consistency required by the proposition.
For \namedref{Theorem}{theo:model_free_online_policy_learning},
consider a sequence for which $\bar\Delta_{\mathsf{dec},n}
    +\Delta_{\omega,n}(\delta_{\omega,n})
    \longrightarrow 0.$
If a subsequence of $(\phi_n,W_n)$ converges uniformly to a pair $(\phi_\infty,W_\infty)$ that satisfies all conditions of \namedref{Lemma}{lemma:lyapunov_distribution_stability}, including a decrease condition with a class-$\mathcal K$ function, and the parameterization is $\mathbb D$-compatible on the operating region, $\phi_\infty$ is $\mathbb D$-asymptotically stable.
Vanishing residuals and convergence of the pair alone are insufficient because the decrease margins $\sigma_n$ may also vanish.
This is a certificate for the limiting policy. 
Optimizer recovery is established under \namedref{Proposition}{prop:asymptotic_recovery_rotated_optimizers}'s assumptions.

\subsection{Practical Implementation}
\label{sec:practical_implementation}

The experiments train PPO and SAC using distribution
estimates as policy inputs, with three departures from
the idealized problem~\eqref{eq:maintext_stability_problem}.
First, a deployed system provides one noisy trajectory,
so $\hat\omega_t$ is formed from a sliding window as
described in \namedref{Section}{sec:deployment_estimated_distribution}.
The estimate $\hat\omega_t$ contains the means and standard deviations of the
last $n_{\mathsf w}$ noisy readings of the regulated
features (the forward velocity and the torso pitch
of HalfCheetah).
Second, to represent the body's configuration within
the gait cycle, the policy also receives the latest
benchmark observation $\bm{\mathrm y}_t$ with sensor
noise of standard deviation $\eta$,
$\bm{\mathrm o}_t=\bm{\mathrm y}_t+\eta\,\bm{\xi}_t$, so that
$\bm{\mathrm a}_t=\phi(\hat\omega_t,\bm{\mathrm o}_t)$.
Third, DMDP (PPO/SAC) maximizes the return of
$r_t^{\mathsf b}+r_{t\mathbb D}-\beta\,\mathbb D_t$,
with the benchmark reward $r_t^{\mathsf b}$,
the dissimilarity $\mathbb D_t$ of the empirical
distribution of the true feature values in the window
to $\rho_\star$, its weight $\beta>0$, and the threshold
reward $r_{t\mathbb D}$, a constant bonus whenever
$\mathbb D_t\leq\mathbb D_{\mathsf{thr}}$.
\namedref{Figure}{fig:implementation_pipeline} in
\namedref{Appendix}{appendix:implementation} shows this
closed loop, to which the preceding certificates apply
once their hypotheses are verified for its actual
inputs, induced dynamics, and estimation errors.

\section{Empirical Validation}
\label{sec:validations}

Does a DMDP bring the distribution of the velocity and the pitch of the closed loop close to the target $\rho_\star$ and keep it stable?
In the HalfCheetah task of \namedref{Table}{tab:task} in \namedref{Appendix}{appendix:implementation}, the benchmark reward $r^{\mathsf{b}}_t$ favors fast forward running, and the safety cost $c_t$ indicates each step whose forward velocity $v_t$ exceeds the velocity limit $v_{\mathsf{lim}}$.
The target $\rho_\star$, uniform on the product of a velocity band below $v_{\mathsf{lim}}$ and a narrow pitch interval around zero, encodes fast and safe running with an upright torso.

The DMDP methods are compared with the baselines of \namedref{Table}{tab:methods} in \namedref{Appendix}{appendix:implementation}, with SAC in this section and with PPO in \namedref{Appendix}{appendix:ppo_results}, where DMDP-Lag (PPO) is likewise the closest of all methods to the target without noise.
The critics of the Distributional baselines learn quantiles of the return instead of its expectation~\citep{DabneyQR2018}.
Since Implicit DMDP (PPO/SAC) differs from the Distributional method of the same learner only in the policy input, it isolates the contribution of feeding back $\hat{\omega}_t$ without $\rho_\star$; its return distribution depends on the state distribution only implicitly, whereas the reward of the DMDP methods depends on it explicitly through $\mathbb{D}_t$.
The -Lag methods enforce the episodic cost constraint $\mathbb{E}\big[\sum_{t=0}^{H-1}c_t\big]\leq\bar{c}$ with the cost limit $\bar{c}$~\citep{Altman1999}, adapting a Lagrange multiplier $\lambda\geq0$ as in PPO-Lagrangian~\citep{Ray2019} and SAC-Lagrangian~\citep{Ha2020}.

Every method is trained with five seeds and a random $\eta$ in each training episode, and each policy is evaluated with deterministic actions on $64$ episodes per seed at the sensor-noise levels $\eta\in\{0,0.1,\ldots,0.5\}$.
The evaluation tables report the return of the evaluation reward $r^{\mathsf{eva}}_t$ of \namedref{Table}{tab:task}. Unlike the benchmark reward, it does not reward violating the velocity limit, and its threshold reward $r_{t\mathbb{D}}$ is also computed for the baselines, which are not trained on it.
Whereas $\hat{\omega}_t$ and $\mathbb{D}_t$ use windows of one trajectory, the evaluation uses the velocity marginal $\rho_k$ of the state distribution at step $k$ across the evaluation episodes and the stable velocity distribution $\hat{\rho}$ pooled over their steps $k\geq500$.
The target distance $\mathbb{D}(\hat{\rho}\|\rho^{v}_\star)$ to the velocity marginal $\rho^{v}_\star$ of $\rho_\star$ and the stability distance $\mathbb{D}(\rho_k\|\hat{\rho})$ are $1$-Wasserstein distances. The evaluation tables average the stability distance over $k<200$ and over $k\geq500$, and their last column is the fraction of evaluation steps $k\geq500$ with $\mathbb{D}_t\leq\mathbb{D}_{\mathsf{thr}}$.
The reliability of the return $R$ of each method's own training reward is measured by the probability $\mathbb{P}[R\geq R_{\mathsf{Thr}}]$, one point of the episode-level counterpart of performance profiles~\citep{Chan2020}, and by the $10$--$90\%$ range of $R$, with $R_{\mathsf{Thr}}=5000$ (an average reward of $5$ per step).

\subsection{Results with SAC}
\label{sec:exp_sac}

\namedref{Table}{tab:sac_main_results} shows that, at both noise levels, DMDP-Lag (SAC) brings its stable velocity distribution closest to the target and has the highest evaluation return and the largest fraction of steps with $\mathbb{D}_t\leq\mathbb{D}_{\mathsf{thr}}$. Without noise, it leads clearly in all three and also has the lowest cost of all methods.
In \namedref{Figure}{fig:sac_stable_laws}, its noise-free stable distribution is concentrated in the velocity band.
\namedref{Table}{tab:sac_reliability} shows that DMDP-Lag (SAC) reaches $R_{\mathsf{Thr}}$ most often of all methods at the end of training and in the evaluation at both noise levels, clearly more often than DMDP (SAC), whereas the six baselines never reach it.
In \namedref{Figure}{fig:sac_training_curves}, the mean $R$ of each DMDP method rises above $R_{\mathsf{Thr}}$ within about $2\times10^{5}$ training steps, whereas that of each baseline stays below it; from then on, the median window dissimilarity of DMDP-Lag (SAC) lies below $\mathbb{D}_{\mathsf{thr}}$.
The cost constraint is effective: at both noise levels, each -Lag method has a clearly lower cost than each method without a cost constraint, and in the second half of training, the costs of the -Lag methods stay near or below $\bar{c}$, whereas those of the other methods stay far above it. 
With the cost constraint, feeding $\hat{\omega}_t$ back to the policy helps even without $\rho_\star$: at both noise levels, Implicit DMDP-Lag (SAC) is second only to DMDP-Lag (SAC) in evaluation return and in distance to the target, and it is clearly better than Distributional SAC-Lag in both.
None of the four methods that feed back $\hat{\omega}_t$ has a flipped run at either noise level, whereas SAC, SAC-Lag, and especially the Distributional methods have flipped runs at both, and the low velocities of the flipped runs widen the distributions of these methods in \namedref{Figure}{fig:sac_stable_laws}. 
Knowing $\rho_\star$ helps further: DMDP (SAC) is clearly closer to the target than Implicit DMDP (SAC) at both noise levels, at a clearly lower cost.
The policy of Implicit DMDP (SAC) spends most steps above $v_{\mathsf{lim}}$, as its cost shows, so its evaluation return is negative and clearly the lowest of all methods at both noise levels, and its stable distribution is clearly the farthest from the target.
Two runs of DMDP (SAC) stay in the velocity band and reach $R_{\mathsf{Thr}}$ in every episode, forming the sharp peak in the band in \namedref{Figure}{fig:sac_stable_laws}; its other three runs do neither and form the wide part above $v_{\mathsf{lim}}$.

\begin{table}[!t]
    \centering
    \caption{Evaluation results with SAC.}
    \label{tab:sac_main_results}
    \resizebox{\linewidth}{!}{\begin{tabular}{llcccccc}
\toprule
Method & $\eta$ & \begin{tabular}[b]{@{}c@{}}return \\ $r^{\mathsf{eva}}_t$ $\uparrow$\end{tabular} & cost $\downarrow$ & \begin{tabular}[b]{@{}c@{}}$\mathbb{D}(\rho_k\|\hat{\rho})$ \\ $k<200$ $\downarrow$\end{tabular} & \begin{tabular}[b]{@{}c@{}}$\mathbb{D}(\rho_k\|\hat{\rho})$ \\ $k\geq500$ $\downarrow$\end{tabular} & \begin{tabular}[b]{@{}c@{}}$\mathbb{D}(\hat{\rho}\|\rho^{v}_\star)$ \\ $\downarrow$\end{tabular} & \begin{tabular}[b]{@{}c@{}}$\mathbb{P}[\mathbb{D}(\bar{\rho}_t\|\rho_\star)$ \\ $\leq\mathbb{D}_{\mathsf{thr}}]$ $\uparrow$\end{tabular} \\
\midrule
SAC & $0$ & $461\pm565$ & $218.9\pm126.1$ & $0.47\pm0.08$ & $0.301\pm0.065$ & $0.98\pm0.17$ & $0.01\pm0.01$ \\
 & $0.2$ & $526\pm616$ & $205.4\pm130.0$ & $0.37\pm0.07$ & $0.197\pm0.021$ & $1.06\pm0.17$ & $0.02\pm0.02$ \\
\midrule
SAC-Lag & $0$ & $3802\pm1920$ & $6.8\pm3.7$ & $0.26\pm0.02$ & $0.155\pm0.022$ & $0.73\pm0.23$ & $0.23\pm0.17$ \\
 & $0.2$ & $3467\pm1929$ & $\bm{4.9\pm2.2}$ & $0.26\pm0.03$ & $0.155\pm0.033$ & $0.81\pm0.24$ & $0.21\pm0.18$ \\
\midrule
Distributional & $0$ & $819\pm141$ & $108.7\pm48.7$ & $0.36\pm0.04$ & $0.233\pm0.026$ & $0.96\pm0.14$ & $0.00\pm0.00$ \\
SAC & $0.2$ & $837\pm103$ & $85.9\pm35.3$ & $0.33\pm0.03$ & $0.205\pm0.018$ & $1.03\pm0.13$ & $0.00\pm0.00$ \\
\midrule
Distributional & $0$ & $1248\pm85$ & $11.2\pm6.6$ & $0.41\pm0.06$ & $0.247\pm0.023$ & $1.04\pm0.12$ & $0.00\pm0.00$ \\
SAC-Lag & $0.2$ & $1185\pm95$ & $7.8\pm3.1$ & $0.37\pm0.04$ & $0.205\pm0.011$ & $1.12\pm0.10$ & $0.00\pm0.00$ \\
\midrule
Implicit DMDP & $0$ & $-2410\pm243$ & $844.6\pm46.6$ & $1.03\pm0.10$ & $0.563\pm0.171$ & $1.88\pm0.13$ & $0.00\pm0.00$ \\
(SAC) & $0.2$ & $-1824\pm329$ & $726.4\pm63.2$ & $0.82\pm0.09$ & $0.427\pm0.136$ & $1.46\pm0.17$ & $0.00\pm0.00$ \\
\midrule
Implicit DMDP-Lag & $0$ & $4729\pm1346$ & $2.2\pm1.7$ & $\bm{0.21\pm0.01}$ & $0.061\pm0.018$ & $0.30\pm0.03$ & $0.22\pm0.14$ \\
(SAC) & $0.2$ & $8824\pm1758$ & $9.2\pm4.8$ & $\bm{0.22\pm0.01}$ & $0.073\pm0.020$ & $0.21\pm0.07$ & $0.68\pm0.17$ \\
\midrule
DMDP (SAC) & $0$ & $4010\pm3325$ & $406.1\pm166.9$ & $0.66\pm0.19$ & $0.446\pm0.176$ & $0.81\pm0.27$ & $0.40\pm0.24$ \\
 & $0.2$ & $4138\pm2980$ & $329.5\pm132.4$ & $0.57\pm0.14$ & $0.311\pm0.107$ & $0.63\pm0.22$ & $0.38\pm0.23$ \\
\midrule
DMDP-Lag (SAC) & $0$ & $\bm{11981\pm105}$ & $\bm{1.0\pm0.5}$ & $\bm{0.21\pm0.01}$ & $\bm{0.056\pm0.006}$ & $\bm{0.13\pm0.01}$ & $\bm{0.99\pm0.01}$ \\
 & $0.2$ & $\bm{10617\pm1191}$ & $6.3\pm3.8$ & $\bm{0.22\pm0.02}$ & $\bm{0.062\pm0.020}$ & $\bm{0.14\pm0.05}$ & $\bm{0.87\pm0.11}$ \\
\bottomrule
\end{tabular}

}
\end{table}

\begin{figure}[!h]
    \centering
    \includegraphics[width=\linewidth]{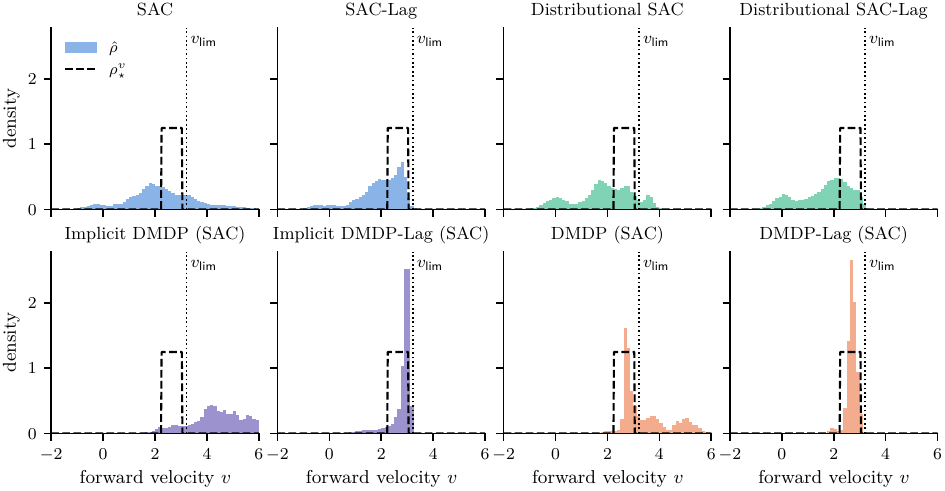}
    \caption{Stable velocity distributions with SAC.}
    \label{fig:sac_stable_laws}
\end{figure}

\begin{table}[!h]
    \centering
    \caption{Reliability of the return with SAC.}
    \label{tab:sac_reliability}
    \footnotesize\setlength{\tabcolsep}{3pt}
    \begin{tabular*}{\linewidth}{@{\extracolsep{\fill}}lcccccc@{}}
\toprule
 & \multicolumn{2}{c}{end of training} & \multicolumn{2}{c}{evaluation, $\eta=0$} & \multicolumn{2}{c}{evaluation, $\eta=0.2$} \\
\cmidrule(lr){2-3}\cmidrule(lr){4-5}\cmidrule(lr){6-7}
Method & $\mathbb{P}[R\geq R_{\mathsf{Thr}}]$ $\uparrow$ & 10--90\% range $\downarrow$ & $\mathbb{P}[R\geq R_{\mathsf{Thr}}]$ $\uparrow$ & 10--90\% range $\downarrow$ & $\mathbb{P}[R\geq R_{\mathsf{Thr}}]$ $\uparrow$ & 10--90\% range $\downarrow$ \\
\midrule
SAC & $0.00\pm0.00$ & $2231$ & $0.00\pm0.00$ & $2154$ & $0.00\pm0.00$ & $2373$ \\
SAC-Lag & $0.00\pm0.00$ & $1303$ & $0.00\pm0.00$ & $1365$ & $0.00\pm0.00$ & $1310$ \\
Distributional SAC & $0.00\pm0.00$ & $911$ & $0.00\pm0.00$ & $1197$ & $0.00\pm0.00$ & $957$ \\
Distributional SAC-Lag & $0.00\pm0.00$ & $\bm{589}$ & $0.00\pm0.00$ & $672$ & $0.00\pm0.00$ & $\bm{586}$ \\
Implicit DMDP (SAC) & $0.00\pm0.00$ & $1248$ & $0.00\pm0.00$ & $685$ & $0.00\pm0.00$ & $1104$ \\
Implicit DMDP-Lag (SAC) & $0.00\pm0.00$ & $668$ & $0.00\pm0.00$ & $\bm{548}$ & $0.00\pm0.00$ & $735$ \\
DMDP (SAC) & $0.40\pm0.24$ & $9545$ & $0.40\pm0.24$ & $8970$ & $0.40\pm0.24$ & $9132$ \\
DMDP-Lag (SAC) & $\bm{0.92\pm0.08}$ & $5088$ & $\bm{1.00\pm0.00}$ & $573$ & $\bm{0.97\pm0.02}$ & $6011$ \\
\bottomrule
\end{tabular*}

\end{table}

In \namedref{Figure}{fig:sac_stability_curves}, the stability distance of every method drops within about $100$ steps and then changes much more slowly.
For $k\geq500$, DMDP-Lag (SAC) has the smallest stability distance of all methods at both noise levels, and Implicit DMDP-Lag (SAC) has the second-smallest, within the seed variation of DMDP-Lag (SAC); both are clearly smaller than those of the other six methods.
Implicit DMDP (SAC) and DMDP (SAC) have the largest stability distances, within the seed variation of each other; without noise, these distances come mostly from the oscillation of $\rho_k$ in their synchronized runs, defined in \namedref{Appendix}{appendix:ppo_results}.

\begin{figure}[!h]
    \centering
    \includegraphics[width=\linewidth]{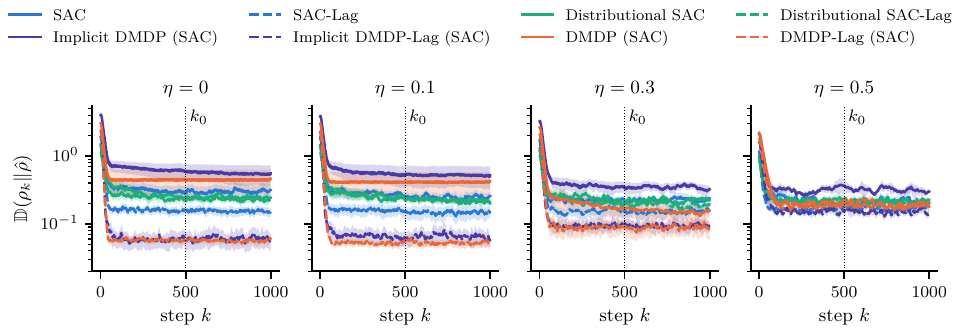}
    \caption{Stability curves with SAC.}
    \label{fig:sac_stability_curves}
\end{figure}

\section{Conclusion}
\label{sec:conclusion}
This paper introduced Dissimilarity-based Markov Decision Processes, which control the distribution of the state of a stochastic system rather than a single state: the policy acts on the state distribution, and the stage cost is its dissimilarity to a target distribution.
A finite-dimensional parameterization turns this problem into the control of deterministic parameter dynamics, for which Lyapunov certificates were derived for the optimal policy, for policies learned offline with a model of the parameter dynamics, for their online deployment with estimated parameters, and for policies whose Lyapunov decrease is estimated from transition data; in the scalar linear example of \namedref{Appendix}{appendix:toy_example}, these certificates hold wherever they are checked.
On HalfCheetah, a sliding-window estimate of the distribution, a window dissimilarity, and a threshold reward enable PPO and SAC to regulate the joint distribution of the velocity and the pitch.
Of all compared methods, DMDP-Lag (PPO/SAC) brings the velocity distribution closest to a fast and safe target and attains the highest evaluation return without sensor noise, and DMDP-Lag (SAC) achieves both also under moderate noise.
Feeding the distribution estimate back to the policy helps even without the target, and knowing the target takes the DMDP methods further.
The Lyapunov certificates of \namedref{Theorems}{theo:parameter_stability_means_distribution_stability}--\ref{theo:model_free_online_policy_learning} were not computed for the learned HalfCheetah policies; in the example of \namedref{Appendix}{appendix:toy_example}, computing them requires the true model, and the certified residual sets are much larger than the sets in which the learned closed loops settle.
Because these certificates hold for any policy whose decrease margin dominates the error terms, whichever learning method produced it, learning a Lyapunov function together with the policy and bounding the error terms from data is a natural next step toward certifying such high-dimensional tasks from data alone.
Richer density models in the parameterization of \namedref{Section}{sec:parameterized_distributions}, such as Gaussian mixtures and normalizing flows, further open the way to regulating multimodal distributions of higher-dimensional states.

\FloatBarrier
\newpage

\bibliography{references}
\bibliographystyle{tmlr}

\appendix
\clearpage
\etocdepthtag.toc{appendix}
\tableofcontents
\clearpage

\section{Proofs of All Main Results}
\label{app:proofs}
Each result of the main text is restated with its original numbering before its proof.

\subsection{Proof of Theorem~\ref*{theo:main_psi_star_stability}}
\label{app:proof_main_psi_star_stability}
\restateThmMainStability*

\begin{proof}
Let $\rho_0\in\mathcal{P}$ and consider the closed-loop dynamics $\rho_{k+1}=\mathcal{T}_{\tau,\psi_\star(\rho_k)}\rho_k$.
Since $\mathcal{P}$ is forward invariant under these dynamics, $\rho_k\in\mathcal{P}$ for all $k\geq 0$.
Since $\gamma=1$, the dynamic programming principle~\citep{Puterman1994,HernandezLerma1996} gives, along the optimal policy $\psi_\star$, the Bellman relation $V_\star[\rho_k]=\mathcal{L}[\rho_k,\psi_\star(\rho_k)]+V_\star[\rho_{k+1}]$.
With the lower bound on the stage cost functional, it follows that
\begin{equation}
    V_\star[\rho_{k+1}]-V_\star[\rho_k]
    =
    -\mathcal{L}\left[\rho_k,\psi_\star(\rho_k)\right]
    \leq
    -\kappa_1\left(\mathbb{D}(\rho_k\|\rho_\star)\right).
    \label{eq:proof_lyapunov_decrease}
\end{equation}
Moreover, $V_\star[\rho_k]=\sum_{j=k}^{\infty}\mathcal{L}[\rho_j,\psi_\star(\rho_j)]\geq 0$ along the optimal closed-loop trajectory, because $\mathcal{L}[\rho_j,\psi_\star(\rho_j)]\geq\kappa_1(\mathbb{D}(\rho_j\|\rho_\star))\geq 0$.
Hence, $V_\star[\rho_k]$ is nonnegative and nonincreasing, and, by the monotone convergence theorem, $\lim_{k\rightarrow\infty}V_\star[\rho_k]=\bar{V}$ for some constant $\bar{V}\geq 0$.
The lower bound on the stage cost functional, the Bellman relation, and $V_\star[\rho_{k+1}]\geq 0$ also give
\begin{equation}
    \kappa_1\left(\mathbb{D}(\rho_k\|\rho_\star)\right)
    \leq
    \mathcal{L}\left[\rho_k,\psi_\star(\rho_k)\right]
    \leq
    V_\star[\rho_k],
    \quad \forall k\geq 0.
\end{equation}

For $\mathbb{D}$-stability, let $\varepsilon>0$.
Since $\kappa_1$ and $\kappa_2$ are class-$\mathcal{K}$ functions, $\kappa_1(\varepsilon)>0$ and there exists $\delta(\varepsilon)>0$ such that $\kappa_2(\delta(\varepsilon))\leq\kappa_1(\varepsilon)$.
If $\mathbb{D}(\rho_0\|\rho_\star)\leq\delta(\varepsilon)$, then the previous bound, the monotonicity of $V_\star[\rho_k]$, and the upper bound on $V_\star$ give
\begin{equation}
    \kappa_1\left(\mathbb{D}(\rho_k\|\rho_\star)\right)
    \leq
    V_\star[\rho_k]
    \leq
    V_\star[\rho_0]
    \leq
    \kappa_2\left(\mathbb{D}(\rho_0\|\rho_\star)\right)
    \leq
    \kappa_2(\delta(\varepsilon))
    \leq
    \kappa_1(\varepsilon),
    \quad \forall k\geq 0.
\end{equation}
Since $\kappa_1$ is strictly increasing, $\mathbb{D}(\rho_k\|\rho_\star)\leq\varepsilon$ for all $k\geq 0$; that is, the closed-loop DMDP is $\mathbb{D}$-stable with respect to $\rho_\star$.

For $\mathbb{D}$-attractivity, suppose, for contradiction, that $\bar{V}>0$.
Since $V_\star[\rho_k]\rightarrow\bar{V}$, there exists $K\geq 0$ such that, for all $k\geq K$, $V_\star[\rho_k]\geq\bar{V}/2$ and, hence, by the upper bound on $V_\star$, $\kappa_2(\mathbb{D}(\rho_k\|\rho_\star))\geq\bar{V}/2$.
Since $\kappa_2$ is a class-$\mathcal{K}$ function, there exists a constant $\varepsilon_0>0$ such that, for all $k\geq K$, $\mathbb{D}(\rho_k\|\rho_\star)\geq\varepsilon_0$ and, hence, $\kappa_1(\mathbb{D}(\rho_k\|\rho_\star))\geq\kappa_1(\varepsilon_0)>0$.
Summing~\eqref{eq:proof_lyapunov_decrease} gives, for all $k>K$,
\begin{equation}
    V_\star[\rho_k]
    \leq
    V_\star[\rho_K]
    -
    \sum_{j=K}^{k-1}
    \kappa_1\left(\mathbb{D}(\rho_j\|\rho_\star)\right)
    \leq
    V_\star[\rho_K]
    -
    (k-K)\kappa_1(\varepsilon_0).
\end{equation}
The right-hand side tends to $-\infty$ as $k\rightarrow\infty$, which contradicts $V_\star[\rho_k]\geq 0$.
Therefore, $\bar{V}=0$, and $\kappa_1(\mathbb{D}(\rho_k\|\rho_\star))\leq V_\star[\rho_k]$ yields $\lim_{k\rightarrow\infty}\kappa_1(\mathbb{D}(\rho_k\|\rho_\star))=0$.
Since $\kappa_1$ is a class-$\mathcal{K}$ function, $\lim_{k\rightarrow\infty}\mathbb{D}(\rho_k\|\rho_\star)=0$; that is, $\rho_\star$ is $\mathbb{D}$-attractive.
Together with $\mathbb{D}$-stability, this proves that the DMDP under $\psi_\star$ is $\mathbb{D}$-asymptotically stable with respect to $\rho_\star$ and $\mathbb{D}$.
\end{proof}

\subsection{Proof of Lemma~\ref*{lemma:lyapunov_distribution_stability}}
\label{app:proof_lyapunov_distribution_stability}
\restateLemLyapunovTransfer*

\begin{proof}
Let $\omega_0\in\Omega_{\mathsf{c}}$ and $\omega_{k+1}=F_\phi(\omega_k)$ for $k\geq 0$.
Since $\Omega_{\mathsf{c}}$ is positively invariant under $F_\phi$, $\omega_k\in\Omega_{\mathsf{c}}$ for all $k\geq 0$.
With $r_k:=\|\omega_k-\omega_\star\|$ and $W_k:=W(\omega_k)$, the decrease condition~\eqref{eq:generic_lyapunov_decrease_parameter} gives
\begin{equation}
    W_{k+1}-W_k
    \leq
    -\sigma(r_k),
    \quad
    \forall k\geq 0.
    \label{eq:proof_generic_lyapunov_decrease}
\end{equation}
Since $\sigma$ is nonnegative, $\{W_k\}_{k=0}^{\infty}$ is nonincreasing, and, by~\eqref{eq:generic_lyapunov_bounds_parameter}, $W_k\geq\underline{\alpha}(r_k)\geq 0$ for all $k\geq 0$.

First, the parameter dynamics are stable with respect to $\omega_\star$.
Let $\varepsilon_\omega>0$.
Since $\underline{\alpha}$ and $\overline{\alpha}$ are class-$\mathcal K$ functions, $\underline{\alpha}(\varepsilon_\omega)>0$ and, by the continuity of $\overline{\alpha}$ at the origin, there exists $\delta_\omega>0$ such that $\overline{\alpha}(\delta_\omega)\leq\underline{\alpha}(\varepsilon_\omega)$.
If $r_0\leq\delta_\omega$, then~\eqref{eq:generic_lyapunov_bounds_parameter} and the monotonicity of $W_k$ give
\begin{equation}
    \underline{\alpha}(r_k)
    \leq
    W_k
    \leq
    W_0
    \leq
    \overline{\alpha}(r_0)
    \leq
    \overline{\alpha}(\delta_\omega)
    \leq
    \underline{\alpha}(\varepsilon_\omega),
    \quad
    \forall k\geq 0,
\end{equation}
and, since $\underline{\alpha}$ is strictly increasing, $r_k\leq\varepsilon_\omega$ for all $k\geq 0$.

Second, the parameter dynamics are attractive.
Summing~\eqref{eq:proof_generic_lyapunov_decrease} from $k=0$ to $N-1$ gives $\sum_{k=0}^{N-1}\sigma(r_k)\leq W_0-W_N\leq W_0$.
Letting $N\rightarrow\infty$ yields $\sum_{k=0}^{\infty}\sigma(r_k)<\infty$, so $\lim_{k\rightarrow\infty}\sigma(r_k)=0$.
Since $\sigma$ is a class-$\mathcal K$ function, $\lim_{k\rightarrow\infty}r_k=0$; hence, the closed-loop parameter dynamics are asymptotically stable with respect to $\omega_\star$.

It remains to transfer these properties to the distributions $\rho_{\omega_k}$.
Since the parameterization is $\mathbb{D}$-compatible on $\Omega_{\mathsf{c}}$ in the sense of \namedref{Assumption}{assump:D_compatible_parameterization} and $\rho_{\omega_\star}=\rho_\star$,
\begin{equation}
    \alpha_1(r_k)
    \leq
    \mathbb{D}(\rho_{\omega_k}\|\rho_\star)
    \leq
    \alpha_2(r_k),
    \quad
    \forall k\geq 0.
\end{equation}
Since $r_k\rightarrow 0$ and $\alpha_2$ is a class-$\mathcal K$ function, $\lim_{k\rightarrow\infty}\mathbb{D}(\rho_{\omega_k}\|\rho_\star)=0$, which is $\mathbb{D}$-attractivity.
For $\mathbb{D}$-stability, let $\varepsilon_{\mathbb{D}}>0$ and choose $\varepsilon_\omega>0$ such that $\alpha_2(\varepsilon_\omega)\leq\varepsilon_{\mathbb{D}}$.
By the stability of the parameter dynamics, there exists $\delta_\omega>0$ such that $r_0\leq\delta_\omega$ implies $r_k\leq\varepsilon_\omega$ for all $k\geq 0$.
Let $\delta_{\mathbb{D}}:=\alpha_1(\delta_\omega)>0$.
If $\mathbb{D}(\rho_{\omega_0}\|\rho_\star)\leq\delta_{\mathbb{D}}$, then the lower compatibility bound gives $\alpha_1(r_0)\leq\mathbb{D}(\rho_{\omega_0}\|\rho_\star)\leq\alpha_1(\delta_\omega)$, so $r_0\leq\delta_\omega$ because $\alpha_1$ is strictly increasing.
Hence,
\begin{equation}
    \mathbb{D}(\rho_{\omega_k}\|\rho_\star)
    \leq
    \alpha_2(r_k)
    \leq
    \alpha_2(\varepsilon_\omega)
    \leq
    \varepsilon_{\mathbb{D}},
    \quad
    \forall k\geq 0,
\end{equation}
which is $\mathbb{D}$-stability.
Together, $\mathbb{D}$-stability and $\mathbb{D}$-attractivity prove that the induced distribution dynamics are $\mathbb{D}$-asymptotically stable with respect to $\rho_\star=\rho_{\omega_\star}$ on $\{\rho_\omega\mid\omega\in\Omega_{\mathsf{c}}\}$.
\end{proof}

\subsection{Proof of Theorem~\ref*{theo:parameter_stability_means_distribution_stability}}
\label{app:proof_parameter_stability_means_distribution_stability}
\restateThmParameterStability*

\begin{proof}
\namedref{Lemma}{lemma:lyapunov_distribution_stability} is applied with $\phi=\phi_\star$, $W:=U_\star^{\mathsf{R}}$, and $\Omega_{\mathsf{c}}:=\Omega_{\mathcal{P}}$.
Its invariance, fixed-point, and $\mathbb{D}$-compatibility conditions hold by assumption, and $W$ is a continuous map into $\mathbb{R}_{\geq0}$ because $U_\star^{\mathsf{R}}$ is finite, nonnegative, and continuous on $\Omega_{\mathcal{P}}$.
It remains to verify~\eqref{eq:generic_lyapunov_bounds_parameter} and~\eqref{eq:generic_lyapunov_decrease_parameter}.
Let $\omega\in\Omega_{\mathcal{P}}$.
Since $F_{\phi_\star}(\omega)\in\Omega_{\mathcal{P}}$ by positive invariance and $U_\star^{\mathsf{R}}\geq0$ on $\Omega_{\mathcal{P}}$, the rotated Bellman identity~\eqref{eq:rotated_bellman_identity_parameter}, the rotated dissipativity condition~\eqref{eq:rotated_dissipativity_parameter}, and the upper bound~\eqref{eq:rotated_value_upper_bound_parameter} give
\begin{align}
    U_\star^{\mathsf{R}}(\omega)
    &=\ell^{\mathsf{R}}\left(\omega,\phi_\star(\omega)\right)+U_\star^{\mathsf{R}}\left(F_{\phi_\star}(\omega)\right)
    \geq\ell^{\mathsf{R}}\left(\omega,\phi_\star(\omega)\right)
    \geq\kappa_1\left(\|\omega-\omega_\star\|\right),
    \label{eq:proof_optimal_value_lower_bound}\\
    U_\star^{\mathsf{R}}(\omega)
    &\leq\kappa_2\left(\|\omega-\omega_\star\|\right).
    \label{eq:proof_optimal_value_upper_bound}
\end{align}
The rotated Bellman identity and the rotated dissipativity condition~\eqref{eq:rotated_dissipativity_parameter} also give the decrease
\begin{equation}
    U_\star^{\mathsf{R}}\left(F_{\phi_\star}(\omega)\right)-U_\star^{\mathsf{R}}(\omega)
    =-\ell^{\mathsf{R}}\left(\omega,\phi_\star(\omega)\right)
    \leq-\kappa_1\left(\|\omega-\omega_\star\|\right).
\end{equation}
Hence, the Lyapunov bounds~\eqref{eq:generic_lyapunov_bounds_parameter} hold with $\underline{\alpha}=\kappa_1$ and $\overline{\alpha}=\kappa_2$, and the decrease condition~\eqref{eq:generic_lyapunov_decrease_parameter} holds with $\sigma=\kappa_1$.
By \namedref{Lemma}{lemma:lyapunov_distribution_stability}, the induced distribution dynamics generated by $\omega_{k+1}=F_{\phi_\star}(\omega_k)$ and $\rho_k=\rho_{\omega_k}$ are $\mathbb{D}$-asymptotically stable with respect to $\rho_\star=\rho_{\omega_\star}$ on $\{\rho_\omega\mid\omega\in\Omega_{\mathcal{P}}\}$.
\end{proof}

\subsection{Proof of Theorem~\ref*{theo:stability_finite_samples}}
\label{app:proof_stability_finite_samples}
\restateThmFiniteSample*

\begin{proof}
Recall the learned and true closed-loop maps $\widehat{F}_{\mathsf{off}}(\omega)=\hat{f}_{\mathsf{off}}(\omega,\hat{\phi}_{\mathsf{off}}(\omega))$ and $F_{\mathsf{off}}(\omega)=f(\omega,\hat{\phi}_{\mathsf{off}}(\omega))$.
All statements are proved on the event $\mathcal E_{\mathsf{off}}$, of probability at least $1-\alpha$, on which $\Delta\leq\bar{\Delta}$ and the learned decrease margin~\eqref{eq:learned_lyapunov_decrease_margin_finite_sample} holds.
Let $\omega\in\Omega_{\mathsf{com}}$.
A telescoping decomposition gives
\begin{equation}
\begin{aligned}
    &U_\star^{\mathsf{R}}\left(F_{\mathsf{off}}(\omega)\right)-U_\star^{\mathsf{R}}(\omega)\\
    &=\left[U_\star^{\mathsf{R}}\left(F_{\mathsf{off}}(\omega)\right)-U_\star^{\mathsf{R}}\left(\widehat{F}_{\mathsf{off}}(\omega)\right)\right]
    +\left[U_\star^{\mathsf{R}}\left(\widehat{F}_{\mathsf{off}}(\omega)\right)-\widehat{U}_{\mathsf{off}}^{\mathsf{R}}\left(\widehat{F}_{\mathsf{off}}(\omega)\right)\right]\\
    &\quad+\left[\widehat{U}_{\mathsf{off}}^{\mathsf{R}}\left(\widehat{F}_{\mathsf{off}}(\omega)\right)-\widehat{U}_{\mathsf{off}}^{\mathsf{R}}(\omega)\right]
    +\left[\widehat{U}_{\mathsf{off}}^{\mathsf{R}}(\omega)-U_\star^{\mathsf{R}}(\omega)\right]\\
    &\leq\Delta_f+\Delta_U-\sigma\left(\|\omega-\omega_\star\|\right)+\Delta_U
    =\Delta-\sigma\left(\|\omega-\omega_\star\|\right)
    \leq\bar{\Delta}-\sigma\left(\|\omega-\omega_\star\|\right).
\end{aligned}
\end{equation}
The four brackets are bounded by~\eqref{eq:def_delta_f_finite_sample}, \eqref{eq:def_delta_U_finite_sample}, \eqref{eq:learned_lyapunov_decrease_margin_finite_sample}, and~\eqref{eq:def_delta_U_finite_sample}, respectively; the second bound applies because $\widehat{F}_{\mathsf{off}}(\omega)\in\Omega_{\mathsf{com}}$ by the positive invariance of $\Omega_{\mathsf{com}}$ under $\widehat{F}_{\mathsf{off}}$.
This proves~\eqref{eq:true_lyapunov_decrease_with_residual}.

If $\omega\in\Omega_{\mathsf{com}}$ and $\omega\notin\Omega_{\bar{\Delta}}$, with $\Omega_{\bar{\Delta}}$ the residual set~\eqref{eq:finite_sample_residual_set}, then $\sigma(\|\omega-\omega_\star\|)>\bar{\Delta}$, and~\eqref{eq:true_lyapunov_decrease_with_residual} gives $U_\star^{\mathsf{R}}(F_{\mathsf{off}}(\omega))-U_\star^{\mathsf{R}}(\omega)<0$.
Hence, $U_\star^{\mathsf{R}}$ strictly decreases along the true closed loop outside $\Omega_{\bar{\Delta}}$, which certifies the Lyapunov decrease of the learned policy there.
Since $\sigma$ is a class-$\mathcal{K}$ function, $\Omega_{\bar{\Delta}}$ shrinks to $\{\omega_\star\}$ as $\bar{\Delta}\rightarrow0$.

If $\bar{\Delta}=0$, then~\eqref{eq:true_lyapunov_decrease_with_residual} is the decrease condition~\eqref{eq:generic_lyapunov_decrease_parameter} with $W=U_\star^{\mathsf{R}}$, $\Omega_{\mathsf{c}}=\Omega_{\mathsf{com}}$, and $\phi=\hat{\phi}_{\mathsf{off}}$, for which $F_\phi=F_{\mathsf{off}}$.
The remaining conditions of \namedref{Lemma}{lemma:lyapunov_distribution_stability} are the continuity of $U_\star^{\mathsf{R}}$, the Lyapunov bounds~\eqref{eq:finite_sample_value_bounds}, the positive invariance of $\Omega_{\mathsf{com}}$ under $F_{\mathsf{off}}$, the fixed-point condition $F_{\mathsf{off}}(\omega_\star)=\omega_\star$, and the $\mathbb{D}$-compatibility of the parameterization on $\Omega_{\mathsf{com}}$ in the sense of \namedref{Assumption}{assump:D_compatible_parameterization}, all of which hold by assumption.
Hence, the induced distribution dynamics under $\hat{\phi}_{\mathsf{off}}$ are $\mathbb{D}$-asymptotically stable with respect to $\rho_\star$ on $\{\rho_\omega\mid\omega\in\Omega_{\mathsf{com}}\}$.
The same argument, with $\sigma$ replaced by $\tilde{\sigma}$, applies when the approximation error is state-dependent and dominated by the decrease margin, so that $U_\star^{\mathsf{R}}(F_{\mathsf{off}}(\omega))-U_\star^{\mathsf{R}}(\omega)\leq-\tilde{\sigma}(\|\omega-\omega_\star\|)$ for all $\omega\in\Omega_{\mathsf{com}}$ and some class-$\mathcal{K}$ function $\tilde{\sigma}$.
\end{proof}

\subsection{Proof of Theorem~\ref*{theo:online_plugin_stability}}
\label{app:proof_online_plugin_stability}
\restateThmOnlinePlugin*

\begin{proof}
Throughout, the event in~\eqref{eq:online_estimator_uniform_error}, which has probability at least $1-\alpha_{\mathsf{on}}$, is assumed to hold, so that $\|\hat{\omega}_t-\omega_t\|\leq\delta_{\mathsf{on}}$ and $\hat{\omega}_t\in\Omega_{\mathsf{com}}$ for all $t\geq0$.
Hence, the assumed invariance gives $\omega_{t+1}=F_{\mathsf{on}}(\omega_t,\hat{\omega}_t)\in\Omega_{\mathsf{com}}$ whenever $\omega_t\in\Omega_{\mathsf{com}}$, and, since $\omega_0\in\Omega_{\mathsf{com}}$, induction on $t$ gives $\omega_t\in\Omega_{\mathsf{com}}$ for all $t\geq0$.

Let $\omega,\tilde{\omega}\in\Omega_{\mathsf{com}}$ with $\|\tilde{\omega}-\omega\|\leq\delta_{\mathsf{on}}$.
Adding and subtracting $U_\star^{\mathsf{R}}(F_{\mathsf{off}}(\omega))$, and bounding the first bracket by the definition~\eqref{eq:online_plugin_perturbation_bound} of $\Delta_{\mathsf{on}}$ and the second by the exact-parameter certificate~\eqref{eq:exact_parameter_online_certificate}, gives
\begin{equation}
\begin{aligned}
    &U_\star^{\mathsf{R}}\left(F_{\mathsf{on}}(\omega,\tilde{\omega})\right)-U_\star^{\mathsf{R}}(\omega) \\
    &=
    \left[U_\star^{\mathsf{R}}\left(F_{\mathsf{on}}(\omega,\tilde{\omega})\right)-U_\star^{\mathsf{R}}\left(F_{\mathsf{off}}(\omega)\right)\right]
    +
    \left[U_\star^{\mathsf{R}}\left(F_{\mathsf{off}}(\omega)\right)-U_\star^{\mathsf{R}}(\omega)\right] \\
    &\leq
    -\sigma_{\mathsf{on}}\left(\|\omega-\omega_\star\|\right)
    +\bar{\Delta}_{\mathsf{off}}
    +\Delta_{\mathsf{on}}(\delta_{\mathsf{on}}).
\end{aligned}
\end{equation}
Setting $\omega=\omega_t$ and $\tilde{\omega}=\hat{\omega}_t$ yields~\eqref{eq:online_true_lyapunov_decrease_residual}.
If $\omega_t\notin\Omega_{\mathsf{on}}$, then $\sigma_{\mathsf{on}}(\|\omega_t-\omega_\star\|)>\bar{\Delta}_{\mathsf{off}}+\Delta_{\mathsf{on}}(\delta_{\mathsf{on}})$ by~\eqref{eq:online_residual_set}, so~\eqref{eq:online_true_lyapunov_decrease_residual} gives $U_\star^{\mathsf{R}}(\omega_{t+1})<U_\star^{\mathsf{R}}(\omega_t)$.
Hence, $U_\star^{\mathsf{R}}$ strictly decreases along the online closed loop outside $\Omega_{\mathsf{on}}$, and, since $\sigma_{\mathsf{on}}$ is a class-$\mathcal{K}$ function, $\Omega_{\mathsf{on}}$ shrinks to $\{\omega_\star\}$ as $\bar{\Delta}_{\mathsf{off}}+\Delta_{\mathsf{on}}(\delta_{\mathsf{on}})\rightarrow0$.

Now suppose that $U_\star^{\mathsf{R}}(\omega_{t+1})-U_\star^{\mathsf{R}}(\omega_t)\leq-\tilde{\sigma}_{\mathsf{on}}(\|\omega_t-\omega_\star\|)$ for all $t\geq0$ and some class-$\mathcal{K}$ function $\tilde{\sigma}_{\mathsf{on}}$; by~\eqref{eq:online_true_lyapunov_decrease_residual}, this holds with $\tilde{\sigma}_{\mathsf{on}}=\sigma_{\mathsf{on}}$ if the total residual is zero.
Since the Lyapunov argument in the proof of \namedref{Lemma}{lemma:lyapunov_distribution_stability} uses only the Lyapunov bounds and the decrease inequality along the trajectory, it applies with $W=U_\star^{\mathsf{R}}$ to every realized trajectory on the event, using this decrease, the bounds~\eqref{eq:finite_sample_value_bounds}, the invariance condition, and the $\mathbb{D}$-compatibility in \namedref{Assumption}{assump:D_compatible_parameterization}.
Hence, the induced distribution dynamics are $\mathbb{D}$-asymptotically stable with respect to $\rho_\star$ on $\{\rho_\omega\mid\omega\in\Omega_{\mathsf{com}}\}$.
\end{proof}

\subsection{Proof of Theorem~\ref*{theo:model_free_online_policy_learning}}
\label{app:proof_model_free_online_policy_learning}
\restateThmModelFree*

\begin{proof}
Throughout, the event in~\eqref{eq:model_free_high_probability_event} is assumed to hold, so that $\Delta_{\mathsf{dec},n}\leq\bar{\Delta}_{\mathsf{dec},n}$, $\|\hat{\omega}_t-\omega_t\|\leq\delta_{\omega,n}$, and $\hat{\omega}_t\in\Omega_{\mathsf{com}}$ for all $t\geq0$.
Hence, the invariance condition~\eqref{eq:model_free_plugin_invariance} gives $\omega_{t+1}=F_n^{\mathsf{plug}}(\omega_t,\hat{\omega}_t)\in\Omega_{\mathsf{com}}$ whenever $\omega_t\in\Omega_{\mathsf{com}}$, and, since $\omega_0\in\Omega_{\mathsf{com}}$, induction on $t$ gives $\omega_t\in\Omega_{\mathsf{com}}$ for all $t\geq0$.

For $\omega\in\Omega_{\mathsf{com}}$, the definition~\eqref{eq:model_free_decrease_estimation_error} of $\Delta_{\mathsf{dec},n}$ and the empirical decrease certificate~\eqref{eq:model_free_empirical_decrease_certificate} give
\begin{equation}
    W_n(F_n(\omega))-W_n(\omega)
    \leq
    \widehat{\mathcal D}_n(\omega)+\Delta_{\mathsf{dec},n}
    \leq
    -\sigma_n\left(\|\omega-\omega_\star\|\right)+\Delta_{\mathsf{dec},n}.
    \label{eq:proof_exact_model_free_decrease}
\end{equation}
Let $\tilde{\omega}\in\Omega_{\mathsf{com}}$ with $\|\tilde{\omega}-\omega\|\leq\delta_{\omega,n}$.
Adding and subtracting $W_n(F_n(\omega))$, and bounding the first bracket by the definition~\eqref{eq:model_free_parameter_estimation_perturbation} of $\Delta_{\omega,n}$ and the second by~\eqref{eq:proof_exact_model_free_decrease} with $\Delta_{\mathsf{dec},n}\leq\bar{\Delta}_{\mathsf{dec},n}$, gives
\begin{equation}
\begin{aligned}
    &W_n\left(F_n^{\mathsf{plug}}(\omega,\tilde{\omega})\right)-W_n(\omega) \\
    &=
    \left[W_n\left(F_n^{\mathsf{plug}}(\omega,\tilde{\omega})\right)-W_n(F_n(\omega))\right]
    +
    \left[W_n(F_n(\omega))-W_n(\omega)\right] \\
    &\leq
    -\sigma_n\left(\|\omega-\omega_\star\|\right)
    +\bar{\Delta}_{\mathsf{dec},n}
    +\Delta_{\omega,n}(\delta_{\omega,n}).
\end{aligned}
\end{equation}
Setting $\omega=\omega_t$ and $\tilde{\omega}=\hat{\omega}_t$ yields~\eqref{eq:model_free_true_decrease_with_residual}.
If $\omega_t\notin\Omega_{\mathsf{mf},n}$, then $\sigma_n(\|\omega_t-\omega_\star\|)>\bar{\Delta}_{\mathsf{dec},n}+\Delta_{\omega,n}(\delta_{\omega,n})$ by~\eqref{eq:model_free_residual_set}, so~\eqref{eq:model_free_true_decrease_with_residual} gives $W_n(\omega_{t+1})<W_n(\omega_t)$.
Hence, $W_n$ strictly decreases along the plug-in online dynamics outside $\Omega_{\mathsf{mf},n}$.

Now suppose that $W_n(\omega_{t+1})-W_n(\omega_t)\leq-\tilde{\sigma}_n(\|\omega_t-\omega_\star\|)$ for all $t\geq0$ and some class-$\mathcal K$ function $\tilde{\sigma}_n$; by~\eqref{eq:model_free_true_decrease_with_residual}, this holds with $\tilde{\sigma}_n=\sigma_n$ if the total residual is zero.
As in the proof of \namedref{Theorem}{theo:online_plugin_stability}, the Lyapunov argument in the proof of \namedref{Lemma}{lemma:lyapunov_distribution_stability} then applies with $W=W_n$ to every realized trajectory on the event, using this decrease, the bounds~\eqref{eq:model_free_Wn_bounds}, the invariance condition, and the $\mathbb D$-compatibility in \namedref{Assumption}{assump:D_compatible_parameterization}.
Hence, the induced distribution dynamics are $\mathbb D$-asymptotically stable with respect to $\rho_\star$ on $\{\rho_\omega\mid\omega\in\Omega_{\mathsf{com}}\}$.
\end{proof}

\subsection{Proof of Proposition~\ref*{prop:asymptotic_recovery_rotated_optimizers}}
\label{app:proof_asymptotic_recovery_rotated_optimizers}
\restatePropAsymptoticRecovery*

\begin{proof}
For $\theta\in\Theta$, write $\phi_\theta:=\phi(\cdot;\theta)$, and define the true and learned closed-loop trajectories with $\omega_0^\theta=\widehat{\omega}_0^{\theta,N}=\omega$ by
\begin{equation}
\begin{aligned}
    \omega_{k+1}^{\theta}
    &=
    f\left(\omega_k^{\theta},\phi_\theta(\omega_k^{\theta})\right), \\
    \widehat{\omega}_{k+1}^{\theta,N}
    &=
    \hat{f}_{\mathsf{off}}^{N}\left(\widehat{\omega}_k^{\theta,N},\phi_\theta(\widehat{\omega}_k^{\theta,N})\right).
\end{aligned}
\end{equation}
Let $\ell^{\mathsf{R}}$ be the rotated stage cost~\eqref{eq:rotated_stage_cost_parameter}, and let $\widehat{\ell}_{\mathsf{off}}^{\mathsf{R},N}$ be obtained from $\ell^{\mathsf{R}}$ by replacing $f$ with $\hat{f}_{\mathsf{off}}^{N}$.
For a finite horizon $H$, the true and learned rotated objectives are
\begin{equation}
\begin{aligned}
    J_H^{\mathsf{R}}(\theta;\omega)
    &:=
    \lambda(\omega_H^\theta)
    +
    \sum_{k=0}^{H-1}
    \ell^{\mathsf{R}}\left(\omega_k^\theta,\phi_\theta(\omega_k^\theta)\right), \\
    \widehat{J}_{H,N}^{\mathsf{R}}(\theta;\omega)
    &:=
    \lambda(\widehat{\omega}_H^{\theta,N})
    +
    \sum_{k=0}^{H-1}
    \widehat{\ell}_{\mathsf{off}}^{\mathsf{R},N}\left(\widehat{\omega}_k^{\theta,N},\phi_\theta(\widehat{\omega}_k^{\theta,N})\right),
\end{aligned}
\end{equation}
and $J^{\mathsf{R}}(\theta;\omega)$ and $\widehat{J}_{N}^{\mathsf{R}}(\theta;\omega)$ are their limits as $H\rightarrow\infty$, which are well defined by assumption.

Fix an event of probability one on which $N_0<\infty$ and~\eqref{eq:uniform_model_consistency_main} holds.
Since $\Omega_{\mathsf{com}}$, $\mathcal{A}$, and $\Theta$ are compact, the continuous maps $f$, $\ell$, $\lambda$, and $\phi$ are uniformly continuous on the relevant compact sets.
By the assumed positive invariance, $\omega_k^\theta\in\Omega_{\mathsf{com}}$ for all $\theta\in\Theta$ and $k\geq0$, and so is $\widehat{\omega}_k^{\theta,N}$ once $N\geq N_0$.

For every fixed $H<\infty$,
\begin{equation}
    \label{eq:proof_uniform_trajectory_convergence_prop}
    \sup_{\theta\in\Theta}
    \max_{0\leq k\leq H}
    \left\|\widehat{\omega}_k^{\theta,N}-\omega_k^\theta\right\|
    \rightarrow 0
    \quad
    \text{almost surely}.
\end{equation}
This follows by induction on $k$.
The case $k=0$ holds because $\widehat{\omega}_0^{\theta,N}=\omega_0^\theta$.
If the convergence holds at time $k<H$, the triangle inequality gives
\begin{equation}
\begin{aligned}
    \left\|\widehat{\omega}_{k+1}^{\theta,N}-\omega_{k+1}^{\theta}\right\|
    &\leq
    \left\|\left(\hat{f}_{\mathsf{off}}^{N}-f\right)\left(\widehat{\omega}_k^{\theta,N},\phi_\theta(\widehat{\omega}_k^{\theta,N})\right)\right\| \\
    &\quad+
    \left\|f\left(\widehat{\omega}_k^{\theta,N},\phi_\theta(\widehat{\omega}_k^{\theta,N})\right)-f\left(\omega_k^\theta,\phi_\theta(\omega_k^\theta)\right)\right\|.
\end{aligned}
\end{equation}
The first term tends to zero uniformly in $\theta$ by~\eqref{eq:uniform_model_consistency_main}, and the second by the induction hypothesis and the uniform continuity of $f$ and $\phi$.
By~\eqref{eq:proof_uniform_trajectory_convergence_prop}, \eqref{eq:uniform_model_consistency_main}, and the uniform continuity of $\ell$ and $\lambda$, it follows that, for every fixed $H<\infty$,
\begin{equation}
    \label{eq:proof_fixed_horizon_objective_convergence_prop}
    \sup_{\theta\in\Theta}
    \left|\widehat{J}_{H,N}^{\mathsf{R}}(\theta;\omega)-J_H^{\mathsf{R}}(\theta;\omega)\right|
    \rightarrow 0
    \quad
    \text{almost surely}.
\end{equation}
As assumed in \namedref{Proposition}{prop:asymptotic_recovery_rotated_optimizers}, this convergence extends to the infinite-horizon objectives:
\begin{equation}
    \label{eq:proof_infinite_horizon_objective_convergence_prop}
    \zeta_N
    :=
    \sup_{\theta\in\Theta}
    \left|\widehat{J}_{N}^{\mathsf{R}}(\theta;\omega)-J^{\mathsf{R}}(\theta;\omega)\right|
    \rightarrow 0
    \quad
    \text{almost surely}.
\end{equation}

Since $\Theta$ is compact and both objectives are continuous, $\Theta_\star^{\mathsf{R}}$ is nonempty and $\widehat{\theta}_{N}$ exists.
Let $J_\star^{\mathsf{R}}:=\min_{\theta\in\Theta}J^{\mathsf{R}}(\theta;\omega)$.
For any $\theta_\star\in\Theta_\star^{\mathsf{R}}$, the optimality of $\widehat{\theta}_N$ for the learned objective gives
\begin{equation}
\begin{aligned}
    J_\star^{\mathsf{R}}
    \leq
    J^{\mathsf{R}}(\widehat{\theta}_N;\omega)
    &\leq
    \widehat{J}_{N}^{\mathsf{R}}(\widehat{\theta}_N;\omega)+\zeta_N
    \leq
    \widehat{J}_{N}^{\mathsf{R}}(\theta_\star;\omega)+\zeta_N \\
    &\leq
    J^{\mathsf{R}}(\theta_\star;\omega)+2\zeta_N
    =
    J_\star^{\mathsf{R}}+2\zeta_N.
\end{aligned}
\end{equation}
Hence, $J^{\mathsf{R}}(\widehat{\theta}_N;\omega)\rightarrow J_\star^{\mathsf{R}}$ almost surely.

Suppose, for contradiction, that $\operatorname{dist}(\widehat{\theta}_N,\Theta_\star^{\mathsf{R}})\not\rightarrow0$.
Then, there exist $\varepsilon>0$ and a subsequence $\{\widehat{\theta}_{N_j}\}_{j=1}^{\infty}$ with $\operatorname{dist}(\widehat{\theta}_{N_j},\Theta_\star^{\mathsf{R}})\geq\varepsilon$ for all $j$.
By the compactness of $\Theta$, a further subsequence, still denoted by $\widehat{\theta}_{N_j}$, converges to some $\bar{\theta}\in\Theta$.
Since $J^{\mathsf{R}}(\widehat{\theta}_{N_j};\omega)\rightarrow J_\star^{\mathsf{R}}$ and $J^{\mathsf{R}}(\cdot;\omega)$ is continuous, $J^{\mathsf{R}}(\bar{\theta};\omega)=J_\star^{\mathsf{R}}$, so $\bar{\theta}\in\Theta_\star^{\mathsf{R}}$.
However, the continuity of $\operatorname{dist}(\cdot,\Theta_\star^{\mathsf{R}})$ gives $\operatorname{dist}(\bar{\theta},\Theta_\star^{\mathsf{R}})\geq\varepsilon$, a contradiction.
Therefore, $\operatorname{dist}(\widehat{\theta}_N,\Theta_\star^{\mathsf{R}})\rightarrow0$ almost surely; equivalently, every limit point of $\{\widehat{\theta}_{N}\}_{N=1}^{\infty}$ belongs to $\Theta_\star^{\mathsf{R}}$ almost surely.

Finally, suppose that every policy $\phi(\cdot;\bar{\theta})$ with $\bar{\theta}\in\Theta_\star^{\mathsf{R}}$ satisfies the Lyapunov certificate in \namedref{Lemma}{lemma:lyapunov_distribution_stability} on $\Omega_{\mathsf{com}}$ and that the parameterization is $\mathbb{D}$-compatible on $\Omega_{\mathsf{com}}$ in the sense of \namedref{Assumption}{assump:D_compatible_parameterization}.
Then, applying \namedref{Lemma}{lemma:lyapunov_distribution_stability} to each such limit policy shows that its induced distribution dynamics are $\mathbb{D}$-asymptotically stable with respect to $\rho_\star$ on $\{\rho_\omega\mid\omega\in\Omega_{\mathsf{com}}\}$.
\end{proof}

\section{Technical Comparison with Related Approaches}
\label{appendix:comparison_state_feedback}

This appendix details the comparison in \namedref{Table}{tab:related_work_comparison}.
A stage cost, objective, or constraint counts as a distribution cost only if it depends nonlinearly on the distribution of the state or of state-action pairs, not merely through the expectation of a per-state cost or through the return distribution.
A policy uses distribution feedback only if it is evaluated at the current distribution, or at a finite-dimensional parameter of it, rather than at a single state or through a time-indexed law computed offline for a fixed initial distribution.
The assumed ergodicity of a fixed Markov chain and the convergence of learning iterates, such as those of distributional reinforcement learning algorithms~\citep{Rowland2018,Antonetti2026}, do not establish distributional stability.
Optimality, regret, and sample-complexity guarantees do not certify learned policies.

\subsection{Norm-Based MDPs}
\label{appendix:norm_mdp}

The norm-based MDP (NMDP) is the tuple $\mathcal{M}_{\mathsf{N}}:=\langle \mathcal{S},\mathcal{A},\tau,L,\gamma, \rho_0 \rangle$, where $\gamma\in(0,1]$ is a discount factor, $\tau$ is the Markov transition kernel, $\rho_0$ is the initial state distribution, and $L : \mathcal{S} \times \mathcal{A} \rightarrow \mathbb{R}$ is a norm-based stage cost, typically $L(\bm{\mathrm{s}}, \bm{\mathrm{a}}) = \|\bm{\mathrm{s}}-\bm{\mathrm{s}}_\star\|_2^2$, which penalizes the instantaneous deviation of the state from a target state $\bm{\mathrm{s}}_\star$.
For stability analysis, costs replace the rewards common in reinforcement learning; the two formulations are equivalent up to negation.
A deterministic causal state-feedback policy $\pi: \mathcal{S} \rightarrow \mathcal{A}$ sets $\bm{\mathrm{a}}_k = \pi(\bm{\mathrm{s}}_k)$, and $\Pi$ denotes the set of all such policies.
For fixed $\pi$, $\tau$, and $\rho_0$, the state distributions $\{\rho_k\}_{k=0}^\infty$, with $\bm{\mathrm{s}}_k \sim \rho_k$, are uniquely determined by the closed-loop distribution transition operator $\mathcal{T}_{\tau,\pi}:M(\mathcal{S})\rightarrow M(\mathcal{S})$,
\begin{equation}
    \mathcal{T}_{\tau,\pi}\rho(\mathcal{B}) := \int_{\mathcal{S}} \tau(\mathcal{B}\mid \bm{\mathrm{s}},\pi(\bm{\mathrm{s}}))\, \rho(\mathsf{d}\bm{\mathrm{s}}) \quad \text{for every measurable } \mathcal{B}\subseteq\mathcal{S}.
\end{equation}
A standard reinforcement learning objective~\citep{Sutton2018} is the expected discounted cumulative cost:
\begin{equation}
\label{eq:prob_norm_based_mdp}
\tag{$\mathsf{TRL}$}
     \min_{\pi \in \Pi} \ \sum_{k=0}^{\infty} \gamma^k \mathbb{E}_{\bm{\mathrm{s}}_k \sim \rho_k} \left[L(\bm{\mathrm{s}}_k, \pi(\bm{\mathrm{s}}_k))\right] \quad
    \text{s.t.} \quad  \rho_{k+1} = \mathcal{T}_{\tau,\pi}\rho_k.
\end{equation}
Let $\pi_{\star}$ be an optimal policy of \namedref{Problem}{eq:prob_norm_based_mdp}, and, for $\rho_{k+1}=\mathcal{T}_{\tau,\pi_{\star}}\rho_k$, let
\begin{equation}
    \ell_{\star,k}(\rho_0) := \mathbb{E}_{\bm{\mathrm{s}}_k\sim\rho_k}\left[L(\bm{\mathrm{s}}_k, \pi_\star(\bm{\mathrm{s}}_k))\right] = \int_{\mathcal{S}} L(\bm{\mathrm{s}},\pi_\star(\bm{\mathrm{s}}))\, \rho_k(\mathsf{d}\bm{\mathrm{s}})
\end{equation}
be the expected stage cost at time $k$.
If $\{\ell_{\star,k}(\rho_0)\}_{k=0}^\infty$ is lower bounded and nonincreasing along the closed-loop distributional dynamics, it converges to a finite limit $\ell_\infty$.
However, this scalar sequence depends on $\rho_k$ only through the integral of $L$, whereas distributional stability requires convergence of the entire measure $\rho_k$.
Hence, convergence of $\{\ell_{\star,k}(\rho_0)\}_{k=0}^\infty$ does not, in general, imply that $\rho_k$ converges to a probability measure, as \namedref{Appendix}{appendix:counterexample} shows.
Convergence of the distribution itself can instead be established through dissipativity: \citet{Gros_2022} extended the dissipativity theory of economic model predictive control (MPC) to undiscounted infinite-horizon MDPs and showed that, under a strict dissipativity condition with respect to a dissimilarity measure, the state distribution under the optimal policy converges, in that dissimilarity measure, to the optimal steady-state probability measure defined by the stage cost.

\subsubsection{A Counterexample: Convergent Cost without Distributional Convergence}
\label{appendix:counterexample}
Let $\mathcal{S}=\mathbb{R}^{d_{\mathsf{s}}}$, $\mathcal{A}=\{\bm{\mathrm{a}}_0\}$, and $L(\bm{\mathrm{s}},\bm{\mathrm{a}})=\|\bm{\mathrm{s}}-\bm{\mathrm{s}}_\star\|_2^2$.
Then, $\pi_\star(\bm{\mathrm{s}})=\bm{\mathrm{a}}_0$ is the only admissible, hence optimal, policy of \namedref{Problem}{eq:prob_norm_based_mdp}.
The transition kernel is deterministic, $\tau(\mathcal{B}\mid \bm{\mathrm{s}},\bm{\mathrm{a}}_0)=\mathbf{1}_{\mathcal{B}}(G(\bm{\mathrm{s}}))$ for every measurable set $\mathcal{B}\subseteq\mathcal{S}$, with a measurable map $G:\mathcal{S}\rightarrow\mathcal{S}$.
The map $G$ preserves the norm around $\bm{\mathrm{s}}_\star$, i.e., $\|G(\bm{\mathrm{s}})-\bm{\mathrm{s}}_\star\|_2=\|\bm{\mathrm{s}}-\bm{\mathrm{s}}_\star\|_2$ for all $\bm{\mathrm{s}}\in\mathcal{S}$, and has finite order $m\geq 2$, i.e., $G^m$ is the identity map.
The pushforward of a probability measure $\rho$ by $G$ is $G_{\#}\rho(\mathcal{B}):=\rho(G^{-1}(\mathcal{B}))$ for every measurable set $\mathcal{B}\subseteq\mathcal{S}$.
The initial distribution $\rho_0$ has finite second moment, $\int_{\mathcal{S}}\|\bm{\mathrm{s}}-\bm{\mathrm{s}}_\star\|_2^2\,\rho_0(\mathsf{d}\bm{\mathrm{s}})<\infty$, and its orbit $\rho_0,\ G_{\#}\rho_0,\ \ldots,\ G_{\#}^{m-1}\rho_0$ contains at least two distinct probability measures.
Such $G$ and $\rho_0$ exist: the point reflection $G(\bm{\mathrm{s}})=2\bm{\mathrm{s}}_\star-\bm{\mathrm{s}}$ preserves the norm around $\bm{\mathrm{s}}_\star$ and has order $m=2$, and every initial distribution with finite second moment that is not symmetric about $\bm{\mathrm{s}}_\star$, e.g., a Gaussian distribution whose mean differs from $\bm{\mathrm{s}}_\star$, has two distinct measures in its orbit.

The closed-loop operator reduces to the pushforward, $\mathcal{T}_{\tau,\pi_\star}\rho(\mathcal{B})=\int_{\mathcal{S}}\mathbf{1}_{\mathcal{B}}(G(\bm{\mathrm{s}}))\,\rho(\mathsf{d}\bm{\mathrm{s}})=\rho(G^{-1}(\mathcal{B}))=G_{\#}\rho(\mathcal{B})$, so $\rho_k=G_{\#}^{k}\rho_0$ for all $k\geq 0$.
By the change-of-variables formula for pushforward measures and the norm preservation of $G$,
\begin{equation}
    \ell_{\star,k}(\rho_0)
    = \int_{\mathcal{S}} \|\bm{\mathrm{s}}-\bm{\mathrm{s}}_\star\|_2^2\, \rho_k(\mathsf{d}\bm{\mathrm{s}})
    = \int_{\mathcal{S}} \|G^k(\bm{\mathrm{s}})-\bm{\mathrm{s}}_\star\|_2^2\, \rho_0(\mathsf{d}\bm{\mathrm{s}})
    = \int_{\mathcal{S}} \|\bm{\mathrm{s}}-\bm{\mathrm{s}}_\star\|_2^2\, \rho_0(\mathsf{d}\bm{\mathrm{s}})
    = \ell_{\star,0}(\rho_0)
\end{equation}
for all $k\geq 0$.
This constant is finite by the second-moment assumption.
It is also positive: norm preservation gives $\|G(\bm{\mathrm{s}}_\star)-\bm{\mathrm{s}}_\star\|_2=0$, so $G$ fixes $\bm{\mathrm{s}}_\star$ and the Dirac measure $\delta_{\bm{\mathrm{s}}_\star}$ has a constant orbit, whereas the orbit of $\rho_0$ contains two distinct measures; hence, $\rho_0\neq\delta_{\bm{\mathrm{s}}_\star}$.
Thus, $\{\ell_{\star,k}(\rho_0)\}_{k=0}^{\infty}$ is lower bounded, nonincreasing, and convergent.

Since $G^m$ is the identity map, $\rho_{k+m}=\rho_k$ for all $k\geq0$.
For each $j\in\{0,\ldots,m-1\}$, the subsequence $\{\rho_{j+im}\}_{i=0}^{\infty}$ is constant and equal to $G_{\#}^{j}\rho_0$, and at least two of these constants differ.
Since weak limits are unique, $\{\rho_k\}_{k=0}^{\infty}$ does not converge weakly to any probability measure.
Moreover, let $\mathbb{D}$ be any dissimilarity measure with $\mathbb{D}(\rho\|\tilde{\rho})=0$ only if $\rho=\tilde{\rho}$.
For any probability measure $\tilde{\rho}$, the values $\mathbb{D}(\rho_k\|\tilde{\rho})$ cycle through the finitely many values $\mathbb{D}(G_{\#}^{j}\rho_0\|\tilde{\rho})$, $j=0,\ldots,m-1$, each occurring for infinitely many $k$.
If $\mathbb{D}(\rho_k\|\tilde{\rho})\rightarrow0$ held, all these values would vanish, and the separation property would give $G_{\#}^{j}\rho_0=\tilde{\rho}$ for every $j$, contradicting the assumption that the orbit contains two distinct measures.
Hence, $\{\rho_k\}_{k=0}^{\infty}$ does not converge in the sense of $\mathbb{D}$ either.

\namedref{Figure}{fig:counterexample} illustrates the example for the point reflection in one dimension with schematic densities.
Panel~(a) shows an initial distribution $\rho_0$ that is not symmetric about $\bm{\mathrm{s}}_\star$, its pushforward $G_{\#}\rho_0$, and the quadratic cost $\|\bm{\mathrm{s}}-\bm{\mathrm{s}}_\star\|_2^2$ (dotted), which is symmetric about $\bm{\mathrm{s}}_\star$.
In panel~(b), the state distributions alternate, $\rho_{2i}=\rho_0$ and $\rho_{2i+1}=G_{\#}\rho_0$ (each small axis is centered at $\bm{\mathrm{s}}_\star$), while the expected stage cost stays equal to $\ell_{\star,0}(\rho_0)$.

\begin{figure}[t]
    \centering
    \definecolor{figState}{HTML}{0072B2}
    \definecolor{figDist}{HTML}{D55E00}
    \definecolor{figPurple}{HTML}{CC79A7}
    \definecolor{figGray}{HTML}{595959}
    \pgfmathdeclarefunction{rhozero}{1}{\pgfmathparse{0.72*exp(-(((#1)-1.35)^2)/(2*0.33^2))+0.3*exp(-(((#1)-0.45)^2)/(2*0.45^2))}}
    \pgfmathdeclarefunction{rhoone}{1}{\pgfmathparse{rhozero(-(#1))}}
    \begin{tikzpicture}[
        font=\footnotesize,
        >={Stealth[length=4pt,width=3.4pt]},
        panel/.style={font=\footnotesize, anchor=west, inner sep=0pt},
        rhoA/.style={draw=figDist, line width=0.8pt},
        rhoB/.style={draw=figPurple!85!black, line width=0.8pt, dash pattern=on 2.6pt off 1.4pt},
        fillA/.style={fill=figDist!14},
        fillB/.style={fill=figPurple!16},
    ]
        \node[panel] at (0,2.75) {\textbf{(a)} point reflection $G(\bm{\mathrm{s}})=2\bm{\mathrm{s}}_\star-\bm{\mathrm{s}}$};
        \begin{scope}[xshift=3.2cm]
            \fill[fillA] plot[domain=-1.05:3, samples=100, smooth] (\x, {1.6*rhozero(\x)}) -- (3,0) -- (-1.05,0) -- cycle;
            \fill[fillB] plot[domain=-3:1.05, samples=100, smooth] (\x, {1.6*rhoone(\x)}) -- (1.05,0) -- (-3,0) -- cycle;
            \draw[black!45, line width=0.6pt, dash pattern=on 0.6pt off 1.2pt] plot[domain=-3:3, samples=60, smooth] (\x, {0.12*\x*\x});
            \draw[line width=0.4pt, black!75] (-3.1,0) -- (3.1,0);
            \draw[black!55, line width=0.4pt, dash pattern=on 1.5pt off 1.5pt] (0,0) -- (0,1.5);
            \draw[rhoA] plot[domain=-1.05:3, samples=100, smooth] (\x, {1.6*rhozero(\x)});
            \draw[rhoB] plot[domain=-3:1.05, samples=100, smooth] (\x, {1.6*rhoone(\x)});
            \node[anchor=north, inner sep=2pt] at (0,0) {$\bm{\mathrm{s}}_\star$};
            \node[text=figDist!80!black] at (1.2,0.5) {$\rho_0$};
            \node[text=figPurple!70!black] at (-1.2,0.5) {$G_{\#}\rho_0$};
            \node[text=black!60, anchor=south east, font=\scriptsize, inner sep=1pt] at (3.05,1.12) {$\|\bm{\mathrm{s}}-\bm{\mathrm{s}}_\star\|_2^2$};
            \draw[->, line width=0.6pt, black!75] (1.25,1.42) to[out=150, in=30] node[above, inner sep=1.5pt] {$G$} (-1.25,1.42);
        \end{scope}
        \node[panel] at (7.35,2.75) {\textbf{(b)} distributions $\rho_k=G_{\#}^{k}\rho_0$ and expected cost $\ell_{\star,k}(\rho_0)$};
        \foreach \k in {0,...,4} {
            \begin{scope}[xshift={8.35cm+\k*1.62cm}, yshift=1.05cm]
                \draw[line width=0.4pt, black!75] (-0.72,0) -- (0.72,0);
                \draw[black!55, line width=0.4pt] (0,-0.05) -- (0,0.05);
                \pgfmathparse{int(mod(\k,2))}
                \ifnum\pgfmathresult=0
                    \fill[fillA] plot[domain=-0.26:0.72, samples=60, smooth] (\x, {0.95*rhozero(\x/0.24)}) -- (0.72,0) -- (-0.26,0) -- cycle;
                    \draw[rhoA, line width=0.7pt] plot[domain=-0.26:0.72, samples=60, smooth] (\x, {0.95*rhozero(\x/0.24)});
                    \node[anchor=north, inner sep=2pt, text=figDist] at (0,0) {$\rho_0$};
                \else
                    \fill[fillB] plot[domain=-0.72:0.26, samples=60, smooth] (\x, {0.95*rhoone(\x/0.24)}) -- (0.26,0) -- (-0.72,0) -- cycle;
                    \draw[rhoB, line width=0.7pt] plot[domain=-0.72:0.26, samples=60, smooth] (\x, {0.95*rhoone(\x/0.24)});
                    \node[anchor=north, inner sep=2pt, text=figPurple!85!black] at (0,0) {$G_{\#}\rho_0$};
                \fi
            \end{scope}
        }
        \draw[line width=0.4pt, black!75] (7.55,-0.35) -- (16.05,-0.35);
        \node[anchor=west, inner sep=1pt] at (16.05,-0.35) {$k$};
        \draw[line width=0.7pt, black!75] (8.35,0.12) -- (8.35+4*1.62,0.12);
        \draw[line width=0.7pt, black!75, dash pattern=on 1pt off 1.5pt] (8.35+4*1.62,0.12) -- (15.95,0.12);
        \foreach \k in {0,...,4} {
            \fill[black!80] ({8.35+\k*1.62},0.12) circle (1.6pt);
            \draw[line width=0.4pt, black!75] ({8.35+\k*1.62},-0.39) -- ({8.35+\k*1.62},-0.31);
            \node[anchor=north, inner sep=2pt] at ({8.35+\k*1.62},-0.39) {$\k$};
        }
        \node[anchor=east, inner sep=2pt] at (7.8,0.12) {$\ell_{\star,k}(\rho_0)$};
    \end{tikzpicture}
    \caption{Counterexample for a point reflection.}
    \label{fig:counterexample}
\end{figure}

\subsubsection{Recovering Successful Norm-Based Stabilization}
\label{appendix:recovery_nmdp}
The example does not mean that a norm-based MDP can never stabilize a probability distribution, only that the scalar objective alone does not guarantee it.
\namedref{Corollary}{coro:dmdp_recovers_successful_nmdp} shows that the DMDP formulation recovers every successful norm-based stabilization whose distributional dynamics an admissible DMDP policy can represent.

\begin{mycoro}[\textbf{Recovery of successful norm-based distributional stabilization}]
    \label{coro:dmdp_recovers_successful_nmdp}
    Let $\pi_\star$ be an optimal policy of the norm-based MDP in \namedref{Problem}{eq:prob_norm_based_mdp}.
    Suppose that $\pi_\star$ is $\mathbb{D}$-asymptotically stable with respect to $\rho_\star$ on a set $\mathcal{P}\subseteq M(\mathcal{S})$ that is forward invariant under $\mathcal{T}_{\tau,\pi_\star}$.
    Suppose further that the measure-level dynamics induced by $\pi_\star$ are representable in the DMDP formulation, i.e., that there exists an admissible DMDP policy $\psi_\pi\in\bm{\Psi}$ such that
    \begin{equation}
        \mathcal{T}_{\tau,\psi_\pi(\rho)}\rho
        =
        \mathcal{T}_{\tau,\pi_\star}\rho,
        \quad \forall \rho\in\mathcal{P}.
        \label{eq:dmdp_represent_nmdp_transition_cor}
    \end{equation}
    Then, the DMDP admits a policy, namely $\psi_\pi$, that is $\mathbb{D}$-asymptotically stable with respect to $\rho_\star$ on $\mathcal{P}$.
\end{mycoro}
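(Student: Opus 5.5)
The plan is a trajectory-identification argument, using no optimality or Lyapunov machinery. Fix $\rho_0\in\mathcal{P}$. I will write $\{\rho_k^{\pi}\}$ for the closed-loop sequence generated by $\rho_{k+1}^{\pi}=\mathcal{T}_{\tau,\pi_\star}\rho_k^{\pi}$, and $\{\rho_k^{\psi}\}$ for the DMDP sequence generated by $\rho_{k+1}^{\psi}=\mathcal{T}_{\tau,\psi_\pi(\rho_k^{\psi})}\rho_k^{\psi}$, both started at $\rho_0^{\pi}=\rho_0^{\psi}=\rho_0$. The goal is to show that the two sequences coincide for all $k\geq 0$. Then every quantity entering \namedref{Definition}{def:D_stability}, namely $\mathbb{D}(\rho_k\|\rho_\star)$ for all $k$, is the same for both. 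The $\mathbb{D}$-stability and $\mathbb{D}$-attractivity of $\pi_\star$ on $\mathcal{P}$ then transfer verbatim to $\psi_\pi$.

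The key step is an induction on $k$ proving the joint statement $\rho_k^{\psi}=\rho_k^{\pi}$ and $\rho_k^{\pi}\in\mathcal{P}$. The base case holds by construction and because $\rho_0\in\mathcal{P}$. For the inductive step, assume $\rho_k^{\psi}=\rho_k^{\pi}\in\mathcal{P}$. The representability identity \eqref{eq:dmdp_represent_nmdp_transition_cor} applies at $\rho=\rho_k^{\pi}$ and gives $\rho_{k+1}^{\psi}=\mathcal{T}_{\tau,\psi_\pi(\rho_k^{\pi})}\rho_k^{\pi}=\mathcal{T}_{\tau,\pi_\star}\rho_k^{\pi}=\rho_{k+1}^{\pi}$. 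Forward invariance of $\mathcal{P}$ under $\mathcal{T}_{\tau,\pi_\star}$ gives $\rho_{k+1}^{\pi}\in\mathcal{P}$, which closes the induction.

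The only delicate point is this bookkeeping. Representability is assumed only on $\mathcal{P}$, so the induction must carry membership in $\mathcal{P}$ along with equality of the measures; forward invariance is exactly what supplies that. One also notes that the DMDP closed loop is uniquely determined by $\psi_\pi$ and $\rho_0$, as stated after the definition of $\bm{\Psi}$, so the identified sequence is indeed the DMDP trajectory.

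With the trajectories identified, the conclusion follows directly. Given $\varepsilon>0$, take the same $\delta(\varepsilon)$ that works for $\pi_\star$. Since $\rho_0^{\psi}=\rho_0$, the hypothesis $\mathbb{D}(\rho_0\|\rho_\star)\leq\delta$ yields $\mathbb{D}(\rho_k^{\psi}\|\rho_\star)=\mathbb{D}(\rho_k^{\pi}\|\rho_\star)\leq\varepsilon$ for all $k$. The limit $\lim_k\mathbb{D}(\rho_k^{\psi}\|\rho_\star)=0$ holds for the same reason, so $\psi_\pi$ is $\mathbb{D}$-asymptotically stable with respect to $\rho_\star$ on $\mathcal{P}$. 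The optimality of $\pi_\star$ is not used; it only motivates the statement.
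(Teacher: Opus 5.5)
Your proposal is correct and follows the paper's proof essentially step for step. Both argue by induction that the DMDP trajectory under $\psi_\pi$ coincides with the closed-loop trajectory under $\pi_\star$, use forward invariance of $\mathcal{P}$ so that the representability identity applies at every step, and then transfer $\mathbb{D}$-stability and $\mathbb{D}$-attractivity directly; carrying $\rho_k^{\pi}\in\mathcal{P}$ explicitly in the induction hypothesis is only slightly tidier bookkeeping than the paper's.
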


\begin{proof}
Let $\psi_\pi\in\bm{\Psi}$ be the policy in~\eqref{eq:dmdp_represent_nmdp_transition_cor}, so that
\begin{equation}
    \mathcal{T}_{\tau,\psi_\pi(\rho)}\rho = \mathcal{T}_{\tau,\pi_\star}\rho, \quad \forall \rho\in\mathcal{P}.
    \label{eq:proof_dmdp_embedding_nmdp_transition}
\end{equation}
Fix $\rho_0\in\mathcal{P}$, and let $\{\rho_k^{\pi_\star}\}_{k=0}^{\infty}$ and $\{\rho_k^{\psi_\pi}\}_{k=0}^{\infty}$ denote the state-distribution sequences generated from $\rho_0$ by $\pi_\star$ and $\psi_\pi$, respectively.
By induction, $\rho_k^{\psi_\pi}=\rho_k^{\pi_\star}$ holds for all $k\geq 0$.
The claim holds for $k=0$, since $\rho_0^{\psi_\pi}=\rho_0^{\pi_\star}=\rho_0$.
If the claim holds for some $k\geq 0$, then $\rho_k^{\pi_\star}\in\mathcal{P}$ by the forward invariance of $\mathcal{P}$ under $\mathcal{T}_{\tau,\pi_\star}$, and~\eqref{eq:proof_dmdp_embedding_nmdp_transition} gives
\begin{equation}
    \rho_{k+1}^{\psi_\pi}
    = \mathcal{T}_{\tau,\psi_\pi(\rho_k^{\psi_\pi})}\rho_k^{\psi_\pi}
    = \mathcal{T}_{\tau,\psi_\pi(\rho_k^{\pi_\star})}\rho_k^{\pi_\star}
    = \mathcal{T}_{\tau,\pi_\star}\rho_k^{\pi_\star}
    = \rho_{k+1}^{\pi_\star}.
\end{equation}
Hence, from every initial distribution in $\mathcal{P}$, both policies generate the same distribution sequence.
Since $\pi_\star$ is $\mathbb{D}$-asymptotically stable with respect to $\rho_\star$ on $\mathcal{P}$, its $\mathbb{D}$-stability carries over to $\psi_\pi$, and $\lim_{k\rightarrow\infty}\mathbb{D}(\rho_k^{\psi_\pi}\|\rho_\star)=\lim_{k\rightarrow\infty}\mathbb{D}(\rho_k^{\pi_\star}\|\rho_\star)=0$.
Therefore, $\psi_\pi$ is $\mathbb{D}$-asymptotically stable with respect to $\rho_\star$ on $\mathcal{P}$.
\end{proof}

\noindent
In contrast to the NMDP, which obtains distributional stability only indirectly through the closed-loop dynamics, the DMDP formulates distributional stabilization directly through the evolution of $\rho_k$ and the dissimilarity $\mathbb{D}(\rho_k\|\rho_\star)$.

\subsection{Relation to POMDPs and Belief-State MDPs}
\label{appendix:relation_to_existing_frameworks}

A POMDP agent is commonly decomposed into a state estimator, which updates a belief state from the previous belief, the executed action, and the new observation, and a policy, which maps the belief state to an action~\citep{Astrom1965,Kaelbling1998Planning}.
The belief state is a probability distribution over hidden states and a sufficient statistic of the past action-observation history.
This view underlies belief-dependent rewards~\citep{Araya2010}, belief-space planning~\citep{Platt2010}, and belief-like representations learned in deep reinforcement learning~\citep{Igl2018,Lambrechts2022,Luis2025}.
In the parameterized DMDP of \namedref{Problem}{eq:parameter_based}, the parameter $\omega_k$ represents $\rho_k$ through $\rho_{\omega_k}$, and the policy $\phi:\Omega\rightarrow\mathcal{A}$ acts on this representation, much as a POMDP policy acts on a belief state.
However, the two distributions differ intrinsically.
In a POMDP, the belief state represents epistemic uncertainty about an unobserved physical state, or about unknown model parameters included in the state, and serves as an information state for maximizing the expected return.
It remains a pathwise quantity, i.e., it is computed from the observations of a single realization, and the belief-state MDP is stochastic because the belief depends on future random observations.
In the DMDP, $\rho_k$ is the distribution of the state across realizations rather than a pathwise quantity, the action is common to all realizations, and $\rho_k$ evolves deterministically through the distribution transition operator in~\eqref{eq:def_state_transition_operator}.
This distributional state results from the stage cost, as \namedref{Appendix}{appendix:relation_measure_functional} explains, rather than from missing information, and $\rho_k$ is the controlled object, which the policy stabilizes toward the target distribution $\rho_\star$ in the sense of $\mathbb{D}(\rho_k\|\rho_\star)$.

\subsection{Relation to Measure-Functional MDPs}
\label{appendix:relation_measure_functional}

Measure-functional MDPs~\citep{Gros_2022,Kordabad2022} analyze the stability of stochastic MDPs on the space of probability measures rather than solely on the state space.
There, a state-feedback law $\bm{\mathrm{a}}_k=\pi(\bm{\mathrm{s}}_k)$ induces the measure sequence $\rho_{k+1}=\mathcal{T}_{\tau,\pi}\rho_k$, stage costs and storage functions are generalized to functionals of probability measures, and stability is characterized through a dissimilarity measure.
State feedback keeps these works within the standard MDP setting; it is a modeling choice rather than a restriction of their dissipativity theory, and \namedref{Table}{tab:related_work_comparison} therefore marks distribution feedback with (\cmark).
Whether state feedback suffices is decided by the stage cost: for an expected per-state cost, which is linear in the distribution, an optimal state-feedback policy exists under standard conditions~\citep{Puterman1994,HernandezLerma1996}, whereas a stage cost such as $\mathbb{D}(\rho_k\|\rho_\star)$ generally makes the optimal action depend on the whole distribution, e.g., on its variance, concentration, tail behavior, or multimodality.
Distribution feedback thus follows from the DMDP stage cost, not from a design preference, which is why the DMDP takes $\rho_k$ as the decision state of the policy $\bm{\mathrm{a}}_k=\psi(\rho_k)$, or $\bm{\mathrm{a}}_k=\phi(\omega_k)$ in the parameterized formulation.
Conversely, as formalized in \namedref{Corollary}{coro:dmdp_recovers_successful_nmdp}, an admissible DMDP policy that represents the stabilizing dynamics $\mathcal{T}_{\tau,\pi}$ on $M(\mathcal{S})$ induced by a state-feedback policy $\pi$ reproduces the same trajectory of probability measures.
Dissipativity and turnpike notions have also been extended to random variables and probability measures for stochastic nonlinear optimal control~\citep{Schiessl2025TAC} and, with a stationary stochastic process in place of the optimal steady state, for generalized linear-quadratic problems~\citep{Schiessl2025SICON}; for stochastic economic MPC, they yield closed-loop stability and performance guarantees~\citep{Schiessl2024CDC,Schiessl2025arXiv}.
These works analyze optimal solutions and MPC closed loops; the DMDP uses dissipativity through the cost rotation in \namedref{Section}{sec:cost_rotation}, and \namedref{Section}{sec:stablization_policy_learning} extends the Lyapunov certificates of \namedref{Section}{sec:lyapunov_certificates} to learned distribution-feedback policies.

\subsection{Relation to Distributional Reinforcement Learning}
\label{appendix:relation_distributional_rl}

Distributional reinforcement learning~\citep{Bellemare2017,Bellemare2023} usually models the distribution of the random return rather than only its expectation; this richer representation of long-term reward uncertainty can improve value learning or risk-sensitive decision-making~\citep{Dabney2018}.
The DMDP instead models and controls the distribution $\rho_k$ of the system state rather than a representation of the value function or return, with the stability objective $\mathbb{D}(\rho_k\|\rho_\star)\rightarrow 0$.
This distinction matters when reliability, safety, or personalization must be specified at the distribution level: two policies may have similar expected costs or return distributions while inducing very different state distributions, with different confidence regions, tail risks, or multimodal behaviors, and the DMDP captures these differences directly.

\subsection{Further Related Approaches}
\label{appendix:further_related}

Lyapunov functions of the state have been used to certify the stability or safety of learned policies~\citep{Perkins2002,Berkenkamp2017,Chow2018,Chang2019,Han2020}, and \citet{Busoniu2018} and \citet{Krasowski2023} survey stability and provable safety in reinforcement learning.
These certificates concern the realized state rather than its distribution.

Mean-field MDPs have the distribution of a large population of agents as their state~\citep{Motte2022,Bauerle2023}, and reinforcement learning methods for mean-field control take this population distribution as a policy input~\citep{Carmona2023,Pasztor2023}.
These methods do not establish its stability.

Distribution steering drives the state distribution of a linear stochastic system to a prescribed Gaussian distribution at a finite final time, either as a terminal constraint~\citep{Chen2016} or through a Wasserstein terminal cost~\citep{Balci2021}.
Optimal transport has also been used to coordinate large populations of controlled agents~\citep{LeCorre2026}.

Other objectives are defined on the distribution of the states, or state-action pairs, visited by a policy and aggregated over time, such as the entropy of the state-visitation distribution~\citep{Hazan2019}, the matching of a target state marginal distribution~\citep{Lee2019}, and convex functions of the stationary state-action distribution~\citep{Zahavy2021}.

\section{Distribution Estimation and Model Consistency}
\label{appendix:distribution_estimation}

\subsection{Estimation of the Distribution Parameter \texorpdfstring{$\hat{\omega}_k$}{omega-hat k}}
\label{appendix:omega_estimation_procedure}

One possible procedure for the parameter-estimation step of \namedref{Section}{sec:approximate_dynamics_model} is given below.
Fix $k\in\{0,1,\ldots,T\}$, and let $c_k$ be the context of time step $k$, e.g., the time index or another grouping variable that collects samples from the same state distribution $\rho_k=\rho_{\omega_k}$.
The density of $\rho_\omega$, $\omega\in\Omega$, is $p(\bm{\mathrm{s}};\omega)$.
\begin{enumerate}
    \item \textbf{Conditional density estimation.}
    From the offline dataset $\mathcal{D}_{\mathsf{off}}$ in~\eqref{eq:data_time_series}, estimate the conditional density $\hat{p}_{\bm{\mathrm{s}}}^{N}(\bm{\mathrm{s}}\mid c_k)$ of the state sample $\bm{\mathrm{s}}$ given $c_k$, e.g., by kernel conditional density estimation~\citep{Hyndman1996}, normalizing flows~\citep{Papamakarios2021}, neural density estimators~\citep{Papamakarios2017}, or another consistent density estimator.
    \item \textbf{Sample generation.}
    Draw $S$ independent samples $\tilde{\bm{\mathrm{s}}}_{k}^{(1)},\ldots,\tilde{\bm{\mathrm{s}}}_{k}^{(S)}\sim\hat{p}_{\bm{\mathrm{s}}}^{N}(\cdot\mid c_k)$.
    \item \textbf{Parameter estimation.}
    Estimate $\omega_k$ by maximum likelihood:
    \begin{equation}
        \label{eq:prob_max_likelihood_omega}
        \hat{\omega}_{k}^{S,N}
        :=
        \arg\max_{\omega\in\Omega}
        \frac{1}{S}
        \sum_{j=1}^{S}
        \log p(\tilde{\bm{\mathrm{s}}}_{k}^{(j)};\omega).
    \end{equation}
\end{enumerate}
The superscript $(S,N)$ indicates the dependence on the offline dataset size $N$ and the number $S$ of generated samples; the main text writes $\hat{\omega}_k$.

\subsection{An Example Pipeline for Model Consistency}
\label{appendix:example_pipeline_model_consistency}

\namedref{Proposition}{prop:asymptotic_recovery_rotated_optimizers} assumes the uniform model consistency~\eqref{eq:uniform_model_consistency_main} on the compact set $\Omega_{\mathsf{com}}\times\mathcal{A}$.
The pipeline below, one possible implementation that does not restrict the framework, shows how standard statistical estimators meet this condition under suitable regularity and coverage assumptions.

\paragraph{Step 1: Conditional density estimation.}
For each context $c$, defined as in \namedref{Appendix}{appendix:omega_estimation_procedure}, the conditional density of the state sample $\bm{\mathrm{s}}$ in $\mathcal{D}_{\mathsf{off}}$ is $p_{\bm{\mathrm{s}}}(\bm{\mathrm{s}}\mid c)=p(\bm{\mathrm{s}};\omega_c)$ with the distribution parameter $\omega_c\in\Omega$.
Least-squares conditional density estimation (LS-CDE)~\citep{sugiyama10a} approximates $p_{\bm{\mathrm{s}}}(\bm{\mathrm{s}}\mid c)$ by $\bm{\beta}^\top\bm{\varphi}(\bm{\mathrm{s}},c)$, where $\bm{\varphi}(\bm{\mathrm{s}},c):=[\varphi_1(\bm{\mathrm{s}},c),\ldots,\varphi_b(\bm{\mathrm{s}},c)]^\top$ collects nonnegative basis functions and $\bm{\beta}\in\mathbb{R}^b$ is a coefficient vector.
For the flattened samples $\{(\bm{\mathrm{s}}_j,c_j)\}_{j=1}^{N}$ of $\mathcal{D}_{\mathsf{off}}$ and a regularization parameter $\lambda_N$, the regularized empirical LS-CDE objective is
\begin{equation}
    \widehat{\mathcal{J}}_{\lambda_N}(\bm{\beta})
    :=
    \bm{\beta}^\top\widehat{\bm{\mathrm{H}}}\bm{\beta}
    -
    2\widehat{\bm{\mathrm{h}}}^{\top}\bm{\beta}
    +
    \lambda_N\bm{\beta}^\top\bm{\beta},
\end{equation}
where $\widehat{\bm{\mathrm{H}}}:=\frac{1}{N}\sum_{j=1}^{N}\int\bm{\varphi}(\bm{\mathrm{s}},c_j)\bm{\varphi}(\bm{\mathrm{s}},c_j)^\top\mathsf{d}\bm{\mathrm{s}}$ and $\widehat{\bm{\mathrm{h}}}:=\frac{1}{N}\sum_{j=1}^{N}\bm{\varphi}(\bm{\mathrm{s}}_j,c_j)$.
Its minimizer has the closed form $\tilde{\bm{\beta}}=(\widehat{\bm{\mathrm{H}}}+\lambda_N\bm{\mathrm{I}}_b)^{-1}\widehat{\bm{\mathrm{h}}}$, and the normalized conditional density estimator is
\begin{equation}
    \hat{p}_{\bm{\mathrm{s}}}^{N}(\bm{\mathrm{s}}\mid c)
    :=
    \frac{
    [\tilde{\bm{\beta}}^\top\bm{\varphi}(\bm{\mathrm{s}},c)]_{+}
    }{
    \int
    [\tilde{\bm{\beta}}^\top\bm{\varphi}(\bm{\mathrm{s}},c)]_{+}
    \mathsf{d}\bm{\mathrm{s}}
    },
\end{equation}
where $[x]_{+}:=\max\{x,0\}$ ensures nonnegativity.
Under the regularity conditions of LS-CDE, such as an appropriate choice of $\lambda_N$ and a sufficiently rich basis, $\|\hat{p}_{\bm{\mathrm{s}}}^{N}(\cdot\mid c)-p_{\bm{\mathrm{s}}}(\cdot\mid c)\|_{2}\rightarrow0$ in probability and, under strengthened consistency conditions, almost surely; the almost-sure version is assumed below.

\paragraph{Step 2: Parameter estimation by maximum likelihood.}
As in \namedref{Appendix}{appendix:omega_estimation_procedure}, $S$ independent samples $\tilde{\bm{\mathrm{s}}}_{c}^{(1)},\ldots,\tilde{\bm{\mathrm{s}}}_{c}^{(S)}\sim\hat{p}_{\bm{\mathrm{s}}}^{N}(\cdot\mid c)$ give the maximum-likelihood estimate
\begin{equation}
    \label{eq:mle_omega_pipeline}
    \hat{\omega}_{c}^{S,N}
    :=
    \arg\max_{\omega\in\Omega_{\mathsf{com}}}
    \frac{1}{S}
    \sum_{j=1}^{S}
    \log p(\tilde{\bm{\mathrm{s}}}_{c}^{(j)};\omega).
\end{equation}
Its consistency follows a standard route~\citep{Newey1994,vanderVaart1998}.
Define the population log-likelihoods
\begin{equation}
    Q_N(\omega)
    :=
    \int
    \log p(\bm{\mathrm{s}};\omega)
    \hat{p}_{\bm{\mathrm{s}}}^{N}(\bm{\mathrm{s}}\mid c)
    \mathsf{d}\bm{\mathrm{s}},
    \quad
    Q(\omega)
    :=
    \int
    \log p(\bm{\mathrm{s}};\omega)
    p_{\bm{\mathrm{s}}}(\bm{\mathrm{s}}\mid c)
    \mathsf{d}\bm{\mathrm{s}}.
\end{equation}
Assume that $\Omega_{\mathsf{com}}$ is compact, the family $\{p(\cdot;\omega)\mid\omega\in\Omega_{\mathsf{com}}\}$ is identifiable, the true parameter $\omega_c$ is the unique maximizer of $Q$, and $\log p(\bm{\mathrm{s}};\omega)$ is continuous in $\omega$.
Assume further that an envelope $E$, integrable under $p_{\bm{\mathrm{s}}}(\cdot\mid c)$, satisfies $|\log p(\bm{\mathrm{s}};\omega)|\leq E(\bm{\mathrm{s}})$ for all $\omega\in\Omega_{\mathsf{com}}$, and that the density estimator converges in the $E$-weighted $L^1$ sense, i.e., $\int E(\bm{\mathrm{s}})\,|\hat{p}_{\bm{\mathrm{s}}}^{N}(\bm{\mathrm{s}}\mid c)-p_{\bm{\mathrm{s}}}(\bm{\mathrm{s}}\mid c)|\,\mathsf{d}\bm{\mathrm{s}}\rightarrow0$ almost surely.
Convergence in $L^2$ alone does not suffice, because the estimated densities may place vanishing mass far in the tails, where $|\log p(\bm{\mathrm{s}};\omega)|$ is large.
Since this weighted $L^1$ distance bounds $|Q_N(\omega)-Q(\omega)|$ for every $\omega\in\Omega_{\mathsf{com}}$, it follows that $\sup_{\omega\in\Omega_{\mathsf{com}}}|Q_N(\omega)-Q(\omega)|\rightarrow0$ almost surely.
By the argmax theorem~\citep{vanderVaart1996}, the population maximizer of $Q_N$ converges to $\omega_c$.
Moreover, conditional on $\hat{p}_{\bm{\mathrm{s}}}^{N}(\cdot\mid c)$, the empirical log-likelihood in~\eqref{eq:mle_omega_pipeline} converges uniformly to $Q_N$ as $S\rightarrow\infty$ under the same compactness and envelope conditions, provided $E$ is integrable under $\hat{p}_{\bm{\mathrm{s}}}^{N}(\cdot\mid c)$.
Hence, if $S=S(N)$ grows fast enough for this Monte Carlo error to vanish almost surely as $N\rightarrow\infty$, then $\hat{\omega}_{c}^{S,N}\rightarrow\omega_c$ almost surely as $N\rightarrow\infty$.

\paragraph{Step 3: Learning the parameter transition map.}
The estimated parameters form the transition dataset $\mathcal{D}_{\omega}^{S,N}:=\{(\hat{\omega}_{j}^{S,N},\bm{\mathrm{a}}_j,\hat{\omega}_{j,+}^{S,N})\}_{j=1}^{n_{\mathsf{tr}}}$, where $\hat{\omega}_{j}^{S,N}$ and $\hat{\omega}_{j,+}^{S,N}$ estimate the distribution parameters $\omega_j$ and $\omega_{j,+}=f(\omega_j,\bm{\mathrm{a}}_j)$ before and after the action $\bm{\mathrm{a}}_j$.
Since $\hat{\omega}_{j}^{S,N}\rightarrow\omega_j$ and $\hat{\omega}_{j,+}^{S,N}\rightarrow\omega_{j,+}$ almost surely, $\mathcal{D}_{\omega}^{S,N}$ provides asymptotically correct training data for $f$.
Consider Gaussian process regression~\citep{Rasmussen2006} with a continuous universal kernel $\mathsf{Ker}$~\citep{Micchelli2006} on the compact domain $\mathcal{X}_{\mathsf{com}}:=\Omega_{\mathsf{com}}\times\mathcal{A}$, with one Gaussian process per component of $f:\Omega\times\mathcal{A}\rightarrow\Omega$.
Assume that each component of $f$ belongs to the reproducing kernel Hilbert space (RKHS) of $\mathsf{Ker}$, that the exact design points $\{(\omega_j,\bm{\mathrm{a}}_j)\}_{j=1}^{n_{\mathsf{tr}}}$ become dense in $\mathcal{X}_{\mathsf{com}}$ as $n_{\mathsf{tr}}\rightarrow\infty$, and that the input perturbations caused by parameter estimation vanish uniformly, i.e., $\max_{1\leq j\leq n_{\mathsf{tr}}}\|\hat{\omega}_{j}^{S,N}-\omega_j\|\rightarrow0$ almost surely.
Under these conditions and a suitable regularization and noise schedule, the regression is assumed to be uniformly consistent, i.e., the learned model $\hat{f}_{\mathsf{off}}^{S,N}$ satisfies~\eqref{eq:uniform_model_consistency_main} in place of $\hat{f}_{\mathsf{off}}^{N}$.
Error bounds for Gaussian process regression and kernel interpolation~\citep{Wendland2004,Srinivas2012,Lederer2019} are available for exactly observed inputs; since the inputs here are estimated, uniform consistency is assumed rather than derived from them.

\paragraph{Consequence.}
The pipeline reduces the model-consistency condition~\eqref{eq:uniform_model_consistency_main} to two requirements: (i) consistent estimates of the distribution parameters, which Steps 1 and 2 provide under the stated conditions; and (ii) uniform consistency of the regression with estimated inputs, which is assumed.
\namedref{Appendix}{appendix:toy_example} applies Steps 2 and 3 to a scalar linear example, in which the model error decreases as the data grow.

\section{A Computable Example}
\label{appendix:toy_example}

This appendix tests \namedref{Theorems}{theo:parameter_stability_means_distribution_stability}--\ref{theo:model_free_online_policy_learning} and \namedref{Proposition}{prop:asymptotic_recovery_rotated_optimizers} on a system whose hypotheses can all be checked, as \namedref{Table}{tab:toy_hypotheses} shows, and whose certificates can be computed because the model is known.

\begin{table}[!ht]
    \centering
    \caption{Hypotheses in the example.}
    \label{tab:toy_hypotheses}
    {\footnotesize\setlength{\tabcolsep}{3pt}
\begin{tabular}{@{}>{\raggedright\arraybackslash}p{0.10\linewidth}>{\raggedright\arraybackslash}p{0.42\linewidth}>{\raggedright\arraybackslash}p{0.14\linewidth}>{\raggedright\arraybackslash}p{0.30\linewidth}@{}}
\toprule
Result & Hypothesis & Status & Evidence \\
\midrule
All & closure; $\mathbb{D}$-compatibility with $\alpha_1(r)=\alpha_2(r)=r^2$ & exact & Gaussian AR(1); $\mathbb{D}=W_2^2=r^2$ \\
All & $\Omega_{\mathsf{com}}$ compact and positively invariant under every closed loop & exact; for $\widehat F_{\mathsf{off}}$ by construction & Lemma~\ref{lemma:toy_facts}; $\hat f_{\mathsf{off}}$ projected \\
Thm.~\ref{theo:parameter_stability_means_distribution_stability} & optimal $\phi_\star$; $U_\star^{\mathsf{R}}$ finite, continuous, Bellman identity; $F_{\phi_\star}(\omega_\star)=\omega_\star$; $\kappa_1(r)=r^2$, $\kappa_2(r)=2.56\,r^2$ & exact & Lemma~\ref{lemma:toy_facts}; $\max\tilde U/\mathbb{D}=1.57$ \\
Thms.~\ref{theo:stability_finite_samples}--\ref{theo:model_free_online_policy_learning} & $U_\star^{\mathsf{R}}$ and the sups over $\Omega_{\mathsf{com}}$ & numerical & $\tilde U$ with $\varepsilon_{\mathrm{num}}=1.4\times10^{-3}$; maxima over 11,822 points (20,016 for $e_f$); 80,000 independent points change $\Delta$, $\hat c$, $\Delta_{\mathsf{dec},n}$ of the deployed run by at most $1.9\%$ \\
Thm.~\ref{theo:stability_finite_samples} & $F_{\mathsf{off}}(\omega_\star)=\omega_\star$ & fails & offset $>0$; not used by~\eqref{eq:true_lyapunov_decrease_with_residual} \\
Thm.~\ref{theo:stability_finite_samples} & learned margin with $\sigma(r)=0.75\,r^2$; $\Delta\le\bar\Delta$ with probability $\ge0.9$ & computed; fails at $n_{\mathsf{tr}}=100$ & per run; maximum of 49 runs, confidence $0.994$ \\
Thms.~\ref{theo:online_plugin_stability}--\ref{theo:model_free_online_policy_learning} & estimation event for all $t\ge0$ with probability $\ge0.95$ & exact & growing $M_t$ \\
Thm.~\ref{theo:model_free_online_policy_learning} & $W_n=\mathbb{D}$: bounds and continuity & exact & -- \\
Thm.~\ref{theo:model_free_online_policy_learning} & empirical decrease with $\sigma_n(r)=0.25\,r^2$; $\Delta_{\mathsf{dec},n}\le\bar\Delta_{\mathsf{dec},n}$ with probability $\ge0.9$ & by construction; computed & capped estimate; maximum of 49 runs, with the true $f$ \\
Thms.~\ref{theo:stability_finite_samples}--\ref{theo:model_free_online_policy_learning} & asymptotic clause & fails & residual sets only \\
Prop.~\ref{prop:asymptotic_recovery_rotated_optimizers} & uniform model consistency & trend & $e_f$ \\
Prop.~\ref{prop:asymptotic_recovery_rotated_optimizers} & $\Theta$ compact, $\phi$ continuous; $J^{\mathsf{R}}$ finite and continuous & exact; fails & node actions; $J^{\mathsf{R}}$ infinite with an offset \\
\bottomrule
\end{tabular}}

\end{table}

\paragraph{System.}
The scalar state follows $s_{k+1}=a_{k,1}s_k+(1-a_{k,1})a_{k,2}+\epsilon_k$ with i.i.d.\ $\epsilon_k\sim\mathcal{N}(0,0.5^2)$ and an action $\bm{\mathrm{a}}_k=(a_{k,1},a_{k,2})\in\mathcal{A}=[0.70,0.95]\times[0.5,1.5]$ common to all realizations.
The Gaussian family $\rho_\omega=\mathcal{N}(\mu,\varsigma^2)$, $\omega=(\mu,\varsigma)\in\Omega=\mathbb{R}\times(0,\infty)$, is closed under this transition, so~\eqref{eq:parameter_dynamics} holds exactly with $f(\omega,\bm{\mathrm{a}})=\big(a_1\mu+(1-a_1)a_2,\,(a_1^2\varsigma^2+0.25)^{1/2}\big)$ for $\bm{\mathrm{a}}=(a_1,a_2)\in\mathcal{A}$.
Thus, every $\rho_k$ stays in the parameterized family, so the example does not test the approximation of distributions outside it, one of the most difficult aspects of the general problem; it tests the error terms and residual sets of \namedref{Section}{sec:stablization_policy_learning} under the errors of the learned parameter dynamics and value function, of the estimated decrease, and of the estimated parameters.

The target $\rho_\star=\rho_{\omega_\star}$, $\omega_\star=(1,0.8)$, is a steady state only under $\bm{\mathrm{a}}_\star=(\sqrt{39}/8,1)\approx(0.78,1)$.
The dissimilarity is the squared $2$-Wasserstein distance, $\mathbb{D}(\rho_\omega\|\rho_\star)=(\mu-1)^2+(\varsigma-0.8)^2=r^2$ with $r:=\|\omega-\omega_\star\|$; hence, \namedref{Assumption}{assump:D_compatible_parameterization} holds with $\alpha_1(r)=\alpha_2(r)=r^2$.
The operating set $\Omega_{\mathsf{com}}=[-1,3]\times[\varsigma_{\mathsf{lo}},\varsigma_{\mathsf{hi}}]$, bounded by the stationary standard deviations $\varsigma_{\mathsf{lo}}=0.5/\sqrt{0.51}\approx0.70$ at $a_1=0.70$ and $\varsigma_{\mathsf{hi}}=0.5/\sqrt{0.0975}\approx1.60$ at $a_1=0.95$, contains all steady states $\mu=a_2$, $\varsigma^2=0.25/(1-a_1^2)$.

With $\ell(\omega,\bm{\mathrm{a}})=\mathbb{D}(\rho_\omega\|\rho_\star)$, $\gamma=1$, and the storage function $\lambda=0$, the value $\ell_0=0$ in~\eqref{eq:parameter_based_optimal_steady} is attained only at $\omega_\star$, and $\ell^{\mathsf{R}}=\ell=r^2$.
The action $\bm{\mathrm{a}}_{\mathsf{db}}(\omega)$ with $a_{\mathsf{db},1}=\sqrt{0.39}/\varsigma$ and $a_{\mathsf{db},2}=(1-a_{\mathsf{db},1}\mu)/(1-a_{\mathsf{db},1})$ gives $f(\omega,\bm{\mathrm{a}}_{\mathsf{db}}(\omega))=\omega_\star$, and $\mathcal{R}:=\{\omega\in\Omega_{\mathsf{com}}\mid\bm{\mathrm{a}}_{\mathsf{db}}(\omega)\in\mathcal{A}\}$ is the set from which $\omega_\star$ is reached in one step.

\begin{mylemma}
    \label{lemma:toy_facts}
    (i) $f(\Omega_{\mathsf{com}}\times\mathcal{A})\subseteq\Omega_{\mathsf{com}}$.\\
    (ii) $\|f(\omega,\bm{\mathrm{a}}_\star)-\omega_\star\|^2\leq\frac{39}{64}\|\omega-\omega_\star\|^2$ for all $\omega\in\Omega$.\\
    (iii) $U_\star^{\mathsf{R}}$ is finite and continuous on $\Omega_{\mathsf{com}}$, with $r^2\leq U_\star^{\mathsf{R}}\leq2.56\,r^2$; an optimal stationary policy $\phi_\star$ exists; the Bellman identity~\eqref{eq:rotated_bellman_identity_parameter} holds on $\Omega_{\mathsf{com}}$; and $F_{\phi_\star}(\omega_\star)=\omega_\star$ with $\phi_\star(\omega_\star)=\bm{\mathrm{a}}_\star$.\\
    (iv) $U_\star^{\mathsf{R}}=\mathbb{D}$ and $\phi_\star=\bm{\mathrm{a}}_{\mathsf{db}}$ on $\mathcal{R}\supseteq\{\omega\in\Omega_{\mathsf{com}}\mid r\leq0.09\}$.
\end{mylemma}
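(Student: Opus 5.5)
For (i) we would treat the two components of $f$ separately. The mean update $a_1\mu+(1-a_1)a_2$ is a convex combination of $\mu\in[-1,3]$ and $a_2\in[0.5,1.5]\subset[-1,3]$, so it stays in $[-1,3]$. The standard-deviation update $\varsigma\mapsto(a_1^2\varsigma^2+0.25)^{1/2}$ is increasing in both $a_1$ and $\varsigma$, so its range over $\mathcal{A}\times[\varsigma_{\mathsf{lo}},\varsigma_{\mathsf{hi}}]$ is $[g(0.70,\varsigma_{\mathsf{lo}}),\,g(0.95,\varsigma_{\mathsf{hi}})]$. By the definitions of $\varsigma_{\mathsf{lo}}$ and $\varsigma_{\mathsf{hi}}$, these two corner values are exactly the fixed points $\varsigma_{\mathsf{lo}}$ and $\varsigma_{\mathsf{hi}}$, which gives (i).

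For (ii) we fix $\bm{\mathrm{a}}_\star$ and write $a_1^2=39/64$. The mean error satisfies $\mu_+-1=a_1(\mu-1)$ exactly. For the standard deviation, $0.25+a_1^2\cdot0.64=0.64$ gives $\varsigma_+^2-0.8^2=a_1^2(\varsigma^2-0.8^2)$, hence
\begin{equation*}
|\varsigma_+-0.8|=a_1^2\,|\varsigma-0.8|\,\frac{\varsigma+0.8}{\varsigma_++0.8}.
\end{equation*}
Since $\varsigma_+\geq a_1\varsigma$ and $0.8\geq a_1\cdot0.8$, we have $\varsigma_++0.8\geq a_1(\varsigma+0.8)$, so the ratio is at most $a_1$. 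Therefore both error components contract by the factor $a_1$, and squaring and summing gives the factor $39/64$.

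For (iii) we would proceed in four steps.
\begin{enumerate}
\item Lower bound: since $\ell^{\mathsf{R}}=r^2\geq0$, $U_\star^{\mathsf{R}}(\omega)$ is at least the first stage cost, i.e.\ $U_\star^{\mathsf{R}}\geq r^2$.
\item Upper bound: by (i), the constant policy $\bm{\mathrm{a}}_\star$ keeps trajectories in $\Omega_{\mathsf{com}}$. By (ii), its cost is at most $r^2\sum_k(39/64)^k=\tfrac{64}{25}r^2=2.56\,r^2$.
\item Existence and Bellman identity: we would use value iteration $U_{H+1}=\min_{\bm{\mathrm a}\in\mathcal{A}}[\ell+U_H\circ f]$ from $U_0=0$. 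Each $U_H$ is continuous because $\mathcal{A}$ is compact and $f,\ell$ are continuous, and the iterates increase. The step we expect to be the main obstacle is showing $U_H\to U_\star^{\mathsf{R}}$ uniformly on $\Omega_{\mathsf{com}}$; we return to it below. Granting uniform convergence, the limit is continuous and satisfies the Bellman equation. The minimizer in the Bellman equation exists by compactness and yields a stationary $\phi_\star$, and $\phi_\star$ is optimal because the costs are nonnegative and $U_\star^{\mathsf{R}}\leq2.56\,r^2$ kills the terminal term.
\item Fixed point: at $\omega_\star$ we have $U_\star^{\mathsf{R}}=0$, which forces $\ell=0$ along the trajectory, so the trajectory stays at $\omega_\star$. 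Hence $\phi_\star(\omega_\star)$ must be the unique steady action $\bm{\mathrm{a}}_\star$.
\end{enumerate}

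For uniform convergence we would bound the tail as follows. Along an $H$-step optimal path, $\sum_{k<H}r_k^2\leq U_H\leq 2.56\,\bar r^2$, where $\bar r$ is the diameter of $\Omega_{\mathsf{com}}$. Hence some $k\leq H$ has $r_k^2\leq 2.56\,\bar r^2/H$. Switching to $\bm{\mathrm{a}}_\star$ at that time gives
\begin{equation*}
U_\star^{\mathsf{R}}\leq U_H+2.56^2\,\bar r^2/H,
\end{equation*}
so $0\leq U_\star^{\mathsf{R}}-U_H=O(1/H)$ uniformly, as required.

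For (iv), on $\mathcal{R}$ the deadbeat action $\bm{\mathrm{a}}_{\mathsf{db}}$ attains total cost $r^2+0$. This matches the lower bound from (iii), so $U_\star^{\mathsf{R}}=\mathbb{D}$ there and $\bm{\mathrm{a}}_{\mathsf{db}}$ is optimal. For the inclusion we must check that $\bm{\mathrm{a}}_{\mathsf{db}}(\omega)\in\mathcal{A}$ when $r\leq0.09$. The condition $a_{\mathsf{db},1}=\sqrt{0.39}/\varsigma\in[0.70,0.95]$ is equivalent to $\varsigma\in[0.657,0.892]$, and $r\leq0.09$ gives $\varsigma\in[0.71,0.89]$. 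For $a_{\mathsf{db},2}=1+a_{\mathsf{db},1}(1-\mu)/(1-a_{\mathsf{db},1})$ we need $|a_{\mathsf{db},1}(1-\mu)/(1-a_{\mathsf{db},1})|\leq0.5$. The factor $a_{\mathsf{db},1}/(1-a_{\mathsf{db},1})$ is largest at the smallest $\varsigma$, and $|1-\mu|\leq0.09$. We would verify this by an explicit worst-case evaluation over the disc, for instance by maximizing over the boundary with $|1-\mu|$ and $\varsigma$ traded off along $r=0.09$. This numerical check is tight and is the second place where care is needed.
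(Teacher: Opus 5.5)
Your treatment of (i)--(iii) follows the paper: the same convex-combination and monotonicity check, the same contraction computation, and the same uniform-convergence device of switching to $\bm{\mathrm{a}}_\star$ at a step where an $H$-step optimal path has stage cost at most $U_H/H$ (the paper writes this as $\bar U\le(1+2.56/m)V_m$).

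One point of order in (iii). The switched action sequence is non-stationary, so it bounds $\bar U:=\lim_H U_H$ through $U_{H'}\le U_H+2.56^2\bar r^2/H$ for $H'>H$. It does not directly bound $U_\star^{\mathsf{R}}$, which is a minimum over stationary policies. You should identify $\bar U$ with $U_\star^{\mathsf{R}}$ only afterwards: the greedy policy attains $\bar U$, and every policy costs at least every $U_H$. The end of your step 3 effectively does this.

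The real omission is in (iv). Showing that $\bm{\mathrm{a}}_{\mathsf{db}}$ attains the lower bound $r^2$ proves $U_\star^{\mathsf{R}}=\mathbb{D}$ on $\mathcal{R}$. It also shows the deadbeat action is \emph{an} optimal choice. But the statement asserts $\phi_\star=\bm{\mathrm{a}}_{\mathsf{db}}$ for the optimal stationary policy of (iii), i.e., that every Bellman minimizer there is deadbeat. This is what lets the paper conclude that $\phi_\star$ reaches $\rho_\star$ exactly in finitely many steps. The missing step is short:
\begin{itemize}
\item on $\mathcal{R}$ the Bellman identity gives $U_\star^{\mathsf{R}}(F_{\phi_\star}(\omega))=U_\star^{\mathsf{R}}(\omega)-\mathbb{D}(\rho_\omega\|\rho_\star)=0$;
\item the lower bound $U_\star^{\mathsf{R}}\ge r^2$ then forces $F_{\phi_\star}(\omega)=\omega_\star$;
\item $f(\omega,\bm{\mathrm{a}})=\omega_\star$ has the unique solution $\bm{\mathrm{a}}_{\mathsf{db}}(\omega)$, since the $\varsigma$-equation fixes $a_1=\sqrt{0.39}/\varsigma$ (as $a_1>0$) and the $\mu$-equation then fixes $a_2$ (as $a_1\neq1$).
\end{itemize}

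Finally, the inclusion check is not tight. Only the decoupled bound $(0.88/0.12)\cdot0.09\approx0.66$ fails. Along $r=0.09$, the quantity $a_{\mathsf{db},1}|\mu-1|/(1-a_{\mathsf{db},1})$ peaks at $0.8-\varsigma=0.09^2/(0.8-\sqrt{0.39})$, i.e., $\varsigma\approx0.754$. Its value there is $\sqrt{0.39}\cdot0.09/\sqrt{(0.8-\sqrt{0.39})^2-0.09^2}\approx0.37$, comfortably below $0.5$; the paper records $\le0.38$.
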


\begin{proof}
(i) The new mean is a convex combination of $\mu\in[-1,3]$ and $a_2\in[0.5,1.5]$; the new standard deviation increases in $a_1$ and $\varsigma$ and has the fixed points $\varsigma_{\mathsf{lo}}$ at $a_1=0.70$ and $\varsigma_{\mathsf{hi}}$ at $a_1=0.95$.

\noindent (ii) For $(\mu^+,\varsigma^+):=f(\omega,\bm{\mathrm{a}}_\star)$, $\mu^+-1=a_{\star,1}(\mu-1)$ and $(\varsigma^+)^2-0.64=a_{\star,1}^2(\varsigma^2-0.64)$.
Since $\varsigma^+\geq a_{\star,1}\varsigma$ and $a_{\star,1}<1$, it follows that $|\varsigma^+-0.8|=a_{\star,1}^2|\varsigma-0.8|(\varsigma+0.8)/(\varsigma^++0.8)\leq a_{\star,1}|\varsigma-0.8|$, and $a_{\star,1}^2=39/64$ gives (ii).

\noindent (iii) Let $\mathbb{D}_k:=\mathbb{D}(\rho_{\omega_k}\|\rho_\star)$, and let $V_m(\omega)$ be the minimum of $\sum_{k<m}\mathbb{D}_k$ over action sequences in $\mathcal{A}^m$ from $\omega_0=\omega$.
$V_m$ is continuous by Berge's maximum theorem, since $f$ and $\mathbb{D}$ are continuous, $\mathcal{A}^m$ is compact, and (i) removes any state constraint.
By (ii), applying $\bm{\mathrm{a}}_\star$ forever costs at most $r^2/(1-39/64)=2.56\,r^2$, so $r^2\leq V_m\leq2.56\,r^2$.
An optimal $m$-step sequence has a step $k'<m$ with $\mathbb{D}_{k'}\leq V_m/m$; following it up to $k'$ and $\bm{\mathrm{a}}_\star$ afterwards costs at most $(1+2.56/m)V_m$.
Hence, $V_m$ increases in $m$ and converges uniformly to a continuous $\bar{U}\leq(1+2.56/m)V_m$.
The Bellman operator $(\Gamma U)(\omega):=\mathbb{D}(\rho_\omega\|\rho_\star)+\min_{\bm{\mathrm{a}}\in\mathcal{A}}U(f(\omega,\bm{\mathrm{a}}))$ is monotone and nonexpansive in the sup norm, and $V_{m+1}=\Gamma V_m$, so $\bar{U}=\Gamma\bar{U}$.
For a policy $\phi_\star$ that selects a minimizer in $\Gamma\bar{U}$, iterating along $F_{\phi_\star}$ gives $\sum_{k<H}\mathbb{D}_k+\bar{U}(\omega_H)=\bar{U}(\omega_0)$, so $\mathbb{D}_k\rightarrow0$, $\bar{U}(\omega_H)\leq2.56\,\mathbb{D}_H\rightarrow0$, and $\phi_\star$ attains $\bar{U}$.
Since every policy costs at least $V_m$ for every $m$, $U_\star^{\mathsf{R}}=\bar{U}$, $\phi_\star$ is optimal, and the Bellman identity holds.
At $\omega_\star$, this identity gives $U_\star^{\mathsf{R}}(F_{\phi_\star}(\omega_\star))=0$; hence, $F_{\phi_\star}(\omega_\star)=\omega_\star$, which only $\bm{\mathrm{a}}_\star$ achieves.

\noindent (iv) On $\mathcal{R}$, $\mathbb{D}\leq U_\star^{\mathsf{R}}\leq\mathbb{D}+U_\star^{\mathsf{R}}(\omega_\star)=\mathbb{D}$, so the Bellman identity requires $F_{\phi_\star}(\omega)=\omega_\star$, which only $\bm{\mathrm{a}}_{\mathsf{db}}$ achieves.
If $r\leq0.09$, then $a_{\mathsf{db},1}\in[0.70,0.88]$ and $|a_{\mathsf{db},2}-1|=a_{\mathsf{db},1}|\mu-1|/(1-a_{\mathsf{db},1})\leq0.38$, so $\bm{\mathrm{a}}_{\mathsf{db}}(\omega)\in\mathcal{A}$.
\end{proof}

\noindent
Hence, \namedref{Theorem}{theo:parameter_stability_means_distribution_stability} applies with $\Omega_{\mathcal{P}}=\Omega_{\mathsf{com}}$, $\kappa_1(r)=r^2$, and $\kappa_2(r)=2.56\,r^2$: $\phi_\star$ renders the example $\mathbb{D}$-asymptotically stable and, by (iv), reaches $\rho_\star$ exactly in finitely many steps.

\paragraph{Computations.}
Value iteration on a grid yields an approximation $\tilde{U}$ of $U_\star^{\mathsf{R}}$.
The grid has $\omega_\star$ as a node and is refined near it; $\bm{\mathrm{a}}_\star$ is among the candidate actions; and $U/\mathbb{D}$ is interpolated bilinearly.
The error indicator $\varepsilon_{\mathrm{num}}$ is the largest of the Bellman residual, the gap to a rigorous upper bound on $U_\star^{\mathsf{R}}$ (the $25$-step cost of the greedy policy plus $2.56\,\mathbb{D}$ at the end), and the difference from a coarser grid.

For \namedref{Theorem}{theo:stability_finite_samples}, a data level $(B,M)\in\{(20,50),(40,200),(80,800),(160,3200)\}$ consists of $B$ independent datasets~\eqref{eq:data_time_series} with $T=5$, each with $\omega_0$ and actions drawn uniformly from $\Omega_{\mathsf{com}}$ and $\mathcal{A}$, so that $n_{\mathsf{tr}}=BT$ and $N=BTM$.
The maximum-likelihood estimate $\hat{\omega}_k$ from the samples of $\rho_k$ and the model $\hat{f}_{\mathsf{off}}$, one Gaussian process per output, are projected onto $\Omega_{\mathsf{com}}$, so $\Omega_{\mathsf{com}}$ is invariant under $\widehat{F}_{\mathsf{off}}$.
The estimate does not need the density step of \namedref{Appendix}{appendix:omega_estimation_procedure}.

The learner solves $\hat{f}_{\mathsf{off}}(\omega_\star,\hat{\bm{\mathrm{a}}})=\omega_\star$ for $\hat{\bm{\mathrm{a}}}$, since otherwise the learned value with $\gamma=1$ would be infinite, and then solves \namedref{Problem}{eq:parameter_based_approximation} by value iteration with actions centered at $\hat{\bm{\mathrm{a}}}$, never at $\bm{\mathrm{a}}_\star$.
Bilinear interpolation of the node actions $\theta$ gives $\hat{\phi}_{\mathsf{off}}$, with $\hat{\phi}_{\mathsf{off}}(\omega_\star)=\hat{\bm{\mathrm{a}}}$.
The proof of \namedref{Theorem}{theo:stability_finite_samples} uses only the learned margin, the invariance, and the definitions of $\Delta_U$ and $\Delta_f$, so it applies verbatim to the computed $\widehat{U}_{\mathsf{off}}^{\mathsf{R}}$.

A run passes if $\hat{\bm{\mathrm{a}}}$ is found, the value iteration converges, and $\hat{c}:=\min_{\omega\neq\omega_\star}[\widehat{U}_{\mathsf{off}}^{\mathsf{R}}(\omega)-\widehat{U}_{\mathsf{off}}^{\mathsf{R}}(\widehat{F}_{\mathsf{off}}(\omega))]/r^2\geq0.75$, i.e., if~\eqref{eq:learned_lyapunov_decrease_margin_finite_sample} holds with the preset $\sigma(r)=0.75\,r^2$.
The score is $S=\Delta$ for a passing run and $S=\infty$ otherwise, so $\{S\leq\bar{\Delta}\}$ is contained in the event $\mathcal E_{\mathsf{off}}$ of the theorem.
Each level has $50$ independent runs; $\bar{\Delta}$ is the largest score of runs $0$--$48$, and run $49$ is deployed.
Hence, with confidence $1-0.9^{49}\approx0.994$ over the calibration runs, $S_{49}\leq\bar{\Delta}$ holds with probability at least $0.9$, i.e., $\alpha=0.1$.

Since the proofs of \namedref{Theorems}{theo:stability_finite_samples}--\ref{theo:model_free_online_policy_learning} are pathwise, a run satisfies their conclusions with its own $\Delta$ or $\Delta_{\mathsf{dec},n}$ in place of $\bar{\Delta}$ or $\bar{\Delta}_{\mathsf{dec},n}$ if it satisfies the learned margin and, for \namedref{Theorems}{theo:online_plugin_stability} and~\ref{theo:model_free_online_policy_learning}, lies on the estimation event.
The residual set~\eqref{eq:finite_sample_residual_set} is $\Omega_{\mathsf{com}}\cap\{r\leq\bar{r}\}$ with $\bar{r}:=(\bar{\Delta}/0.75)^{1/2}$, and $e_f$ is the supremum in~\eqref{eq:uniform_model_consistency_main}.

For \namedref{Theorem}{theo:online_plugin_stability}, the deployed $\hat{\phi}_{\mathsf{off}}$ with $n_{\mathsf{tr}}=800$ is used, with $\sigma_{\mathsf{on}}=\sigma$ and $\bar{\Delta}_{\mathsf{off}}=\bar{\Delta}$.
At step $t$, $\hat{\omega}_t$ is estimated as above from $M=M_t$ fresh samples; before the projection, its mean and standard deviation are $\mu_t+\varsigma_tZ/\sqrt{M}$ and $\varsigma_t\sqrt{X/M}$ with independent $Z\sim\mathcal{N}(0,1)$ and $X\sim\chi^2_{M-1}$.

As the projection is nonexpansive, $\|\hat{\omega}_t-\omega_t\|\leq\varsigma_tG_{M_t}$ with $G_M:=\|(Z/\sqrt{M},\sqrt{X/M}-1)\|$.
Since the same draws $(Z,X)$ are used for every $\omega_0$ and policy, $G_{M_t}$ depends on neither.
With $\delta_{\mathsf{on}}=\alpha_{\mathsf{on}}=0.05$, $M_t$ is the smallest $M$ with $P(G_M>\delta_{\mathsf{on}}/\varsigma_{\mathsf{hi}})\leq\alpha_{\mathsf{on}}/((t+1)(t+2))$, computed by quadrature, so a union bound over $t$ gives~\eqref{eq:online_estimator_uniform_error} for every $\omega_0$ with probability at least $1-\alpha_{\mathsf{on}}$.
By \namedref{Lemma}{lemma:toy_facts}(i), $F_{\mathsf{on}}$ keeps $\Omega_{\mathsf{com}}$ invariant; with the calibration, the conclusion holds with probability at least $0.9\times0.95$.

For \namedref{Theorem}{theo:model_free_online_policy_learning}, $W_n=\mathbb{D}$ satisfies~\eqref{eq:model_free_Wn_bounds} exactly, and the learner runs $160$ episodes of $5$ steps from uniformly drawn $\omega_0$ with $20{,}000$ samples per step.
After $n=20,40,80,160$ episodes, i.e., $N_n=100,\ldots,800$ in~\eqref{eq:model_free_online_dataset}, a Gaussian process $\widehat{W}^{+}_n(\omega,\bm{\mathrm{a}})$ is fitted to an unbiased estimate of the next dissimilarity, based on the unbiased sample variance, with the variance of this estimate as the noise variance.
The policy $\phi_n$ interpolates the minimizers of $\widehat{W}^{+}_n$ over $\mathcal{A}$ at the grid nodes.

The learned decrease $\widehat{\mathcal{D}}_n(\omega):=\min\{\widehat{W}^{+}_n(\omega,\phi_n(\omega))-\mathbb{D}(\rho_\omega\|\rho_\star),\,-\sigma_n(r)\}$ with $\sigma_n(r)=0.25\,r^2$ satisfies~\eqref{eq:model_free_empirical_decrease_certificate} by construction.
The cap raises $\Delta_{\mathsf{dec},n}$ by at most $e_n\geq0$, the amount by which the uncapped estimate violates~\eqref{eq:model_free_empirical_decrease_certificate}.
The constant $\bar{\Delta}_{\mathsf{dec},n}$ is calibrated like $\bar{\Delta}$, and the same $M_t$ are used with $\delta_{\omega,n}=0.05$ and $\alpha_n=1-0.9\times0.95$.

All suprema and the minimum in $\hat{c}$ are taken over finite sets of test points, as \namedref{Table}{tab:toy_hypotheses} shows.
The smallest residual, i.e., the smallest constant that can replace the total residual in~\eqref{eq:true_lyapunov_decrease_with_residual}, \eqref{eq:online_true_lyapunov_decrease_residual}, or~\eqref{eq:model_free_true_decrease_with_residual}, is evaluated with $\tilde{U}$ and $f$ on these test points for \namedref{Theorems}{theo:stability_finite_samples} and~\ref{theo:model_free_online_policy_learning} and on simulated plug-in steps for \namedref{Theorems}{theo:online_plugin_stability} and~\ref{theo:model_free_online_policy_learning}.

If $|\tilde{U}-U_\star^{\mathsf{R}}|\leq\varepsilon_{\mathrm{num}}$, as the indicators suggest, replacing $U_\star^{\mathsf{R}}$ by $\tilde{U}$ changes $\Delta_U$ by at most $\varepsilon_{\mathrm{num}}$; $\Delta_f$, $\Delta_{\mathsf{on}}$, and the smallest residual by at most $2\varepsilon_{\mathrm{num}}$; and $\Delta$ by at most $4\varepsilon_{\mathrm{num}}$, which is small compared with $\bar{\Delta}$ but of the order of $\Delta_{\mathsf{on}}$.

\paragraph{Results.}
In \namedref{Table}{tab:toy_results}, medians are over the $50$ runs, $\bar{r}$ is the radius of the residual set~\eqref{eq:finite_sample_residual_set}, \eqref{eq:online_residual_set}, or~\eqref{eq:model_free_residual_set}, and the smallest residual and the limit distance refer to the deployed run.
The limit distance is $\|\omega-\omega_\star\|$ where the closed loop settles: the largest value after $300$ steps from eight starts or, for \namedref{Theorem}{theo:online_plugin_stability}, the median over $200$ plug-in runs of the largest value in steps $30$--$39$.
``Failed'' counts the runs that do not pass for \namedref{Theorem}{theo:stability_finite_samples}, the plug-in runs whose estimation event fails within $40$ steps for \namedref{Theorem}{theo:online_plugin_stability}, and the runs with $e_n>0$ for \namedref{Theorem}{theo:model_free_online_policy_learning}.

\begin{table}[!ht]
    \centering
    \caption{Certificates of the learned policies.}
    \label{tab:toy_results}
    {\footnotesize
\begin{tabular*}{\linewidth}{@{\extracolsep{\fill}}lccccccc@{}}
\toprule
Data & \begin{tabular}[b]{@{}c@{}}$e_f$ or\\$\Delta_{\mathsf{dec},n}$ (median)\end{tabular} & \begin{tabular}[b]{@{}c@{}}$\bar\Delta$ or\\$\bar\Delta_{\mathsf{dec},n}$\end{tabular} & \begin{tabular}[b]{@{}c@{}}$\Delta_{\mathsf{on}}$ or\\$\Delta_{\omega,n}$\end{tabular} & \begin{tabular}[b]{@{}c@{}}residual\\radius $\bar r$\end{tabular} & \begin{tabular}[b]{@{}c@{}}smallest\\residual\end{tabular} & \begin{tabular}[b]{@{}c@{}}limit\\distance\end{tabular} & failed \\
\midrule
\multicolumn{8}{@{}l}{\emph{Thm.~\ref{theo:stability_finite_samples}, offline}} \\
$n_{\mathsf{tr}}=100$ & $0.22$ & $\infty$ & -- & $\infty$ & $1.0\times10^{-3}$ & $0.032$ & 5/50 \\
$n_{\mathsf{tr}}=200$ & $0.089$ & $1.3$ & -- & $1.3$ & $1.9\times10^{-5}$ & $4.2\times10^{-3}$ & 0/50 \\
$n_{\mathsf{tr}}=400$ & $0.036$ & $0.56$ & -- & $0.86$ & $1.2\times10^{-6}$ & $1.0\times10^{-3}$ & 0/50 \\
$n_{\mathsf{tr}}=800$ & $0.014$ & $0.20$ & -- & $0.52$ & $1.0\times10^{-6}$ & $9.2\times10^{-4}$ & 0/50 \\
\midrule
\multicolumn{8}{@{}l}{\emph{Thm.~\ref{theo:online_plugin_stability}, online plug-in}} \\
$n_{\mathsf{tr}}=800$ & -- & $0.20$ & $2.7\times10^{-3}$ & $0.52$ & $4.4\times10^{-4}$ & $9.7\times10^{-3}$ & 2/200 \\
\midrule
\multicolumn{8}{@{}l}{\emph{Thm.~\ref{theo:model_free_online_policy_learning}, estimated decrease}} \\
$N_n=100$ & $0.14$ & $0.40$ & $0.020$ & $1.3$ & $9.4\times10^{-3}$ & $0.040$ & 34/50 \\
$N_n=200$ & $0.058$ & $0.16$ & $6.6\times10^{-3}$ & $0.81$ & $3.2\times10^{-4}$ & $6.1\times10^{-3}$ & 23/50 \\
$N_n=400$ & $0.039$ & $0.089$ & $6.3\times10^{-3}$ & $0.62$ & $3.1\times10^{-4}$ & $5.6\times10^{-3}$ & 8/50 \\
$N_n=800$ & $0.024$ & $0.068$ & $6.4\times10^{-3}$ & $0.54$ & $3.2\times10^{-4}$ & $5.3\times10^{-3}$ & 5/50 \\
\bottomrule
\end{tabular*}}

\end{table}

In \namedref{Figure}{fig:toy_certificates}, panel (a) shows level sets of $\tilde{U}$, the set $\mathcal{R}$ (dashed), and three trajectories of $\phi_\star$; panels (b)--(d) use the deployed policies with $n_{\mathsf{tr}}=N_n=800$, and panel (d) starts from $(-1,\varsigma_{\mathsf{hi}})$, with dashed lines at $\mathbb{D}=\bar{r}^2$.

\begin{figure}[!ht]
    \centering
    \includegraphics[width=\linewidth]{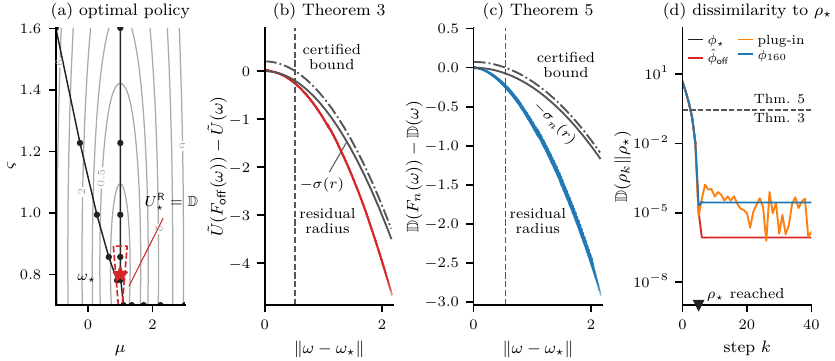}
    \caption{Certificates in the example.}
    \label{fig:toy_certificates}
\end{figure}

In \namedref{Figure}{fig:toy_certificates}a and~d, $\phi_\star$ reaches $\rho_\star$ in a few steps, and, as \namedref{Table}{tab:toy_hypotheses} shows, $\tilde{U}$ stays within the bounds of \namedref{Lemma}{lemma:toy_facts}(iii), well below the upper one.

For \namedref{Theorem}{theo:stability_finite_samples}, $e_f$, $\bar{\Delta}$, and $\bar{r}$ shrink as the data grow; with the least data, some runs fail, so $\bar{\Delta}=\infty$ and there is no certificate.
In \namedref{Figure}{fig:toy_certificates}b, the true decrease stays below the certified bound at every test point.
The smallest residual is orders of magnitude below $\bar{\Delta}$, and the limit distance is far below $\bar{r}$.

Since $\hat{\bm{\mathrm{a}}}\neq\bm{\mathrm{a}}_\star$, the offset $\|F_{\mathsf{off}}(\omega_\star)-\omega_\star\|$ is positive, so the fixed-point hypothesis of \namedref{Theorem}{theo:stability_finite_samples} fails; its proof, however, derives~\eqref{eq:true_lyapunov_decrease_with_residual} without it.
Every learned closed loop settles at a positive distance from $\omega_\star$, so the asymptotic clauses of \namedref{Theorems}{theo:stability_finite_samples}--\ref{theo:model_free_online_policy_learning} do not apply, and $J^{\mathsf{R}}$ of \namedref{Proposition}{prop:asymptotic_recovery_rotated_optimizers} is infinite for the learned $\theta$; the example reflects the proposition through the decrease of $e_f$.

For \namedref{Theorem}{theo:online_plugin_stability}, $\Delta_{\mathsf{on}}$ is small compared with $\bar{\Delta}_{\mathsf{off}}$, so the online residual set is barely larger than the offline one.
The estimation event holds in all but a few runs, consistent with $\alpha_{\mathsf{on}}$, and every step of these runs satisfies~\eqref{eq:online_true_lyapunov_decrease_residual}.
In \namedref{Figure}{fig:toy_certificates}d, the plug-in closed loop stays far inside the residual set, although estimation noise keeps it farther from $\omega_\star$ than the exact-parameter one.

For \namedref{Theorem}{theo:model_free_online_policy_learning}, $\bar{\Delta}_{\mathsf{dec},n}$ and $\bar{r}$ shrink as $N_n$ grows.
In \namedref{Figure}{fig:toy_certificates}c, the true decrease stays below the certified bound.
At equal numbers of transitions, the radius of \namedref{Theorem}{theo:model_free_online_policy_learning} is smaller than that of \namedref{Theorem}{theo:stability_finite_samples} with little data and about equal to it with the most, although the learner of \namedref{Theorem}{theo:model_free_online_policy_learning} uses more samples per step and the two certificates use different Lyapunov functions and margins.
The uncapped estimate violates~\eqref{eq:model_free_empirical_decrease_certificate} in most runs with the least data and in few runs with the most, always at points well inside the residual set.
From $N_n=200$ on, the smallest residual of the deployed run comes from the plug-in steps, and both this residual and the run's limit distance change little, whereas the limit distance over all runs keeps shrinking.
The policy $\phi_n$ settles far inside the residual set, although farther from $\omega_\star$ than $\hat{\phi}_{\mathsf{off}}$.

\section{Implementation Details}
\label{appendix:implementation}

\namedref{Figure}{fig:implementation_pipeline} shows the closed loop of \namedref{Section}{sec:practical_implementation}, and \namedref{Table}{tab:methods} lists the compared methods.

\begin{figure}[t]
    \centering
    \definecolor{figState}{HTML}{0072B2}
    \definecolor{figDist}{HTML}{D55E00}
    \definecolor{figGreen}{HTML}{009E73}
    \definecolor{figGray}{HTML}{595959}
    \begin{tikzpicture}[
        font=\footnotesize,
        >={Stealth[length=4pt,width=3.4pt]},
        box/.style={rounded corners=2pt, line width=0.5pt, align=center, inner sep=3pt, draw=figGray, fill=white, minimum height=1.05cm},
        obsbox/.style={box, draw=figState, fill=figState!7},
        estbox/.style={box, draw=figDist, fill=figDist!9},
        trainbox/.style={box, draw=figGreen!80!black, fill=figGreen!7, minimum height=1.55cm},
        flow/.style={->, line width=0.6pt, draw=black!80},
        train/.style={->, line width=0.6pt, draw=figGreen!80!black, dash pattern=on 3pt off 1.8pt},
        lab/.style={font=\footnotesize, inner sep=1.5pt},
    ]
        \node[box, fill=black!5, minimum width=2.1cm, minimum height=4.3cm] (env) at (1.05,-1.5) {\textbf{environment}\\[1pt] {\scriptsize (simulator)}};
        \node[obsbox, minimum width=2.5cm] (noise) at (4.15,0) {\textbf{sensor noise}\\[1pt] $\bm{\mathrm{o}}_t=\bm{\mathrm{y}}_t+\eta\,\bm{\xi}_t$};
        \node[estbox, minimum width=3.7cm] (est) at (9.0,0) {\textbf{sliding-window estimate}\\[1pt] $\hat{\omega}_t$: window means and standard\\ deviations of the last $n_{\mathsf{w}}$\\ readings $o^{v}$, $o^{\theta}$};
        \node[box, minimum width=3.3cm] (pol) at (14.2,0) {\textbf{policy}\\[1pt] $\bm{\mathrm{a}}_t=\phi(\hat{\omega}_t,\bm{\mathrm{o}}_t)$};
        \draw[flow] (env.east |- noise) -- node[lab, above] {$\bm{\mathrm{y}}_t$} (noise);
        \draw[flow, draw=figState] (noise) -- node[lab, above, text=figState] {$o^{v}_t,o^{\theta}_t$} (est);
        \draw[flow, draw=figDist] (est) -- node[lab, above, text=figDist] {$\hat{\omega}_t$} (pol);
        \draw[flow, draw=figState] (noise.north) -- ++(0,0.42) -| node[lab, pos=0.25, above, text=figState] {$\bm{\mathrm{o}}_t$} ([xshift=-0.7cm]pol.north);
        \draw[flow] ([xshift=0.7cm]pol.north) -- ++(0,0.95) -| node[lab, pos=0.25, above] {$\bm{\mathrm{a}}_t$} (env.north);
        \node[trainbox, minimum width=4.1cm, text width=4.2cm] (dis) at (5.5,-2.2) {\textbf{dissimilarity}\\[1pt] $\mathbb{D}(\bar{\rho}_t\|\rho_\star)$ of the empirical\\ distributions of the last $n_{\mathsf{w}}$\\ true velocities and pitch angles};
        \node[trainbox, minimum width=3.3cm] (shaped) at (10.1,-2.2) {\textbf{reward of DMDP}\\[1pt] $r^{\mathsf{b}}_t+r_{t\mathbb{D}}-\beta\,\mathbb{D}(\bar{\rho}_t\|\rho_\star)$\\[1pt] {\scriptsize ($r_{t\mathbb{D}}=d$ if $\mathbb{D}(\bar{\rho}_t\|\rho_\star)\leq\mathbb{D}_{\mathsf{thr}}$, else $0$)}};
        \node[trainbox, minimum width=3.3cm] (learner) at (14.2,-2.2) {\textbf{PPO or SAC}\\[1pt] maximize the\\ discounted return};
        \draw[train] (env.east |- dis) -- (dis);
        \node[lab, anchor=south west, text=figGreen!80!black] at ([xshift=1pt]env.east |- dis) {$v_t,\theta_t$};
        \draw[train] (dis) -- (shaped);
        \draw[train] (shaped) -- (learner);
        \coordinate (bus) at (env.east |- 0,-3.45);
        \draw[train] (bus) -| node[lab, right, pos=0.75, text=figGreen!80!black] {$r^{\mathsf{b}}_t$} (shaped.south);
        \draw[train] (learner.north) -- node[lab, left, pos=0.6, text=figGreen!80!black] {update policy} (pol.south);
        \begin{scope}[on background layer]
            \coordinate (busL) at ([xshift=-2pt]dis.west |- 0,-3.45);
            \node[draw=figGray, line width=0.4pt, rounded corners=3pt, dash pattern=on 1.5pt off 1.5pt, fit=(dis)(shaped)(learner)(busL), inner sep=6pt] (trainframe) {};
        \end{scope}
        \node[font=\scriptsize, text=figGray, fill=white, inner sep=1.5pt, anchor=west] at ([xshift=6pt]trainframe.north west) {used only during training};
    \end{tikzpicture}
    \caption{Practical implementation of DMDP.}
    \label{fig:implementation_pipeline}
\end{figure}

\begin{table}[tb]
    \centering
    \caption{Compared methods.}
    \label{tab:methods}
    \small
    \begin{tabular}{@{}lllll@{}}
        \toprule
        Method & Policy input & Reward & Reward critic & Target \\
        \midrule
        PPO / SAC & $\bm{\mathrm{o}}_t$ & $r^{\mathsf{b}}_t$ & expected return & -- \\
        PPO-Lag / SAC-Lag & $\bm{\mathrm{o}}_t$ & $r^{\mathsf{b}}_t-\lambda c_t$ & expected return & -- \\
        \addlinespace
        Distributional PPO / SAC & $\bm{\mathrm{o}}_t$ & $r^{\mathsf{b}}_t$ & $32$ quantiles of the return & -- \\
        Distributional PPO-Lag / SAC-Lag & $\bm{\mathrm{o}}_t$ & $r^{\mathsf{b}}_t-\lambda c_t$ & $32$ quantiles of the return & -- \\
        \addlinespace
        Implicit DMDP (PPO/SAC) & $\hat{\omega}_t$ and $\bm{\mathrm{o}}_t$ & $r^{\mathsf{b}}_t$ & $32$ quantiles of the return & -- \\
        Implicit DMDP-Lag (PPO/SAC) & $\hat{\omega}_t$ and $\bm{\mathrm{o}}_t$ & $r^{\mathsf{b}}_t-\lambda c_t$ & $32$ quantiles of the return & -- \\
        \addlinespace
        DMDP (PPO/SAC) & $\hat{\omega}_t$ and $\bm{\mathrm{o}}_t$ & $r^{\mathsf{b}}_t+r_{t\mathbb{D}}-\beta\,\mathbb{D}_t$ & expected return & $\rho_\star$ \\
        DMDP-Lag (PPO/SAC) & $\hat{\omega}_t$ and $\bm{\mathrm{o}}_t$ & $r^{\mathsf{b}}_t+r_{t\mathbb{D}}-\beta\,\mathbb{D}_t-\lambda c_t$ & expected return & $\rho_\star$ \\
        \bottomrule
    \end{tabular}
\end{table}

The regulated distribution $\rho_k$ is the joint distribution of the forward velocity $v_k$ and the torso pitch $\theta_k$, with marginals $\rho^{v}_k$ and $\rho^{\theta}_k$.
The parameterized family is a product of two uniform location-scale families,
\begin{equation}
    \rho_\omega=U\big[\mu^{v}-\sqrt{3}\varsigma^{v},\mu^{v}+\sqrt{3}\varsigma^{v}\big]\otimes U\big[\mu^{\theta}-\sqrt{3}\varsigma^{\theta},\mu^{\theta}+\sqrt{3}\varsigma^{\theta}\big],
\end{equation}
where $\omega=(\mu^{v},\varsigma^{v},\mu^{\theta},\varsigma^{\theta})$ collects the means and standard deviations of both features.
\namedref{Table}{tab:task} gives the target $\rho_\star=\rho_{\omega_\star}$, whose velocity marginal $\rho^{v}_\star$ is supported on the velocity band $[0.70\,v_{\mathsf{lim}},0.95\,v_{\mathsf{lim}}]$.

\begin{table}[tb]
    \centering
    \caption{Task.}
    \label{tab:task}
    \small
    \begin{tabular}{@{}ll@{}}
        \toprule
        Environment & \texttt{SafetyHalfCheetahVelocity-v1}~\citep{Ji2023}, MuJoCo~\citep{Todorov2012}; $H=1000$ \\
        $\bm{\mathrm{y}}_t$, $\bm{\mathrm{a}}_t$ & $\bm{\mathrm{y}}_t\in\mathbb{R}^{17}$; $\bm{\mathrm{a}}_t\in[-1,1]^{6}$ \\
        $v_t$, $\theta_t$ & torso velocity from $\bm{\mathrm{s}}_t$ to $\bm{\mathrm{s}}_{t+1}$; torso pitch of $\bm{\mathrm{s}}_{t+1}$ in $(-\pi,\pi]$ \\
        $r^{\mathsf{b}}_t$, $c_t$ & $v_t-0.1\|\bm{\mathrm{a}}_t\|^2$; $\mathbf{1}[v_t>v_{\mathsf{lim}}]$, $v_{\mathsf{lim}}=3.2096$ \\
        $\rho_\star$ & $U[0.70\,v_{\mathsf{lim}},0.95\,v_{\mathsf{lim}}]\otimes U[-0.2,0.2]$; $\omega_\star\approx(2.65,0.23,0,0.12)$ \\
        $\xi_{t,i}$, $\eta$ & i.i.d.\ $\tfrac{1}{2}\mathcal{N}(-0.8,0.6^2)+\tfrac{1}{2}\mathcal{N}(0.8,0.6^2)$; training: $\eta\sim U[0,0.3]$ per episode \\
        $n_{\mathsf{w}}$ & $10$ \\
        $r^{\mathsf{eva}}_t$ & $r^{\mathsf{b}}_t+r_{t\mathbb{D}}$ if $v_t\leq v_{\mathsf{lim}}$; $-v_{\mathsf{lim}}$ if $v_t>v_{\mathsf{lim}}$ \\
        \bottomrule
    \end{tabular}
\end{table}

The pitch target prescribes an upright torso, because the benchmark leaves the posture free and, without noise, a flipped gait has a wide velocity distribution, not concentrated in the velocity band, as \namedref{Figure}{fig:sac_stable_laws} shows.
\namedref{Figure}{fig:halfcheetah} labels the torques $a_{t,1},\ldots,a_{t,6}$ at the hip, knee, and foot joints of the back and front legs and shows DMDP-Lag (PPO) on the left and a flipped run of SAC on the right, both at one step of a noise-free evaluation episode with seed $0$.
A run is flipped if more than half of its steps $k\geq500$ have $|\theta_k|>\pi/2$, as in the right panel; it is upright if its mean absolute pitch for $k\geq500$ is at most $0.2$ rad, and tilted otherwise.

\begin{figure}[tb]
    \centering
    \includegraphics[width=0.78\linewidth]{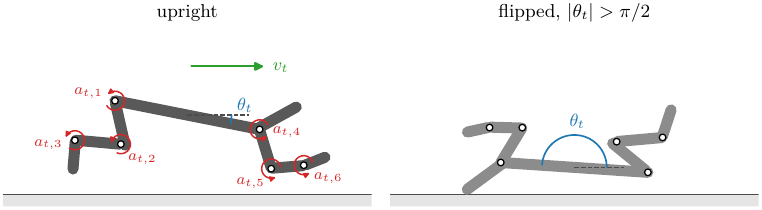}
    \caption{The HalfCheetah robot.}
    \label{fig:halfcheetah}
\end{figure}

The training dissimilarity sums the squared $2$-Wasserstein distances of the marginals,
\begin{equation}
    \label{eq:practical_w2}
    \mathbb{D}(\rho\|\rho_\star)=W_2^2\big(\rho^{v},\rho^{v}_\star\big)+W_2^2\big(\rho^{\theta},\rho^{\theta}_\star\big),
    \qquad
    W_2^2(\rho,\tilde{\rho})=\int_0^1\big(F^{-1}(u)-\tilde{F}^{-1}(u)\big)^2\,\mathsf{d}u,
\end{equation}
where $F^{-1}$ and $\tilde{F}^{-1}$ are the quantile functions of $\rho$ and $\tilde{\rho}$ on the real line~\citep{Villani2009}.
It satisfies \namedref{Definition}{def:dissimilarity_measure} and, on the family $\rho_\omega$, vanishes only at $\omega_\star$.

The window dissimilarity is $\mathbb{D}_t=\mathbb{D}(\bar{\rho}_t\|\rho_\star)$, with $\bar{\rho}_t$ the empirical distribution of the true velocities and pitch angles of the last $n_{\mathsf{w}}$ steps.
The threshold reward is $r_{t\mathbb{D}}=d\,\mathbf{1}[\mathbb{D}_t\leq\mathbb{D}_{\mathsf{thr}}]$ with a positive threshold $\mathbb{D}_{\mathsf{thr}}$, because even a window drawn from $\rho_\star$ has a positive dissimilarity.
The penalty $\beta\,\mathbb{D}_t$ gives a learning signal far from the target, where $r_{t\mathbb{D}}=0$ gives none.

The method-of-moments estimate $\hat{\omega}_t$ replaces the maximum-likelihood step of \namedref{Appendix}{appendix:omega_estimation_procedure} and uses the entries $o^{v}_t$ and $o^{\theta}_t$ of $\bm{\mathrm{o}}_t$, which read the instantaneous forward velocity and pitch angle at the start of step $t$.
At the start of an episode, the window of $\hat{\omega}_t$ is filled with the first reading, and that of $\bar{\rho}_t$ holds only the steps so far.

\namedref{Table}{tab:hyperparameters} gives the hyperparameters.

\begin{table}[!ht]
    \centering
    \caption{Hyperparameters.}
    \label{tab:hyperparameters}
    \footnotesize
    \begin{tabular}{@{}lll@{}}
        \toprule
        & PPO & SAC \\
        \midrule
        Steps & $1464\times8\times256\approx3\cdot10^{6}$ & $10^{6}$; $4$ envs; $1$ update/step; $5000$ random \\
        Batches & $10$ epochs $\times$ $32$ minibatches & buffer $10^{6}$; minibatch $256$ \\
        Networks & $2\times64$, $\tanh$; normalized inputs & $2\times256$, ReLU \\
        Policy & Gaussian, $\log\sigma_0=-1$ & $\tanh$-Gaussian, $\log\sigma\in[-5,2]$ \\
        Critics & $V_r$, $V_c$; GAE $0.95$~\citep{Schulman2016} & $Q_1$, $Q_2$, $Q_c$; Polyak $0.005$ \\
        Actor & clip $0.2$ on $\frac{\hat{A}^{r}_t-\lambda\hat{A}^{c}_t}{1+\lambda}$~\citep{Schulman2017,Ray2019} & $\frac{\alpha\log\pi-\min_{i}Q_i+\lambda Q_c}{1+\lambda}$~\citep{Ha2020} \\
        Coefficients & entropy $0$; value $0.5$; grad.\ norm $0.5$ & $\alpha_0=1$; target entropy $-6$ \\
        Learning rate & Adam, $3\times10^{-4}$, linear decay & Adam, $3\times10^{-4}$ \\
        $\lambda$ update & per update; $\hat{J}_c$: $32$ episodes & per $2048$ steps; $\hat{J}_c$: $16$ episodes \\
        Quantile critic & $V_r$; quantile Huber, $\kappa=1$~\citep{DabneyQR2018} & $Q_1$, $Q_2$; quantile Huber, $\kappa=1$ \\
        \midrule
        & \multicolumn{2}{l@{}}{PPO and SAC} \\
        \midrule
        $\gamma$ & \multicolumn{2}{l@{}}{$0.99$; bootstrap at truncation~\citep{Pardo2018}} \\
        Reward scale & \multicolumn{2}{l@{}}{$\div$ running std.\ of the discounted return; clip $[-10,10]$} \\
        $\lambda$ & \multicolumn{2}{l@{}}{Adam ($0.035$) on $-\lambda(\hat{J}_c-\bar{c})$; $\lambda\geq0$} \\
        $\beta$, $d$, $\mathbb{D}_{\mathsf{thr}}$, $\bar{c}$ & \multicolumn{2}{l@{}}{$0.5$, $10$, $0.1$, $10$} \\
        Pilot & \multicolumn{2}{l@{}}{PPO, other seeds, $\mathbb{D}_{\mathsf{thr}}=0.14$: $\beta\in\{0.5,1,2\}$, $\bar{c}\in\{10,5,2.5\}$} \\
        \bottomrule
    \end{tabular}
\end{table}

\FloatBarrier

\section{Additional Results}
\label{appendix:additional_results}

This appendix contains additional results with SAC and all results with PPO.
Unless stated otherwise, values and curves are means over the five seeds, and $\pm$ values and bands are standard errors.
A difference between two methods is clear if it exceeds the larger of their two standard errors; otherwise, it is within the seed variation.
The end-of-training values use the training episodes of the last $10\%$ of the steps, with stochastic actions, and the evaluation values use the deterministic evaluation episodes at the stated noise level.
The $10$--$90\%$ range of $R$ is the difference between the $10$th and $90$th percentiles of $R$ pooled over the seeds, and the line $k_0$ in the stability curves marks $k=500$.

\subsection{Additional Results with SAC}
\label{appendix:sac_results}

\namedref{Figure}{fig:sac_training_curves} shows the training curves discussed in \namedref{Section}{sec:exp_sac}.

\begin{figure}[!h]
    \centering
    \includegraphics[width=\linewidth]{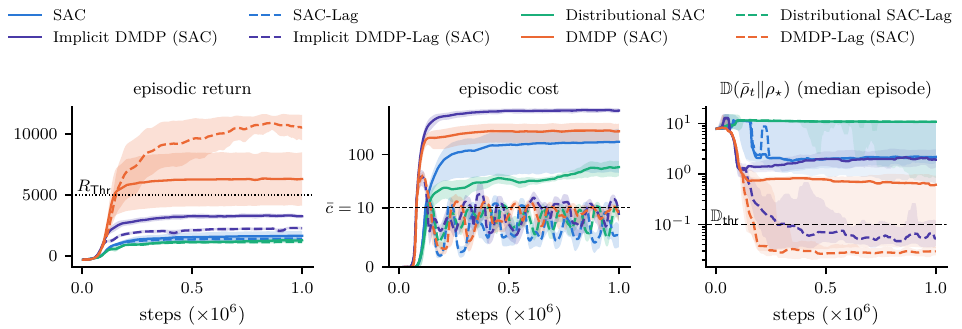}
    \caption{Training curves with SAC.}
    \label{fig:sac_training_curves}
\end{figure}

\namedref{Table}{tab:sac_shift} shows that sensor noise shifts the stable distributions of all methods, the more so the larger $\eta$.
The shift is the $1$-Wasserstein distance $\mathbb{D}(\hat{\rho}_\eta\|\hat{\rho}_0)$ between the stable velocity distribution $\hat{\rho}_\eta$ at noise level $\eta$ and the noise-free one $\hat{\rho}_0$.
Distributional SAC-Lag, four of whose runs are flipped, shifts the least, together with DMDP-Lag (SAC) at $\eta=0.1$; up to $\eta=0.4$, the shift of DMDP-Lag (SAC) stays within its seed variation of the smallest shift.
Implicit DMDP (SAC) shifts clearly the most at every noise level.

\begin{table}[!h]
    \centering
    \caption{Shift of the stable distribution under sensor noise with SAC.}
    \label{tab:sac_shift}
    \small
    \begin{tabular}{lccccc}
\toprule
 & \multicolumn{5}{c}{$\mathbb{D}(\hat{\rho}_\eta\|\hat{\rho}_0)$ $\downarrow$} \\
\cmidrule(lr){2-6}
Method & $\eta=0.1$ & $\eta=0.2$ & $\eta=0.3$ & $\eta=0.4$ & $\eta=0.5$ \\
\midrule
SAC & $0.04\pm0.01$ & $0.12\pm0.03$ & $0.21\pm0.03$ & $0.48\pm0.13$ & $0.80\pm0.26$ \\
SAC-Lag & $0.04\pm0.02$ & $0.11\pm0.03$ & $0.21\pm0.06$ & $0.37\pm0.08$ & $0.57\pm0.11$ \\
Distributional SAC & $0.05\pm0.02$ & $0.13\pm0.05$ & $0.28\pm0.11$ & $0.40\pm0.15$ & $0.56\pm0.19$ \\
Distributional SAC-Lag & $\bm{0.03\pm0.01}$ & $\bm{0.09\pm0.03}$ & $\bm{0.17\pm0.05}$ & $\bm{0.27\pm0.08}$ & $\bm{0.39\pm0.10}$ \\
Implicit DMDP (SAC) & $0.20\pm0.07$ & $0.52\pm0.15$ & $0.89\pm0.19$ & $1.37\pm0.22$ & $2.10\pm0.16$ \\
Implicit DMDP-Lag (SAC) & $0.09\pm0.01$ & $0.21\pm0.03$ & $0.35\pm0.05$ & $0.55\pm0.08$ & $0.85\pm0.10$ \\
DMDP (SAC) & $0.06\pm0.03$ & $0.24\pm0.09$ & $0.48\pm0.17$ & $0.70\pm0.23$ & $0.99\pm0.25$ \\
DMDP-Lag (SAC) & $\bm{0.03\pm0.01}$ & $0.12\pm0.07$ & $0.25\pm0.13$ & $0.41\pm0.16$ & $0.65\pm0.17$ \\
\bottomrule
\end{tabular}

\end{table}

\subsection{Results with PPO}
\label{appendix:ppo_results}

The PPO results have the format of \namedref{Section}{sec:exp_sac}.
Each logged training value averages the $32$ (PPO) or $16$ (SAC) most recently completed training episodes, and the curves of \namedref{Figures}{fig:sac_training_curves} and~\ref{fig:training_curves} are smoothed over $10$ logged values; their window-dissimilarity panels show the median and interquartile range, over the recent episodes of all five seeds, of the per-episode median of $\mathbb{D}_t$.
\namedref{Figures}{fig:sac_stable_laws} and~\ref{fig:stable_laws} pool $v_k$ for $k\geq500$ without noise over all evaluation episodes and seeds, with the horizontal axis ending at $v=6$, and \namedref{Figures}{fig:sac_stability_curves} and~\ref{fig:stability_curves} average the stability distance over $10$ consecutive steps.

The stability distance compares estimates of $\rho_k$ and $\hat{\rho}$ from different halves of the evaluation episodes and is averaged over both assignments of the halves.
The stationary reference value is the mean stability distance for $k\geq500$ of $10$ surrogates with independent random circular shifts of the segment $k\geq500$ of every episode.
The ratio of the stability distance to this value is near $1$ for a stationary $\rho_k$ and well above $1$ if $\rho_k$ drifts or the run is synchronized.
A run is synchronized, i.e., its episodes keep a common phase, if the standard deviation over $k\geq500$ of the mean velocity across its episodes exceeds twice the standard error of a mean of $64$ independent velocities.

In \namedref{Table}{tab:sampling_floor}, the ratio is close to $1$ for every method of both learners from $\eta=0.4$ on, and with SAC from $\eta=0.2$ on, except for Implicit DMDP (SAC) and DMDP (SAC), which remain well above $1$ at $\eta=0.2$.
Without noise, $10$ of the $40$ runs with PPO and $21$ of the $40$ runs with SAC, including all five runs of DMDP (SAC), are synchronized, so their $\rho_k$ oscillates and their episodes differ only in their initial states; at $\eta=0.1$, $14$ of the $21$ synchronized runs with SAC stay synchronized.

\begin{table}[tbp]
    \centering
    \caption{Ratio to the stationary reference value.}
    \label{tab:sampling_floor}
    \small
    \begin{tabular*}{\linewidth}{@{\extracolsep{\fill}}l*{6}{c}@{}}
\toprule
Method & $\eta=0$ & $\eta=0.1$ & $\eta=0.2$ & $\eta=0.3$ & $\eta=0.4$ & $\eta=0.5$ \\
\midrule
\multicolumn{7}{@{}l}{\emph{with PPO}} \\
PPO & $1.32$ & $1.05$ & $1.12$ & $1.05$ & $1.00$ & $1.02$ \\
PPO-Lag & $1.62$ & $1.26$ & $1.04$ & $1.05$ & $1.02$ & $1.00$ \\
Distributional PPO & $1.10$ & $1.03$ & $1.04$ & $1.07$ & $1.04$ & $1.03$ \\
Distributional PPO-Lag & $1.27$ & $1.05$ & $1.04$ & $1.05$ & $1.04$ & $1.04$ \\
Implicit DMDP (PPO) & $1.37$ & $1.32$ & $1.10$ & $1.09$ & $1.03$ & $1.06$ \\
Implicit DMDP-Lag (PPO) & $1.29$ & $1.04$ & $1.07$ & $1.11$ & $1.05$ & $1.00$ \\
DMDP (PPO) & $1.06$ & $1.03$ & $1.00$ & $1.04$ & $1.01$ & $1.00$ \\
DMDP-Lag (PPO) & $1.23$ & $1.01$ & $1.05$ & $1.04$ & $1.03$ & $1.01$ \\
\midrule
\multicolumn{7}{@{}l}{\emph{with SAC}} \\
SAC & $1.43$ & $1.16$ & $0.92$ & $1.07$ & $1.02$ & $0.98$ \\
SAC-Lag & $1.13$ & $1.04$ & $1.06$ & $0.99$ & $0.96$ & $0.97$ \\
Distributional SAC & $1.11$ & $1.08$ & $0.92$ & $1.02$ & $1.07$ & $1.00$ \\
Distributional SAC-Lag & $1.29$ & $1.14$ & $1.05$ & $0.95$ & $0.96$ & $1.00$ \\
Implicit DMDP (SAC) & $2.58$ & $2.46$ & $1.88$ & $1.31$ & $1.02$ & $1.00$ \\
Implicit DMDP-Lag (SAC) & $1.15$ & $1.13$ & $1.03$ & $0.99$ & $1.00$ & $1.00$ \\
DMDP (SAC) & $3.79$ & $3.60$ & $2.40$ & $1.11$ & $0.96$ & $0.97$ \\
DMDP-Lag (SAC) & $1.70$ & $1.40$ & $1.01$ & $0.99$ & $1.00$ & $1.02$ \\
\bottomrule
\end{tabular*}

\end{table}

\namedref{Table}{tab:main_results} shows that, without noise, DMDP-Lag (PPO) is the best of all methods with PPO: its stable distribution is clearly the closest to the target, and it has the highest evaluation return and the largest fraction of steps with $\mathbb{D}_t\leq\mathbb{D}_{\mathsf{thr}}$, within the seed variation of PPO-Lag in the return and of PPO-Lag and DMDP (PPO) in the fraction.
At $\eta=0.2$, Distributional PPO-Lag leads in these three columns, with PPO-Lag and DMDP (PPO) within its seed variation in the target distance and DMDP-Lag (PPO) clearly farther; its return and fraction lie within the seed variation of those of the DMDP methods and the other two constrained baselines.
The cost constraint brings every baseline clearly closer to the target at both noise levels, and DMDP (PPO) is within the seed variation of the constrained baselines even without it.

\begin{table}[!h]
    \centering
    \caption{Evaluation results with PPO.}
    \label{tab:main_results}
    \resizebox{\linewidth}{!}{\begin{tabular}{llcccccc}
\toprule
Method & $\eta$ & \begin{tabular}[b]{@{}c@{}}return \\ $r^{\mathsf{eva}}_t$ $\uparrow$\end{tabular} & cost $\downarrow$ & \begin{tabular}[b]{@{}c@{}}$\mathbb{D}(\rho_k\|\hat{\rho})$ \\ $k<200$ $\downarrow$\end{tabular} & \begin{tabular}[b]{@{}c@{}}$\mathbb{D}(\rho_k\|\hat{\rho})$ \\ $k\geq500$ $\downarrow$\end{tabular} & \begin{tabular}[b]{@{}c@{}}$\mathbb{D}(\hat{\rho}\|\rho^{v}_\star)$ \\ $\downarrow$\end{tabular} & \begin{tabular}[b]{@{}c@{}}$\mathbb{P}[\mathbb{D}(\bar{\rho}_t\|\rho_\star)$ \\ $\leq\mathbb{D}_{\mathsf{thr}}]$ $\uparrow$\end{tabular} \\
\midrule
PPO & $0$ & $-2207\pm468$ & $788.2\pm93.6$ & $0.92\pm0.07$ & $0.355\pm0.113$ & $1.86\pm0.32$ & $0.00\pm0.00$ \\
 & $0.2$ & $-454\pm445$ & $397.7\pm109.9$ & $0.92\pm0.19$ & $0.331\pm0.033$ & $1.33\pm0.24$ & $0.01\pm0.00$ \\
\midrule
PPO-Lag & $0$ & $9856\pm1966$ & $\bm{1.9\pm1.4}$ & $0.49\pm0.03$ & $0.174\pm0.055$ & $0.21\pm0.02$ & $0.78\pm0.19$ \\
 & $0.2$ & $6122\pm1340$ & $10.0\pm2.2$ & $\bm{0.46\pm0.03}$ & $0.205\pm0.058$ & $0.52\pm0.13$ & $0.45\pm0.13$ \\
\midrule
Distributional & $0$ & $-2293\pm550$ & $813.5\pm113.6$ & $1.08\pm0.13$ & $0.293\pm0.038$ & $1.93\pm0.29$ & $0.00\pm0.00$ \\
PPO & $0.2$ & $-898\pm450$ & $448.4\pm106.1$ & $1.29\pm0.21$ & $0.504\pm0.044$ & $1.68\pm0.18$ & $0.00\pm0.00$ \\
\midrule
Distributional & $0$ & $9055\pm1803$ & $3.6\pm2.9$ & $\bm{0.42\pm0.04}$ & $0.087\pm0.025$ & $0.24\pm0.04$ & $0.67\pm0.20$ \\
PPO-Lag & $0.2$ & $\bm{6926\pm1160}$ & $11.7\pm1.7$ & $0.48\pm0.05$ & $\bm{0.137\pm0.010}$ & $\bm{0.47\pm0.17}$ & $\bm{0.50\pm0.12}$ \\
\midrule
Implicit DMDP & $0$ & $-2563\pm30$ & $864.3\pm4.7$ & $1.32\pm0.08$ & $0.564\pm0.117$ & $2.19\pm0.08$ & $0.00\pm0.00$ \\
(PPO) & $0.2$ & $-225\pm416$ & $309.3\pm47.2$ & $1.41\pm0.18$ & $0.507\pm0.051$ & $1.68\pm0.15$ & $0.00\pm0.00$ \\
\midrule
Implicit DMDP-Lag & $0$ & $7841\pm2044$ & $2.0\pm1.8$ & $0.45\pm0.05$ & $\bm{0.075\pm0.031}$ & $0.32\pm0.07$ & $0.55\pm0.23$ \\
(PPO) & $0.2$ & $5929\pm1240$ & $\bm{9.1\pm1.1}$ & $0.54\pm0.08$ & $0.147\pm0.042$ & $0.79\pm0.28$ & $0.42\pm0.11$ \\
\midrule
DMDP (PPO) & $0$ & $8730\pm2326$ & $122.4\pm83.3$ & $0.51\pm0.06$ & $0.135\pm0.045$ & $0.30\pm0.17$ & $0.78\pm0.20$ \\
 & $0.2$ & $5655\pm1677$ & $75.0\pm30.9$ & $0.51\pm0.02$ & $0.212\pm0.016$ & $0.60\pm0.12$ & $0.46\pm0.16$ \\
\midrule
DMDP-Lag (PPO) & $0$ & $\bm{11417\pm350}$ & $4.0\pm3.2$ & $0.53\pm0.06$ & $0.125\pm0.042$ & $\bm{0.13\pm0.05}$ & $\bm{0.97\pm0.02}$ \\
 & $0.2$ & $6021\pm708$ & $16.2\pm3.7$ & $0.52\pm0.02$ & $0.213\pm0.014$ & $0.66\pm0.08$ & $0.46\pm0.07$ \\
\bottomrule
\end{tabular}

}
\end{table}

At both noise levels, the unconstrained baselines spend many steps above $v_{\mathsf{lim}}$, as their costs show, so their evaluation returns are negative and clearly below that of DMDP (PPO), whose cost is clearly lower than theirs but well above $\bar{c}$.
Without noise, the costs of the -Lag methods lie below $\bar{c}$ and within the seed variation of one another, the lowest being that of PPO-Lag; at $\eta=0.2$, Implicit DMDP-Lag (PPO) has the lowest cost of all methods, whereas DMDP-Lag (PPO) exceeds $\bar{c}$, clearly above the constrained baselines.

Feeding back $\hat{\omega}_t$ without $\rho_\star$ clearly lowers the cost at $\eta=0.2$, with and without the cost constraint, but does not bring the stable distribution closer to the target.
Knowing $\rho_\star$ does: DMDP (PPO) is clearly closer than Implicit DMDP (PPO) at both noise levels, and DMDP-Lag (PPO) than Implicit DMDP-Lag (PPO) without noise.

Without noise, every run of DMDP (PPO) is upright, although its run with seed $2$ never stays in the velocity band; one run of DMDP-Lag (PPO) is tilted.
At $\eta=0.2$, each unconstrained baseline has one flipped run, whereas the DMDP methods and the constrained baselines have none.

In \namedref{Figure}{fig:stable_laws}, the unconstrained baselines run mostly above $v_{\mathsf{lim}}$, the distributions of the constrained baselines peak at the upper edge of the velocity band, just below $v_{\mathsf{lim}}$, and those of the DMDP methods lie mostly within and spread over the band, with a small tail above $v_{\mathsf{lim}}$ for DMDP (PPO).

\begin{figure}[!h]
    \centering
    \includegraphics[width=\linewidth]{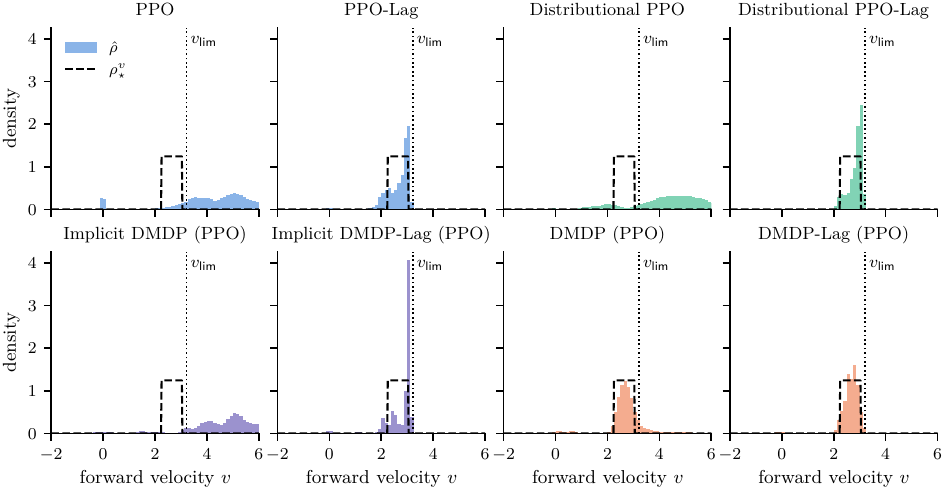}
    \caption{Stable velocity distributions with PPO.}
    \label{fig:stable_laws}
\end{figure}

In \namedref{Figure}{fig:training_curves}, the returns of the DMDP methods rise above $R_{\mathsf{Thr}}$ within the first $10^{6}$ training steps and remain there, whereas those of the baselines stay far below it.
In the second half of training, the costs of the constrained baselines and DMDP-Lag (PPO) stay near $\bar{c}$, those of the unconstrained baselines far above it, and that of DMDP (PPO) in between, well above $\bar{c}$; in this period, the median window dissimilarities of the DMDP methods lie lowest, below $\mathbb{D}_{\mathsf{thr}}$, closely followed by that of Distributional PPO-Lag, whereas those of the unconstrained baselines stay more than an order of magnitude above it.

\begin{figure}[!h]
    \centering
    \includegraphics[width=\linewidth]{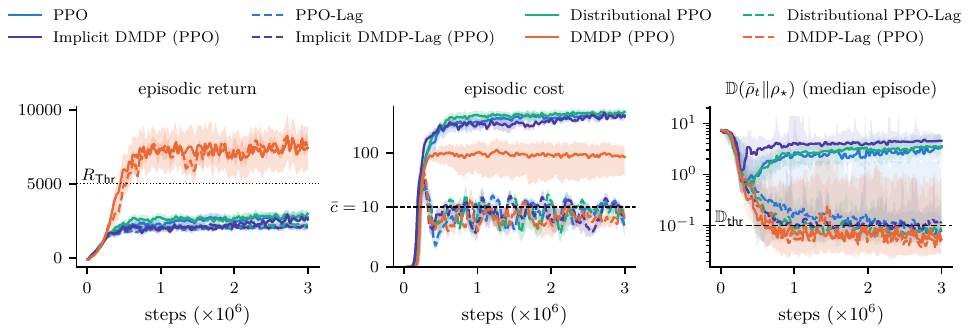}
    \caption{Training curves with PPO.}
    \label{fig:training_curves}
\end{figure}

In \namedref{Figure}{fig:stability_curves}, the stability distance of every method drops by about an order of magnitude within $100$ steps without noise; that of Distributional PPO then keeps decreasing over several hundred steps, whereas the others change much less.
In \namedref{Table}{tab:main_results}, Distributional PPO-Lag and Implicit DMDP-Lag (PPO) have the smallest stability distances for $k\geq500$ at both noise levels, within the seed variation of each other, and DMDP (PPO) has a clearly smaller one than the unconstrained baselines.

\begin{figure}[!h]
    \centering
    \includegraphics[width=\linewidth]{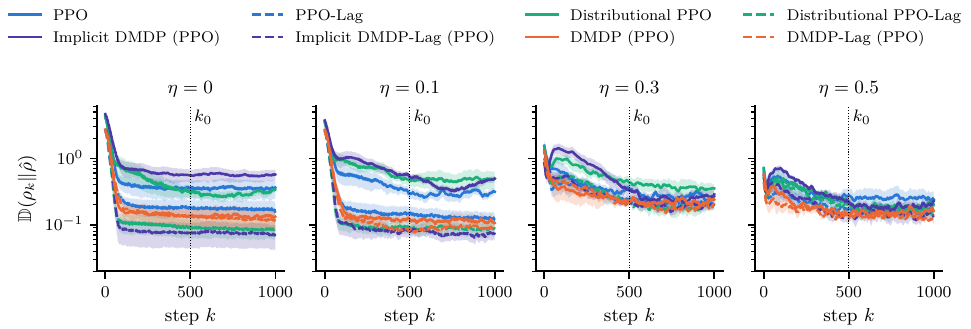}
    \caption{Stability curves with PPO.}
    \label{fig:stability_curves}
\end{figure}

In \namedref{Table}{tab:reliability}, DMDP-Lag (PPO) reaches $R_{\mathsf{Thr}}$ most often of all methods at the end of training and in the evaluation at both noise levels, within the seed variation of DMDP (PPO), and both reach it clearly more often than the baselines.
In each case, a DMDP method has the widest $10$--$90\%$ range of $R$.

\begin{table}[!h]
    \centering
    \caption{Reliability of the return with PPO.}
    \label{tab:reliability}
    \footnotesize\setlength{\tabcolsep}{3pt}
    \begin{tabular*}{\linewidth}{@{\extracolsep{\fill}}lcccccc@{}}
\toprule
 & \multicolumn{2}{c}{end of training} & \multicolumn{2}{c}{evaluation, $\eta=0$} & \multicolumn{2}{c}{evaluation, $\eta=0.2$} \\
\cmidrule(lr){2-3}\cmidrule(lr){4-5}\cmidrule(lr){6-7}
Method & $\mathbb{P}[R\geq R_{\mathsf{Thr}}]$ $\uparrow$ & 10--90\% range $\downarrow$ & $\mathbb{P}[R\geq R_{\mathsf{Thr}}]$ $\uparrow$ & 10--90\% range $\downarrow$ & $\mathbb{P}[R\geq R_{\mathsf{Thr}}]$ $\uparrow$ & 10--90\% range $\downarrow$ \\
\midrule
PPO & $0.00\pm0.00$ & $3653$ & $0.00\pm0.00$ & $2150$ & $0.00\pm0.00$ & $2828$ \\
PPO-Lag & $0.00\pm0.00$ & $\bm{1402}$ & $0.00\pm0.00$ & $592$ & $0.00\pm0.00$ & $\bm{1129}$ \\
Distributional PPO & $0.00\pm0.00$ & $3846$ & $0.00\pm0.00$ & $2437$ & $0.00\pm0.00$ & $3249$ \\
Distributional PPO-Lag & $0.00\pm0.00$ & $1535$ & $0.00\pm0.00$ & $\bm{383}$ & $0.00\pm0.00$ & $1496$ \\
Implicit DMDP (PPO) & $0.00\pm0.00$ & $3795$ & $0.02\pm0.02$ & $753$ & $0.00\pm0.00$ & $3140$ \\
Implicit DMDP-Lag (PPO) & $0.00\pm0.00$ & $2182$ & $0.00\pm0.00$ & $437$ & $0.00\pm0.00$ & $2091$ \\
DMDP (PPO) & $0.63\pm0.17$ & $10410$ & $0.78\pm0.20$ & $9003$ & $0.55\pm0.21$ & $9032$ \\
DMDP-Lag (PPO) & $\bm{0.68\pm0.05}$ & $10691$ & $\bm{0.97\pm0.02}$ & $691$ & $\bm{0.66\pm0.14}$ & $6713$ \\
\bottomrule
\end{tabular*}

\end{table}

In \namedref{Table}{tab:shift}, sensor noise shifts the stable distributions of all methods, the more so the larger $\eta$.
At every noise level, PPO-Lag shifts the least, and DMDP (PPO) is within its seed variation; the two DMDP methods shift within the seed variation of each other.

\begin{table}[!h]
    \centering
    \caption{Shift of the stable distribution under sensor noise with PPO.}
    \label{tab:shift}
    \small
    \begin{tabular}{lccccc}
\toprule
 & \multicolumn{5}{c}{$\mathbb{D}(\hat{\rho}_\eta\|\hat{\rho}_0)$ $\downarrow$} \\
\cmidrule(lr){2-6}
Method & $\eta=0.1$ & $\eta=0.2$ & $\eta=0.3$ & $\eta=0.4$ & $\eta=0.5$ \\
\midrule
PPO & $0.58\pm0.12$ & $1.85\pm0.20$ & $2.71\pm0.21$ & $3.23\pm0.29$ & $3.56\pm0.37$ \\
PPO-Lag & $\bm{0.09\pm0.01}$ & $\bm{0.53\pm0.10}$ & $\bm{1.32\pm0.19}$ & $\bm{1.82\pm0.20}$ & $\bm{2.05\pm0.14}$ \\
Distributional PPO & $0.95\pm0.27$ & $2.15\pm0.59$ & $2.94\pm0.52$ & $3.46\pm0.48$ & $3.70\pm0.47$ \\
Distributional PPO-Lag & $0.12\pm0.06$ & $0.57\pm0.18$ & $1.42\pm0.18$ & $2.11\pm0.11$ & $2.34\pm0.08$ \\
Implicit DMDP (PPO) & $0.94\pm0.32$ & $2.96\pm0.06$ & $3.82\pm0.18$ & $3.96\pm0.20$ & $4.12\pm0.17$ \\
Implicit DMDP-Lag (PPO) & $0.12\pm0.02$ & $0.78\pm0.24$ & $1.51\pm0.25$ & $2.12\pm0.10$ & $2.27\pm0.11$ \\
DMDP (PPO) & $0.12\pm0.04$ & $0.60\pm0.10$ & $1.46\pm0.19$ & $1.98\pm0.13$ & $2.18\pm0.08$ \\
DMDP-Lag (PPO) & $0.11\pm0.02$ & $0.64\pm0.06$ & $1.64\pm0.15$ & $2.03\pm0.09$ & $2.22\pm0.07$ \\
\bottomrule
\end{tabular}

\end{table}

\FloatBarrier

\end{document}